\newtheorem{condition}{Condition}
\newtheorem{definition}{Definition}
\newtheorem{theorem}{Theorem}
\newtheorem{corollary}{Corollary}[theorem]
\newtheorem{lemma}[theorem]{Lemma}
\newtheorem{proposition}[theorem]{Proposition}
\newcommand{\ve}[1]{\boldsymbol{#1}}
\newcommand{\Lagr}{\mathcal{L}}
\DeclareMathOperator*{\E}{\mathbb{E}}
\newcommand{\norm}[1]{\left\lVert#1\right\rVert}
\DeclareMathOperator{\Tr}{Tr}
\DeclareMathOperator*{\argmin}{arg\,min}
\newcommand{\myequation}{\begin{equation}}
\newcommand{\myendequation}{\end{equation}}
\title{A Theoretical Framework for Target Propagation}
\author{%
  \textbf{Alexander Meulemans\textsuperscript{1}, Francesco S. Carzaniga\textsuperscript{1}, Johan A.K. Suykens\textsuperscript{2},}\\
  \textbf{João Sacramento\textsuperscript{1}, Benjamin F. Grewe\textsuperscript{1}}\\
  \\
  \textsuperscript{1}Institute of Neuroinformatics, University of Zürich and ETH Zürich\\
  \textsuperscript{2}ESAT-STADIUS, KU Leuven\\
  \texttt{ameulema@ethz.ch}  
}
\begin{document}

\doparttoc 
\faketableofcontents 


\maketitle

\begin{abstract}
The success of deep learning, a brain-inspired form of AI, has sparked interest in understanding how the brain could similarly learn across multiple layers of neurons. However, the majority of biologically-plausible learning algorithms have not yet reached the performance of backpropagation (BP), nor are they built on strong theoretical foundations. Here, we analyze target propagation (TP), a popular but not yet fully understood alternative to BP, from the standpoint of mathematical optimization. Our theory shows that TP is closely related to Gauss-Newton optimization and thus substantially differs from BP. Furthermore, our analysis reveals a fundamental limitation of difference target propagation (DTP), a well-known variant of TP, in the realistic scenario of non-invertible neural networks. We provide a first solution to this problem through a novel reconstruction loss that improves feedback weight training, while simultaneously introducing architectural flexibility by allowing for direct feedback connections from the output to each hidden layer. Our theory is corroborated by experimental results that show significant improvements in performance and in the alignment of forward weight updates with loss gradients, compared to DTP. 

\end{abstract}

\section{Introduction}

Backpropagation (BP) \citep{rumelhart1986learning,linnainmaa1970representation,werbos1982applications} has emerged as the gold standard for training deep neural networks \citep{lecun2015deep} but long-standing criticism on whether it can be used to explain learning in the brain across multiple layers of neurons has prevailed \citep{crick1989recent}. First, BP requires exact weight symmetry of forward and backward pathways, also known as the \textit{weight transport problem}, which is not compatible with the current evidence from experimental neuroscience studies \citep{grossberg1987competitive}. Second, it requires the transmission of signed error signals \citep{lillicrap2020backpropagation}. This raises the question whether (i) weight transport and (ii) signed error transmission are necessary for training multilayered neural networks.

Recent work \citep{lillicrap2016random, nokland2016direct} showed that random feedback connections are sufficient to propagate errors and that feedback does not need to adhere to the layer-wise structure of the forward pathway, thereby indicating that weight transport is not strictly necessary for training multilayered neural networks. However, follow-up work  \citep{bartunov2018assessing,launay2019principled,moskovitz2018feedback,crafton2019direct} indicated that random feedback weights are not sufficient for more complex problems and require adjustments to better approximate the symmetric layer-wise connectivity of BP  \citep{akrout2019deep,kunin2020two,liao2016important,xiao2018biologically,Guerguiev2020Spike-based}, although encouraging recent results suggest that the symmetric connectivity constraint from BP might be surmountable \citep{lansdell2019learning}. 

\textit{Target propagation} (TP) represents a fundamentally different stream of research into alternatives for BP, as it propagates target activations (not errors) to the hidden layers of the network and then updates the weights of each layer to move closer to the target activation \citep{bengio2014auto, lee2015difference, bengio2015towards, ororbia2019biologically, manchev2020target, yann1986learning}. TP thereby alleviates the two main criticisms on the biological plausibility of BP. Another complementary line of research investigates how learning rules could be implemented in biological micro-circuits \citep{sacramento2018dendritic, guerguiev2017towards, lillicrap2020backpropagation} and relies on the core principles of TP. While TP as presented in \citet{bengio2014auto} and \citet{lee2015difference} is appealing for bridging the gap between deep learning and neuroscience, its optimization properties are not yet fully understood, neither does it scale to more complex problems \citep{bartunov2018assessing}.

\begin{wrapfigure}{r}{0.20\linewidth}
  \begin{center}
    \vspace*{-0.6cm}
    \includegraphics[width=\linewidth]{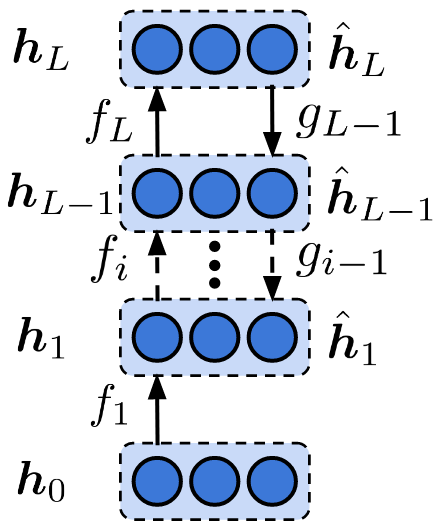}
  \end{center}
  \caption{Schematic illustration of the TP framework.}
    \label{fig:tp_schematic}
\end{wrapfigure}
Here, we present a novel theoretical framework for TP and its well-known variant \textit{difference target propagation} (DTP; \citet{lee2015difference}) that describes its optimization characteristics and limitations. More specifically, we (i) identify TP in the setting of invertible networks as a hybrid method between approximate Gauss-Newton optimization and gradient descent. (ii) We show that, consistent with its limited performance on challenging problems, DTP suffers from inefficient parameter updates for non-invertible networks. (iii) To overcome this limitation, we propose a novel reconstruction loss for DTP, that restores the hybrid between Gauss-Newton optimization and gradient descent and (iv) we provide theoretical insights into the optimization characteristics of this hybrid method. (v) We introduce a DTP variant with direct feedback connections from the output to the hidden layers. (vi) Finally, we provide experimental results showing that our new DTP variants improve the ability to propagate useful feedback signals and thus learning performance.


\section{Background and Notation}
We briefly review TP \citep{lee2015difference, bengio2014auto} and Gauss-Newton optimization (GN). See Fig. \ref{fig:tp_schematic} for a schematic of TP and the supplementary materials (SM) for more information on GN.

\textbf{Target propagation.} We consider a feedforward fully connected network with forward mappings:
\begin{equation} \label{eq:f_parameterization}
    \ve{h}_i = f_i(\ve{h}_{i-1}) = s_i(W_i\ve{h}_{i-1}) = s_i(\ve{a}_{i}), \quad i = 1,...,L,
\end{equation}
with $\ve{h}_i$ the vector with post-activation values of layer $i$, $\ve{a}_i$ the pre-activation values, $s_i$ a smooth nonlinear activation function, $W_i$ the layer weights, $f_i$ a shorthand notation and $\ve{h}_0$ the network input. Based on the output $\ve{h}_L$ of the network and the label $\ve{l}$ of the training sample, a loss $\mathcal{L}(\ve{l},\ve{h}_L)$ is computed. While BP backpropagates the gradients of this loss function, TP computes an output target and backpropagates this target. The output target $\hat{\ve{h}}_L$ is defined as the output activation tweaked in the negative gradient direction:
\begin{align} \label{eq:output_target}
    \hat{\ve{h}}_L = \ve{h}_L - \hat{\eta}\ve{e}_L \triangleq \ve{h}_L - \hat{\eta}\Big(\frac{\partial \Lagr}{\partial \ve{h}_L}\Big)^T,
\end{align}
with $\hat{\eta}$ the output target stepsize. Note that with $\hat{\eta}=0.5$ and an $L2$ loss, we have $\hat{\ve{h}}_L = \ve{l}$. $\hat{\ve{h}}_L$ is backpropagated to produce hidden layer targets $\hat{\ve{h}}_i$:
\begin{equation} \label{eq:g_parameterization}
    \hat{\ve{h}}_i = g_i(\hat{\ve{h}}_{i+1}) = t_i(Q_i\hat{\ve{h}}_{i+1}), \quad i = L-1,...,1.
\end{equation}
with $g_i$ an approximate inverse of $f_{i+1}$, $Q_i$ the feedback weights and $t_i$ a smooth nonlinear activation function. One can also choose other parameterizations for $g_i$. Based on $\hat{\ve{h}}_i$, local layer losses $\mathcal{L}_i(\hat{\ve{h}}_i,\ve{h}_i) = \|\hat{\ve{h}}_i - \ve{h}_i\|_2^2$ are defined. The forward weights $W_i$ are then updated by taking a gradient descent step on this local loss, assuming that $\hat{\ve{h}}_i$ stays constant.
Finally, to train the feedback parameters $Q_i$, $f_{i+1}$ and $g_i$ are seen as a shallow auto-encoder pair. $Q_i$ can then be trained by a gradient step on a reconstruction loss:
\begin{align} \label{eq:rec_loss}
    \mathcal{L}_i^{\text{rec}}\Big(g_i\big(f_{i+1}(\ve{h}_{i})\big),\ve{h}_{i}\Big) &= \left \|  g_i\big(f_{i+1}(\ve{h}_{i})\big) - \ve{h}_{i}\right \|_2^2 .
\end{align}
\citet{lee2015difference} argue for injecting additive noise in $\ve{h}_{i}$ for the reconstruction loss, such that the backward mapping is also learned in a region around the training points.


\textbf{Gauss-Newton optimization.} The Gauss-Newton (GN) algorithm \citep{gauss1809theoria} is an iterative approximate second-order optimization method that is used for non-linear least-squares regression problems. The GN update for the model parameters $\ve{\beta}$ is given by:
\begin{align} \label{eq:damped_gn_iteration}
    \Delta \ve{\beta} &= -\big(J^TJ + \lambda I\big)^{-1}J^{T}\ve{e}_L = -J^{T}\big(JJ^T + \lambda I\big)^{-1}\ve{e}_L,
\end{align}
with $J$ the Jacobian of the model outputs w.r.t $\ve{\beta}$, $\ve{e}_L$ the output errors (considering an $L_2$ loss) and $\lambda$ a Tikhonov damping constant \citep{tikhonov1943stability, levenberg1944method}. For $\lambda \xrightarrow{}0$, the update simplifies to $\Delta \ve{\beta} = -J^{\dagger}\ve{e}$, with $J^{\dagger}$ the Moore-Penrose pseudo inverse of $J$ \citep{moore1920reciprocal, penrose1955generalized}.

\section{Theoretical Results}
To understand how TP can optimize a loss function, we need to know what kind of update the propagated targets represent. Here, we lay down a theoretical framework for TP, showing that the targets represent an update computed with an approximate Gauss-Newton optimization method for invertible networks and we improve DTP to propagate these \textit{Gauss-Newton targets} in non-invertible networks. Furthermore, we show how these GN targets optimize a global loss function.

\subsection{TP for invertible networks computes GN targets}
Invertible networks represent the ideal case for TP, as TP relies on using approximate inverses to propagate targets through the network. We formalize invertible feed-forward networks as follows: 
\begin{condition}[Invertible networks]\label{condition:inv_networks}
The feed-forward neural network has forward mappings $\ve{h}_i = f_i(\ve{h}_{i-1}) = s_i(W_i\ve{h}_{i-1})$, $i=1, ..., L$ where $s_i$ can be any differentiable, monotonically increasing and invertible element-wise function with domain and image equal to $\mathbb{R}$ and where $W_i$ can be any invertible matrix. The feedback functions for propagating the targets are the inverse of the forward mapping functions: $g_i(\hat{\ve{h}}_{i+1}) = f_{i+1}^{-1}(\hat{\ve{h}}_{i+1}) = W_{i+1}^{-1}s_{i+1}^{-1}(\hat{\ve{h}}_{i+1})$.
\end{condition}

The target update  $\Delta \ve{h}_i \triangleq \hat{\ve{h}}_{i} - \ve{h}_i$ represents how the hidden layer activations should be changed to decrease the output loss and plays a similar role to the backpropagation error $\ve{e}_i \triangleq \partial \Lagr / \partial \ve{h}_i$. As shown by \cite{lee2015difference}, a first-order Taylor expansion of this target update reveals how the output error gets propagated to the hidden layers, which we restate in Lemma \ref{lemma:taylor_approx_teaching_signal} (full proof in SM).

\begin{lemma} \label{lemma:taylor_approx_teaching_signal}
Assuming Condition \ref{condition:inv_networks} holds and the output target stepsize $\hat{\eta}$ is small compared to $\|\ve{h}_L\|_2/\|\ve{e}_L\|_2$, the target update $\Delta \ve{h}_i \triangleq \hat{\ve{h}}_{i} - \ve{h}_i$ can be approximated by
\begin{align}
    \Delta \ve{h}_i \triangleq \hat{\ve{h}}_{i} - \ve{h}_i = - \hat{\eta}\Bigg[\prod_{k=i}^{L-1}J_{f_{k+1}}^{-1}\Bigg]\ve{e}_L  + \mathcal{O}(\hat{\eta}^2) = - \hat{\eta} J_{\bar{f}_{i, L}}^{-1}\ve{e}_L + \mathcal{O}(\hat{\eta}^2) , 
\end{align}
with $\ve{e}_L = (\partial \Lagr / \partial \ve{h}_L)^T$ evaluated at $\ve{h}_L$, $J_{f_{k+1}}= \partial f_{k+1}(\ve{h}_{k}) / \partial \ve{h}_{k}$ evaluated at $\ve{h}_{k}$ and $J_{\bar{f}_{i, L}}= \partial f_L(..(f_{i+1}(\ve{h}_i))) / \partial \ve{h}_{i}$ evaluated at $\ve{h}_{i}$.
\end{lemma}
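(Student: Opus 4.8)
The plan is to prove the claim by backward induction on the layer index $i$, propagating a first-order expansion one layer at a time from the output down. The base case $i=L$ is exact: by Eq.~\eqref{eq:output_target} we have $\Delta\ve{h}_L = \hat{\ve{h}}_L - \ve{h}_L = -\hat{\eta}\ve{e}_L$, which matches the claimed formula with an empty (identity) product and no remainder.

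For the inductive step I would assume $\Delta\ve{h}_{i+1} = -\hat{\eta}\big[\prod_{k=i+1}^{L-1}J_{f_{k+1}}^{-1}\big]\ve{e}_L + \mathcal{O}(\hat{\eta}^2)$, so that in particular $\norm{\Delta\ve{h}_{i+1}}_2 = \mathcal{O}(\hat{\eta})$. Under Condition~\ref{condition:inv_networks} the feedback map is exactly $g_i = f_{i+1}^{-1}$, hence $\hat{\ve{h}}_i = f_{i+1}^{-1}(\ve{h}_{i+1}+\Delta\ve{h}_{i+1})$ while $\ve{h}_i = f_{i+1}^{-1}(\ve{h}_{i+1})$. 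Taylor-expanding $f_{i+1}^{-1}$ to first order about $\ve{h}_{i+1}$ gives $\Delta\ve{h}_i = J_{f_{i+1}^{-1}}(\ve{h}_{i+1})\,\Delta\ve{h}_{i+1} + \mathcal{O}(\norm{\Delta\ve{h}_{i+1}}_2^2)$. Then I would invoke the inverse function theorem to write $J_{f_{i+1}^{-1}}(\ve{h}_{i+1}) = [J_{f_{i+1}}(\ve{h}_i)]^{-1} = J_{f_{i+1}}^{-1}$ (the Jacobians being invertible under Condition~\ref{condition:inv_networks}). Since $\norm{\Delta\ve{h}_{i+1}}_2^2 = \mathcal{O}(\hat{\eta}^2)$ and left-multiplication by the fixed, $\hat{\eta}$-independent matrix $J_{f_{i+1}}^{-1}$ keeps a term of order $\hat{\eta}^2$ of order $\hat{\eta}^2$, substituting the inductive hypothesis yields $\Delta\ve{h}_i = -\hat{\eta}\big[\prod_{k=i}^{L-1}J_{f_{k+1}}^{-1}\big]\ve{e}_L + \mathcal{O}(\hat{\eta}^2)$, which closes the induction.

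Finally I would rewrite the matrix product in closed form using the chain rule: $J_{\bar{f}_{i,L}} = J_{f_L}J_{f_{L-1}}\cdots J_{f_{i+1}}$, each factor evaluated at the corresponding point along the forward pass from $\ve{h}_i$, so that $J_{\bar{f}_{i,L}}^{-1} = J_{f_{i+1}}^{-1}\cdots J_{f_L}^{-1} = \prod_{k=i}^{L-1}J_{f_{k+1}}^{-1}$, which gives the second equality in the statement. The hard part will be making the $\mathcal{O}(\hat{\eta}^2)$ bookkeeping rigorous: at each layer the Taylor remainder is controlled only while the perturbation $\Delta\ve{h}_{i+1}$ stays inside the region where the smooth, $\hat{\eta}$-independent map $f_{i+1}^{-1}$ is well approximated by its linearization. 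The hypothesis that $\hat{\eta}$ is small relative to $\norm{\ve{h}_L}_2/\norm{\ve{e}_L}_2$ ensures that the seed perturbation $\norm{\Delta\ve{h}_L}_2 = \hat{\eta}\norm{\ve{e}_L}_2$ lies in this regime, and one then has to verify that pushing it back through the fixed chain of $L-i$ inverse maps amplifies it only by an $\hat{\eta}$-independent factor, so that the accumulated higher-order contributions are genuinely $\mathcal{O}(\hat{\eta}^2)$ with constants uniform in $\hat{\eta}$.
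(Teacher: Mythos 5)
Your proposal is correct and follows essentially the same route as the paper's proof: a layer-by-layer first-order Taylor expansion of the exact inverse feedback maps, the inverse function theorem to identify $J_{g_i}$ with $J_{f_{i+1}}^{-1}$, and the chain rule to collapse the product of inverses into $J_{\bar{f}_{i,L}}^{-1}$; the paper simply presents the recursion informally rather than as a formal backward induction.
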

If this target update is compared with the BP error $\ve{e}_i = \prod_{k=i}^{L-1}J_{f_{k+1}}^{T}\ve{e}_L$, we see that the transpose operation is replaced by an inverse operation. Gauss-Newton optimization (GN) uses a pseudo-inverse of the Jacobian of the output with respect to the parameters (eq. \ref{eq:damped_gn_iteration}), which hints towards a relation between TP and GN. The following theorem makes this relationship explicit (full proof in SM).

\begin{theorem}\label{theorem:4GN}
Consider an invertible network specified in Condition \ref{condition:inv_networks}. Further, assume a mini-batch size of 1 and an $L_2$ output loss function. Under these conditions and in the limit of $\hat{\eta} \xrightarrow{}0$,  TP uses Gauss-Newton optimization with a block-diagonal approximation of the Gauss-Newton curvature matrix, with block sizes equal to the layer sizes, to compute the local layer targets $\hat{\ve{h}}_i$.
\end{theorem}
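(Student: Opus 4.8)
The plan is to show that, for a mini-batch of size 1 with an $L_2$ output loss, the target update $\Delta\ve{h}_i$ from Lemma~\ref{lemma:taylor_approx_teaching_signal} coincides (in the limit $\hat\eta\to 0$) with the update that Gauss-Newton optimization would prescribe for the activations $\ve{h}_i$ when the curvature matrix is replaced by its block-diagonal approximation with blocks of size $\dim(\ve{h}_i)$. First I would set up the "local" Gauss-Newton problem: treat each hidden activation $\ve{h}_i$ as the parameter vector, so that the model output is $\ve{h}_L = \bar f_{i,L}(\ve{h}_i)$ and the relevant Jacobian is $J_{\bar f_{i,L}} = \partial f_L(\cdots f_{i+1}(\ve{h}_i)) / \partial \ve{h}_i$. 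Plugging $J = J_{\bar f_{i,L}}$ and $\ve{e}_L$ into the undamped GN update~(\ref{eq:damped_gn_iteration}) in its right-hand pseudo-inverse form gives $\Delta\ve{h}_i = - J_{\bar f_{i,L}}^{\dagger}\ve{e}_L$; and since Condition~\ref{condition:inv_networks} makes every $J_{f_{k+1}}$ (hence the product $J_{\bar f_{i,L}}$) a square invertible matrix, the pseudo-inverse is the genuine inverse, so $\Delta\ve{h}_i = -J_{\bar f_{i,L}}^{-1}\ve{e}_L$. Comparing with Lemma~\ref{lemma:taylor_approx_teaching_signal}, this is exactly $\lim_{\hat\eta\to 0}\Delta\ve{h}_i/\hat\eta$, so the propagated target is, up to the stepsize $\hat\eta$, the local GN update on $\ve{h}_i$.

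Next I would make precise the sense in which this is a \emph{block-diagonal approximation of the Gauss-Newton curvature matrix}. The global GN curvature for the stacked activation vector $(\ve{h}_1,\dots,\ve{h}_{L-1})$ is $J^TJ$ where $J$ has block structure coming from the chain rule; its exact diagonal blocks are $J_{\bar f_{i,L}}^T J_{\bar f_{i,L}}$. Keeping only these diagonal blocks and discarding the off-diagonal coupling between layers yields a block-diagonal curvature matrix whose inverse acts on each $\ve{h}_i$ independently as $(J_{\bar f_{i,L}}^T J_{\bar f_{i,L}})^{-1}$; the corresponding GN step on $\ve{h}_i$ is $-(J_{\bar f_{i,L}}^T J_{\bar f_{i,L}})^{-1} J_{\bar f_{i,L}}^T \ve{e}_L = -J_{\bar f_{i,L}}^{-1}\ve{e}_L$, again using invertibility. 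This matches the target update, which establishes the claim. I would also remark that the $L_2$-loss and mini-batch-size-1 assumptions are what make $\ve{e}_L$ play the role of the GN residual vector, and what make the output Jacobian coincide with $J_{\bar f_{i,L}}$ rather than an expectation over samples.

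The main obstacle is conceptual rather than computational: one has to state carefully \emph{which} GN problem is being solved — it is not GN on the forward weights $W_i$, but GN on the activations $\ve{h}_i$ viewed as free parameters of the sub-network from layer $i$ to the output — and then argue that propagating targets layer-by-layer through the inverses is equivalent to inverting the block-diagonal curvature rather than the full curvature. Getting the bookkeeping of Jacobians and transposes right, and being explicit that invertibility collapses $(J^TJ)^{-1}J^T$, $J^\dagger$, and $J^{-1}$ to the same operator, is the crux; once that correspondence is set up, the equality with Lemma~\ref{lemma:taylor_approx_teaching_signal} is immediate and the $\mathcal{O}(\hat\eta^2)$ terms vanish in the stated limit. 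A secondary subtlety worth addressing is that the targets are generated recursively ($\hat{\ve h}_i = g_i(\hat{\ve h}_{i+1})$) rather than directly from the output, so I would invoke the telescoping product $\prod_{k=i}^{L-1} J_{f_{k+1}}^{-1} = J_{\bar f_{i,L}}^{-1}$ already identified in Lemma~\ref{lemma:taylor_approx_teaching_signal} to bridge the recursive definition and the closed-form GN expression.
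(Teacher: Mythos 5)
Your proposal is correct and follows essentially the same route as the paper: the paper likewise sets up Gauss-Newton on the stacked activation vector, notes that the exact curvature has blocks $J_i^TJ_j$ with $J_i=\partial\ve{h}_L/\partial\ve{h}_i$, keeps only the diagonal blocks to obtain the decoupled updates $\Delta\ve{h}_i=-J_i^{\dagger}\ve{e}_L$, and then uses invertibility to collapse $J_i^{\dagger}$ to $J_i^{-1}$ and match Lemma~\ref{lemma:taylor_approx_teaching_signal} up to $\mathcal{O}(\hat{\eta}^2)$. No gaps.
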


Theorem \ref{theorem:4GN} thus shows that TP can be interpreted as a hybrid method between Gauss-Newton optimization and gradient descent. First, an approximation of GN is used to compute the hidden layer targets, after which gradient descent on the local losses $\Lagr_i = \|\hat{\ve{h}}_i - \ve{h}_i\|^2_2$ is used for updating the forward parameters. 

\subsection{DTP for non-invertible networks does not compute GN targets}
In general, deep networks are not invertible due to varying layer sizes and non-invertible activation functions. For non-invertible networks, $g_i$ is not the exact inverse of $f_{i+1}$ but tries to approximate it instead. This approximation causes reconstruction errors to interfere with the target updates $\Delta \ve{h}_i$, as can be seen in its first order Taylor approximation.
\begin{align}
    \Delta \ve{h}_i \triangleq \hat{\ve{h}}_{i} - \ve{h}_i = - J_{g_{i}} \Delta \ve{h}_{i+1} + g_i(\ve{h}_{i+1}) - \ve{h}_i  + \mathcal{O}(\|\Delta \ve{h}_{i+1}\|_2^2)
\end{align}
The second and third term in the right-hand side represent the reconstruction error, as this error would be zero if $g_i$ is the perfect inverse of $f_{i+1}$. Due to the interference of these reconstruction errors with the target updates, vanilla TP fails to propagate useful learning signals backwards for non-invertible networks. The \textit{Difference Target Propagation} (DTP) method \citep{lee2015difference} solves this issue by subtracting the reconstruction error from the propagated targets: $\hat{\ve{h}}_i = g_i(\hat{\ve{h}}_{i+1}) - \big( g_i(\ve{h}_{i+1}) - \ve{h}_i \big)$, known as the \textit{difference correction}. Similar to Lemma \ref{lemma:taylor_approx_teaching_signal}, we obtain an approximation of the DTP target updates.

\begin{lemma} \label{lemma:taylor_approx_teaching_signal_DTP}
Assuming the output target step size $\hat{\eta}$ is small compared to $\|\ve{h}_L\|_2/\|\ve{e}_L\|_2$, the target update $\Delta \ve{h}_i \triangleq \hat{\ve{h}}_{i} - \ve{h}_i$ of the DTP method can be approximated by
\begin{align}
    \Delta \ve{h}_i \triangleq \hat{\ve{h}}_{i} - \ve{h}_i = - \hat{\eta}\Bigg[\prod_{k=i}^{L-1}J_{g_{k}}\Bigg]\ve{e}_L  + \mathcal{O}(\hat{\eta}^2),
\end{align}
with $J_{g_{k}}= \partial g_k(\ve{h}_{k+1}) / \partial \ve{h}_{k+1}$ evaluated at $\ve{h}_{k+1}$.
\end{lemma}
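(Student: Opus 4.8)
The plan is to exploit the difference correction, which makes the recursion for the DTP target update far simpler than the one for vanilla TP. Writing the DTP target as $\hat{\ve{h}}_i = g_i(\hat{\ve{h}}_{i+1}) - \big(g_i(\ve{h}_{i+1}) - \ve{h}_i\big)$ and subtracting $\ve{h}_i$, the reconstruction term cancels \emph{exactly}, leaving the identity $\Delta \ve{h}_i = g_i(\hat{\ve{h}}_{i+1}) - g_i(\ve{h}_{i+1}) = g_i(\ve{h}_{i+1} + \Delta \ve{h}_{i+1}) - g_i(\ve{h}_{i+1})$. A first-order Taylor expansion of $g_i$ about $\ve{h}_{i+1}$ then gives the one-step relation $\Delta \ve{h}_i = J_{g_i}\,\Delta \ve{h}_{i+1} + \mathcal{O}(\|\Delta \ve{h}_{i+1}\|_2^2)$, with $J_{g_i} = \partial g_i(\ve{h}_{i+1})/\partial \ve{h}_{i+1}$ evaluated at $\ve{h}_{i+1}$. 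Everything else is a downward induction on the layer index.

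For the base case, eq.~\eqref{eq:output_target} gives $\Delta \ve{h}_L = \hat{\ve{h}}_L - \ve{h}_L = -\hat{\eta}\,\ve{e}_L$, so $\|\Delta \ve{h}_L\|_2 = \hat{\eta}\|\ve{e}_L\|_2 = \mathcal{O}(\hat{\eta})$. Assume inductively that $\Delta \ve{h}_{i+1} = -\hat{\eta}\big[\prod_{k=i+1}^{L-1} J_{g_k}\big]\ve{e}_L + \mathcal{O}(\hat{\eta}^2)$, hence in particular $\|\Delta \ve{h}_{i+1}\|_2 = \mathcal{O}(\hat{\eta})$. Substituting into the one-step relation, using that $J_{g_i}$ is a fixed matrix with finite operator norm (independent of $\hat{\eta}$) and that $\|\Delta \ve{h}_{i+1}\|_2^2 = \mathcal{O}(\hat{\eta}^2)$, yields $\Delta \ve{h}_i = -\hat{\eta}\big[\prod_{k=i}^{L-1} J_{g_k}\big]\ve{e}_L + \mathcal{O}(\hat{\eta}^2)$, closing the induction and proving the claim.

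The point that needs care is the remainder bookkeeping: one must verify that each quadratic Taylor remainder, after being multiplied by the feedback Jacobians encountered higher up in the recursion, stays uniformly $\mathcal{O}(\hat{\eta}^2)$ rather than accumulating into something larger. This is exactly where the hypothesis that $\hat{\eta}$ is small relative to $\|\ve{h}_L\|_2/\|\ve{e}_L\|_2$ enters: it keeps $\hat{\ve{h}}_L$, and therefore every propagated target $\hat{\ve{h}}_i$, inside a fixed neighbourhood of $\ve{h}_i$ on which $g_i$ is twice continuously differentiable with a bounded Hessian, so that the constants hidden in the $\mathcal{O}(\cdot)$ are uniform in $\hat{\eta}$. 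Since only finitely many fixed Jacobians are composed and the errors enter additively at each step, the total remainder is a finite sum of $\mathcal{O}(\hat{\eta}^2)$ terms, hence itself $\mathcal{O}(\hat{\eta}^2)$. Note that, unlike Lemma~\ref{lemma:taylor_approx_teaching_signal}, this argument never uses invertibility of $f_i$ or $g_i$, which is why the statement does not invoke Condition~\ref{condition:inv_networks}.
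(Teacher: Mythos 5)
Your proposal is correct and follows essentially the same route as the paper's proof: the difference correction cancels the reconstruction term exactly, and a chain of first-order Taylor expansions of each $g_i$ about $\ve{h}_{i+1}$ — which you organize as a downward induction with explicit remainder bookkeeping — yields the product of feedback Jacobians with an $\mathcal{O}(\hat{\eta}^2)$ error. Your closing remark that no invertibility assumption is needed, in contrast to Lemma~\ref{lemma:taylor_approx_teaching_signal}, is also accurate.
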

For propagating GN targets, $\prod_{k=i}^{L-1}J_{g_{k}}$ needs to be equal to $J_{\bar{f}_{i, L}}^{\dagger}$, with $\bar{f}_{i, L}$ the forward mapping from layer $i$ to the output. However, two issues prevent this from happening in DTP. First, the layer-wise reconstruction loss (eq. \ref{eq:rec_loss}) for training the parameters of $g_i$, combined with the nonlinear parameterization of $g_i$ and $f_{i+1}$ (eq. \ref{eq:g_parameterization} and \ref{eq:f_parameterization}), does not ensure that $J_{g_i} = J_{f_{i+1}}^{\dagger}$ (see SM). Second, even if $J_{g_i} = J_{f_{i+1}}^{\dagger}$ for all layers, it would still in general not be the case that $\prod_{k=i}^{L-1}J_{g_{k}} = J_{\bar{f}_{i, L}}^{\dagger}$, as $J_{\bar{f}_{i, L}}^{\dagger}$ cannot be factorized as $\prod_{k=i+1}^{L}J_{f_k}^{\dagger}$ in general \citep{campbell2009generalized}. From these two issues, we see that DTP does not propagate real GN targets to its hidden layers by default and in section \ref{sec:convergence_properties} we show that this leads to inefficient parameter updates. However, by introducing a new reconstruction loss used for training the feedback functions $g_i$, we can ensure that the DTP method propagates approximate GN targets.

\subsection{Propagating Gauss-Newton targets in non-invertible networks} \label{section:prop_gn_targets}
Here, we present a novel \textit{difference reconstruction loss} (DRL) that trains the feedback parameters to propagate GN targets.

\textbf{Difference reconstruction loss.} First, we introduce shorthand notations for how targets get propagated through the network in DTP. The computation of $\hat{\ve{h}}_i$ based on the target in the next layer can be written as
\begin{align}
    \hat{\ve{h}}_{i} = g_i^{\text{\text{diff}}}(\hat{\ve{h}}_{i+1}, \ve{h}_{i+1},  \ve{h}_{i}) \triangleq g_i(\hat{\ve{h}}_{i+1}) + \big(\ve{h}_{i} - g_i\big(f_{i+1}(\ve{h}_{i})\big)\big).
\end{align}
The sequence of computations for $\hat{\ve{h}}_i$ based on the output target can be defined recursively as $\hat{\ve{h}}_{i} = \bar{g}_{L,i}^{\text{diff}}(\hat{\ve{h}}_{L}, \ve{h}_{L}, \ve{h}_{i}) \triangleq g_i^{\text{diff}}\big(\bar{g}_{L,i+1}^{\text{diff}}(\hat{\ve{h}}_{L}, \ve{h}_{L}, \ve{h}_{i+1}),\ve{h}_{i+1},  \ve{h}_{i}\big)$, with $\bar{g}_{L,L-1}^{\text{diff}} = g_{L-1}^{\text{diff}}$. Further, $g_i(\hat{\ve{h}}_{i+1})$ can be defined as a function of the output target as $g_i(\hat{\ve{h}}_{i+1}) = \bar{g}_{L,i}(\hat{\ve{h}}_{L}) \triangleq g_i\big(\bar{g}_{L,i+1}^{\text{diff}}(\hat{\ve{h}}_{L}, \ve{h}_{L}, \ve{h}_{i+1})\big)$. Finally, consider $\bar{f}_{i,L}$ as the forward mapping from layer $i$ to the output. With this shorthand notation, DRL can be defined compactly as
\begin{align}\label{eq:diff_rec_loss}
    \Lagr_i^{\text{rec,diff}} = \frac{1}{\sigma^2} \sum_{b=1}^B &\E_{\ve{\epsilon}_1 \sim \mathcal{N}(0,1)} \Big[ \|\bar{g}_{L,i}^{\text{diff}}\big(\bar{f}_{i,L}(\ve{h}^{(b)}_i + \sigma \ve{\epsilon}_1), \ve{h}^{(b)}_L, \ve{h}^{(b)}_i \big) - (\ve{h}^{(b)}_i +  \sigma \ve{\epsilon}_1)\|^2_2\Big] \nonumber\\
    + &\E_{\ve{\epsilon}_2 \sim \mathcal{N}(0,1)}\Big[\lambda\|\bar{g}_{L,i}^{\text{diff}}(\ve{h}^{(b)}_L + \sigma\ve{\epsilon}_2, \ve{h}_L^{(b)}, \ve{h}_i^{(b)}) - \ve{h}_i^{(b)} \|_2^2 \Big],
\end{align}
with $B$ the minibatch size and $\sigma$ the noise standard deviation. See Figure \ref{fig:DRL_schematic} for a schematic of DRL. The parameters of $g_i$ are updated by gradient descent on $\Lagr_i^{\text{rec,diff}}$. $\Lagr_i^{\text{rec,diff}}$ is also dependent on other feedback mapping functions $g_{j>i}$, however, their parameters are not updated with $\Lagr_i^{\text{rec,diff}}$, but with the corresponding $\Lagr_j^{\text{rec,diff}}$ instead.

DRL is based on three intuitions. First, we send a noise-corrupted sample $\ve{h}^{(b)}_i + \sigma \ve{\epsilon}$ in a reconstruction loop through the output layer, instead of only through the next layer. This is needed because $J_{\bar{f}_{i, L}}^{\dagger}$, which needs to be approximated for propagating GN targets, is not factorizable over the layers. Second, we use the same difference correction as in DTP, to ensure that the expectation can be taken over white noise, while $J_{\bar{f}_{i, L}}$ is still evaluated at the real training representations $\ve{h}_i^{(b)}$. Third, we introduce a regularization term that plays a similar role to Tikhonov damping in GN (see eq. \ref{eq:damped_gn_iteration}). DRL uses the same feedback path $\bar{g}_{L,i}^{\text{diff}}\big(\bar{f}_{i,L}(\ve{h}_i + \sigma \ve{\epsilon}_1), \ve{h}_L, \ve{h}_i \big)$ as for propagating the target signals $\bar{g}_{L,i}^{\text{diff}}\big(\hat{\ve{h}}_L, \ve{h}_L, \ve{h}_i \big)$ in the training phase for the forward weights, with a noise-corrupted sample $\bar{f}_{i,L}(\ve{h}_i+\sigma \ve{\epsilon}_1)$ instead of the output target $\hat{\ve{h}}_L$. In the following theorem, we show that our DRL trains the feedback mappings $g_i$ in a correct way for propagating approximate GN targets (full proof in SM).

\begin{theorem}\label{theorem:DRL_GN}
The difference reconstruction loss for layer $i$, with $\sigma$ driven in limit to zero, is equal to
\begin{align}
    \lim_{\sigma \xrightarrow{} 0} \Lagr_i^{\text{rec,diff}} = \sum_{b=1}^B \E_{\ve{\epsilon}_1 \sim \mathcal{N}(0,1)} \Big[\|J_{\bar{g}_{L,i}}^{(b)}J_{\bar{f}_{i,L}}^{(b)}\ve{\epsilon}_1 - \ve{\epsilon}_1 \|^2_2\Big]
    + \E_{\ve{\epsilon}_2 \sim \mathcal{N}(0,1)}\Big[\lambda\|J_{\bar{g}_{L,i}}^{(b)}\ve{\epsilon}_2\|_2^2 \Big],
\end{align}
with $J_{\bar{g}_{L,i}}^{(b)} = \prod_{k=i}^{L-1}J_{g_{k}}^{(b)}$ the Jacobian of feedback mapping function $\bar{g}_{L,i}$ evaluated at $\ve{h}_k^{(b)}$, $k=i+1, .., L$, and $J_{\bar{f}_{i,L}}^{(b)}$ the Jacobian of the forward mapping function $\bar{f}_{i,L}$ evaluated at $\ve{h}_i^{(b)}$. The minimum of $\lim_{\sigma \xrightarrow{} 0} \Lagr_i^{\text{rec,diff}}$ is reached if for each batch sample holds that
\begin{align}\label{eq:rec_loss_Tikhonov_damping}
    J_{\bar{g}_{L,i}}^{(b)} = \prod_{k=i}^{L-1}J_{g_{k}}^{(b)} =  J_{\bar{f}_{i,L}}^{(b)T}\big(J_{\bar{f}_{i,L}}^{(b)}J_{\bar{f}_{i,L}}^{(b)T} + \lambda I\big)^{-1}.
\end{align}
When the regularization parameter $\lambda$ is driven in limit to zero, this results in $J_{\bar{g}_{L,i}}^{(b)} = J_{\bar{f}_{i,L}}^{(b)\dagger}$.
\end{theorem}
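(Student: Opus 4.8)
The plan is to establish the three assertions in turn: the $\sigma\to 0$ limit of $\Lagr_i^{\text{rec,diff}}$, the minimizer of that limiting loss, and the $\lambda\to 0$ limit of the minimizer. For the first, I would start from the difference-correction identity $\bar{g}_{L,i}^{\text{diff}}(\ve{h}_L,\ve{h}_L,\ve{h}_i)=\ve{h}_i$, which holds for every sample and is proved by induction on the recursion defining $\bar{g}^{\text{diff}}$: the base case is $g_{L-1}^{\text{diff}}(\ve{h}_L,\ve{h}_L,\ve{h}_{L-1})=g_{L-1}(\ve{h}_L)+\ve{h}_{L-1}-g_{L-1}(f_L(\ve{h}_{L-1}))=\ve{h}_{L-1}$, and the step uses $\ve{h}_{i+1}=f_{i+1}(\ve{h}_i)$. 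Hence the $\sigma=0$ value of each summand vanishes. Then I Taylor-expand in $\sigma$: $\bar{f}_{i,L}(\ve{h}_i^{(b)}+\sigma\ve{\epsilon}_1)=\ve{h}_L^{(b)}+\sigma J_{\bar{f}_{i,L}}^{(b)}\ve{\epsilon}_1+\mathcal{O}(\sigma^2)$, and — the key structural point — since the difference correction $\ve{h}_i-g_i(f_{i+1}(\ve{h}_i))$ is a constant offset not depending on the target argument, a second induction along the same recursion shows that the Jacobian of $\bar{g}_{L,i}^{\text{diff}}$ with respect to its first argument, evaluated where that argument equals $\ve{h}_L^{(b)}$, is $\prod_{k=i}^{L-1}J_{g_k}^{(b)}=J_{\bar{g}_{L,i}}^{(b)}$, with every $J_{g_k}^{(b)}$ sitting at the clean activation $\ve{h}_{k+1}^{(b)}$. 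Composing, the argument of the first norm is $\sigma\big(J_{\bar{g}_{L,i}}^{(b)}J_{\bar{f}_{i,L}}^{(b)}\ve{\epsilon}_1-\ve{\epsilon}_1\big)+\mathcal{O}(\sigma^2)$ and that of the second is $\sigma J_{\bar{g}_{L,i}}^{(b)}\ve{\epsilon}_2+\mathcal{O}(\sigma^2)$, so each squared norm equals $\sigma^2\|\cdot\|_2^2+\mathcal{O}(\sigma^3)$; multiplying by $1/\sigma^2$ and sending $\sigma\to0$ yields the stated expression, once the limit is moved inside the expectations.

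For the second assertion, write $G=J_{\bar{g}_{L,i}}^{(b)}$ and $F=J_{\bar{f}_{i,L}}^{(b)}$. Using $\E[\ve{\epsilon}\ve{\epsilon}^T]=I$ and $\E\|M\ve{\epsilon}\|_2^2=\Tr(MM^T)=\|M\|_F^2$, each batch term of the limiting loss becomes $\|GF-I\|_F^2+\lambda\|G\|_F^2$, a strictly convex quadratic in $G$ when $\lambda>0$ because $FF^T+\lambda I$ is positive definite. Setting the matrix gradient $2(GF-I)F^T+2\lambda G$ to zero gives $G(FF^T+\lambda I)=F^T$, hence $G=F^T(FF^T+\lambda I)^{-1}$, which is exactly \eqref{eq:rec_loss_Tikhonov_damping}. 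Since $\lim_{\sigma\to0}\Lagr_i^{\text{rec,diff}}$ is a sum over $b$ of such terms, each depending on $J_{\bar{g}_{L,i}}^{(b)}$ alone (the forward Jacobian $J_{\bar{f}_{i,L}}^{(b)}$ being held fixed during feedback training), the loss is bounded below by the sum of the per-sample minima, with equality precisely when \eqref{eq:rec_loss_Tikhonov_damping} holds for every $b$; this gives the claimed sufficient condition for reaching the minimum.

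For the third assertion, take a singular value decomposition $F=U\Sigma V^T$; then $F^T(FF^T+\lambda I)^{-1}=V\,\Sigma^T(\Sigma\Sigma^T+\lambda I)^{-1}\,U^T$, whose diagonal entries are $\sigma_j/(\sigma_j^2+\lambda)$ on the nonzero singular values and $0$ elsewhere. As $\lambda\to0$ these tend to $1/\sigma_j$ and $0$ respectively, so the limit is $V\Sigma^\dagger U^T=F^\dagger$, i.e.\ $J_{\bar{g}_{L,i}}^{(b)}=J_{\bar{f}_{i,L}}^{(b)\dagger}$.

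The step I expect to be the main obstacle is the first one: carrying the two inductions cleanly through the nested difference corrections, and in particular seeing that the difference correction is exactly what pins every intermediate linearization point in the reconstruction loop to the clean activation $\ve{h}_k^{(b)}$ while still allowing isotropic noise perturbations — this is the whole design rationale behind DRL. A secondary technical point is justifying the interchange of $\lim_{\sigma\to0}$ with the Gaussian expectation, which I would handle by dominated convergence, dominating the Taylor remainder (after division by $\sigma^2$) by a polynomial in $\|\ve{\epsilon}\|_2$ with finite Gaussian moments and using smoothness of the forward and feedback maps. The remaining pieces — the ridge-regression minimization and the singular-value limit — are routine.
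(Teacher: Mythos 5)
Your proposal is correct and follows essentially the same route as the paper's proof: a first-order Taylor expansion in $\sigma$ exploiting the difference correction to pin the linearization points at the clean activations, reduction of each batch term to $\|J_{\bar{g}_{L,i}}^{(b)}J_{\bar{f}_{i,L}}^{(b)}-I\|_F^2+\lambda\|J_{\bar{g}_{L,i}}^{(b)}\|_F^2$ via $\E[\ve{\epsilon}\ve{\epsilon}^T]=I$, a matrix-gradient (ridge-regression) computation for the minimizer, and an SVD argument for the $\lambda\to0$ limit. The only differences are that you make explicit two points the paper treats tersely -- the induction showing $\bar{g}_{L,i}^{\text{diff}}(\hat{\ve{h}}_L,\ve{h}_L,\ve{h}_i)=\bar{g}_{L,i}(\hat{\ve{h}}_L)-\bar{g}_{L,i}(\ve{h}_L)+\ve{h}_i$, and the dominated-convergence justification for interchanging the limit with the Gaussian expectation -- which are welcome refinements rather than a different approach.
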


Theorem \ref{theorem:DRL_GN} shows that by minimizing the difference reconstruction loss for training the feedback mappings $g_i$, we get closer to propagating GN targets as $\hat{\ve{h}}_i$. In equation \eqref{eq:rec_loss_Tikhonov_damping}, we see that the regularization term introduces Tikhonov damping (see eq. \ref{eq:damped_gn_iteration}). This damping interpolates between the pseudo-inverse and the transpose of $J_{\bar{f}_{i,L}}$, so for large $\lambda$, GN targets resemble gradient targets. For practical reasons, we approximate the expectations with a single sample during training and replace the regularization term by weight decay on the feedback parameters, as this has a similar effect on restricting the magnitude of $\|J_{\bar{g}_{L,i}}^{(b)}\|_F^2$ (see SM). In practice, the absolute minimum of the difference reconstruction loss will not be reached, as $J_{\bar{g}_{L,i}}^{(b)}$, for different samples $b$, will depend on the same limited amount of parameters of $g_i$. Hence, a parameter setting will be sought that brings $J_{\bar{g}_{L,i}}^{(b)}$ as close as possible to $J_{\bar{f}_{i,L}}^{(b)\dagger}$ for all batch samples $b$, but they will in general not be equal for each $b$.


\textbf{Direct difference target propagation.} \label{section:ddtp}The theory behind DRL motivates direct connections from the output towards the hidden layers for propagating targets. The idea for widening the reconstruction loop from layer $i$ to the output layer arose from the fact that the pseudo-inverse of $J_{\bar{f}_{i,L}}$ cannot be factorized over layer-wise pseudo-inverses of $J_{f_{k>i}}$. As the training of feedback paths does not benefit from adhering to the layer-wise structure, we can push this idea further by introducing \textit{Direct Difference Target Propagation} (DDTP) as a new DTP variant. In DDTP, the network has direct feedback mapping functions $g_i(\hat{\ve{h}}_L)$ from the output to hidden layer $i$. Various parametrizations of $g_i$ are possible, as shown in Fig. \ref{fig:ddtp_schematic}. In the notation of the previous section, $\bar{g}_{L,i}^{\text{diff}}(\hat{\ve{h}}_{L}, \ve{h}_{L}, \ve{h}_{i}) = g_{i}^{\text{diff}}(\hat{\ve{h}}_{L}, \ve{h}_{L}, \ve{h}_{i})$ and $\bar{g}_{L,i}(\hat{\ve{h}}_{L}) = g_{i}(\hat{\ve{h}}_{L})$. With this notation, the difference reconstruction loss can be used out of the box to train the direct feedback mappings $g_i$. 


\subsection{Optimisation properties of Gauss-Newton targets} \label{sec:convergence_properties}
In the previous sections, we showed how DTP can train its feedback connections to propagate GN targets to the hidden layers. In this section, we investigate how the resulting hybrid method between Gauss-Newton and gradient descent is used to optimize the actual weight parameters of a neural network (all full proofs can be found in the SM). We consider the ideal case of perfect GN targets, called the \textit{Gauss-Newton Target method } (GNT), as formalised by the condition below.

\begin{condition}[Gauss-Newton Target method]\label{condition:gn_targets}
The network is trained by GN targets: each hidden layer target is computed by
\begin{align}
     \Delta \ve{h}_i^{(b)} \triangleq \hat{\ve{h}}_{i}^{(b)} - \ve{h}_i^{(b)} = - \hat{\eta} J_{\bar{f}_{i,L}}^{(b)\dagger}\ve{e}_L^{(b)},
\end{align}
after which the network parameters of each layer $i$ are updated by a gradient descent step on its corresponding local mini-batch loss $\Lagr_i = \sum_b \|\Delta \ve{h}_i^{(b)}\|^2_2$, while considering $\hat{\ve{h}}_i$ fixed.
\end{condition}

We begin by investigating deep linear networks trained with GNT. As shown in Theorem \ref{theorem:grad_outputspace}, the GNT method has a characteristic behaviour for linear contracting networks. (i) Its parameter updates push the output activation along the negative gradient direction in the output space and (ii) its parameter updates are \textit{minimum-norm} (i.e. the most efficient) in doing so.


\begin{theorem} \label{theorem:grad_outputspace}
Consider a contracting linear multilayer network ($n_1\geq n_2 \geq .. \geq n_L$) trained by GNT according to Condition \ref{condition:gn_targets}. For a mini-batch size of 1, the parameter updates $\Delta W_i$ are minimum-norm updates under the constraint that $\ve{h}_L^{(m+1)} = \ve{h}_L^{(m)} - c^{(m)}\ve{e}^{(m)}_L$ and when $\Delta W_i$ is considered independent from $\Delta W_{j\neq i}$, with $c^{(m)}$ a positive scalar and $m$ indicating the iteration.

\end{theorem}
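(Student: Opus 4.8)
The plan is to use the fact that for a linear network every first-order expansion in the preceding lemmas becomes exact, so the statement reduces to two explicit linear-algebra facts. Write $\bar{W}_{i,L} \triangleq W_L W_{L-1}\cdots W_{i+1}$, so that $\bar{f}_{i,L}(\ve h_i) = \bar{W}_{i,L}\ve h_i$ and $J_{\bar{f}_{i,L}} = \bar{W}_{i,L}$, an $n_L \times n_i$ matrix which I take to have full row rank $n_L$ (this is where the contracting hypothesis $n_1 \geq \cdots \geq n_L$ enters). Under Condition \ref{condition:gn_targets} with mini-batch size $1$, the local loss is $\Lagr_i = \norm{\hat{\ve h}_i - W_i\ve h_{i-1}}_2^2$ with $\hat{\ve h}_i$ frozen; since $W_i\ve h_{i-1} = \ve h_i$ at the current iterate, a gradient-descent step yields $\Delta W_i = \kappa\,\Delta\ve h_i\,\ve h_{i-1}^T = -\kappa\hat\eta\,\bar{W}_{i,L}^{\dagger}\ve e_L\,\ve h_{i-1}^T$ for some step-size constant $\kappa > 0$. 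I first check that this update produces the claimed output-space move, then that it is the minimum-norm update producing it.

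\textbf{Output-space move.} Because the network is linear and every $W_{j\neq i}$ is held fixed, $\ve h_{i-1}$ is unchanged, so $\ve h_L^{(m+1)} = \bar{W}_{i,L}(W_i + \Delta W_i)\ve h_{i-1} = \ve h_L^{(m)} + \bar{W}_{i,L}\Delta W_i\ve h_{i-1}$ exactly. Substituting $\Delta W_i$ and using $\ve h_{i-1}^T\ve h_{i-1} = \norm{\ve h_{i-1}}_2^2$ together with $\bar{W}_{i,L}\bar{W}_{i,L}^{\dagger} = I$ (full row rank) gives $\ve h_L^{(m+1)} = \ve h_L^{(m)} - c^{(m)}\ve e_L^{(m)}$ with $c^{(m)} = \kappa\hat\eta\norm{\ve h_{i-1}^{(m)}}_2^2 > 0$; since for an $L_2$ loss $\ve e_L = (\partial\Lagr/\partial\ve h_L)^T$, the output moves along its negative gradient, which is claim (i).

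\textbf{Minimum norm.} Fix the realized scalar $c^{(m)}$ and minimize $\norm{\Delta W_i}_F$ over all matrices obeying $\bar{W}_{i,L}\,\Delta W_i\,\ve h_{i-1} = -c^{(m)}\ve e_L^{(m)}$, treating the other layers' updates as independent (hence zero). The key step is a two-stage least-norm decomposition: for any feasible $\Delta W_i$, set $\ve u \triangleq \Delta W_i\ve h_{i-1}$ and write $\Delta W_i = \ve u\,\ve h_{i-1}^{\dagger} + N$ with $\ve h_{i-1}^{\dagger} = \ve h_{i-1}^T/\norm{\ve h_{i-1}}_2^2$ and $N\ve h_{i-1} = \ve 0$; then $\langle \ve u\,\ve h_{i-1}^{\dagger}, N\rangle_F = \norm{\ve h_{i-1}}_2^{-2}\,\Tr(\ve h_{i-1}\ve u^T N) = \norm{\ve h_{i-1}}_2^{-2}\,\ve u^T N\ve h_{i-1} = 0$, so $\norm{\Delta W_i}_F^2 = \norm{\ve u}_2^2/\norm{\ve h_{i-1}}_2^2 + \norm{N}_F^2$. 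The constraint restricts only $\ve u$, through $\bar{W}_{i,L}\ve u = -c^{(m)}\ve e_L^{(m)}$, so the optimum has $N = 0$ and $\ve u$ equal to the least-norm solution $\ve u^{\star} = -c^{(m)}\bar{W}_{i,L}^{\dagger}\ve e_L^{(m)}$ of that system, i.e. $\Delta W_i^{\star} = -c^{(m)}\bar{W}_{i,L}^{\dagger}\ve e_L^{(m)}\ve h_{i-1}^T/\norm{\ve h_{i-1}}_2^2$. Substituting $c^{(m)} = \kappa\hat\eta\norm{\ve h_{i-1}^{(m)}}_2^2$ shows this coincides with the GNT update, giving claim (ii).

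\textbf{Main obstacle.} Two points need care. The first is the identity $\bar{W}_{i,L}\bar{W}_{i,L}^{\dagger} = I$: it requires $\bar{W}_{i,L}$ to have full row rank $n_L$, which is precisely what the contracting assumption provides (generically); without it the output would move only along the orthogonal projection of $-\ve e_L$ onto the range of $\bar{W}_{i,L}$, and the exact constraint in the statement would not be attainable. The second is justifying the two-stage least-norm split — that minimizing over the whole matrix $\Delta W_i$ separates cleanly into choosing its action $\ve u$ on $\ve h_{i-1}$ optimally and zeroing the null-space component $N$ — which rests on the orthogonality computation above; once that is established, matching the multiplicative constants is bookkeeping.
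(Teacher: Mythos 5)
Your proof is correct and follows the same overall strategy as the paper's: first show that the GNT update moves the output exactly along $-\ve{e}_L$ (using $\bar{W}_{i,L}\bar{W}_{i,L}^{\dagger}=I$ for a full-row-rank contracting product), then pose the per-layer constrained problem $\min\norm{\Delta W_i}_F^2$ subject to $\bar{W}_{i,L}\Delta W_i\ve{h}_{i-1}=-c^{(m)}\ve{e}_L^{(m)}$ and check that the GNT update is a positive scalar multiple of its solution. The one genuine difference is in how the minimum-norm subproblem is solved: the paper writes the Lagrangian and derives $\Delta W_i^*=-\tfrac{c_i}{\norm{\ve{h}_{i-1}}_2^2}\hat{J}_i^{\dagger}\ve{e}_L\ve{h}_{i-1}^T$ from the KKT conditions, whereas you decompose $\Delta W_i=\ve{u}\,\ve{h}_{i-1}^{\dagger}+N$ with $N\ve{h}_{i-1}=\ve{0}$, verify Frobenius-orthogonality of the two pieces, and reduce the problem to the least-norm solution of $\bar{W}_{i,L}\ve{u}=-c^{(m)}\ve{e}_L^{(m)}$. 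Your route is more elementary and makes the mechanism explicit: the outer factor $\ve{h}_{i-1}^T/\norm{\ve{h}_{i-1}}_2^2$ kills the component of $\Delta W_i$ annihilating $\ve{h}_{i-1}$, and the pseudo-inverse kills the component of $\ve{u}$ in $\ker(\bar{W}_{i,L})$ — which is exactly the nullspace-component picture the paper invokes later for Corollary 5.1. The Lagrangian route generalizes more readily when the constraint set is not an affine subspace. One cosmetic difference: you compute the output-space move layer by layer (each $W_{j\neq i}$ frozen), while the paper sums the first-order contributions of all layers to get $c^{(m)}=\eta\sum_i\norm{\ve{h}_{i-1}^{(m)}}_2^2$; under the stated independence assumption both readings are legitimate, and both proofs share the same implicit genericity assumption that $\bar{W}_{i,L}$ (resp. $\hat{J}_i\hat{J}_i^T$) has full row rank, which you flag explicitly and the paper uses silently when inverting $\hat{J}_i\hat{J}_i^T$.
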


\begin{wrapfigure}{r}{0.35\linewidth}
  \begin{center}
  \vspace*{-0.7cm}
    \includegraphics[width=\linewidth]{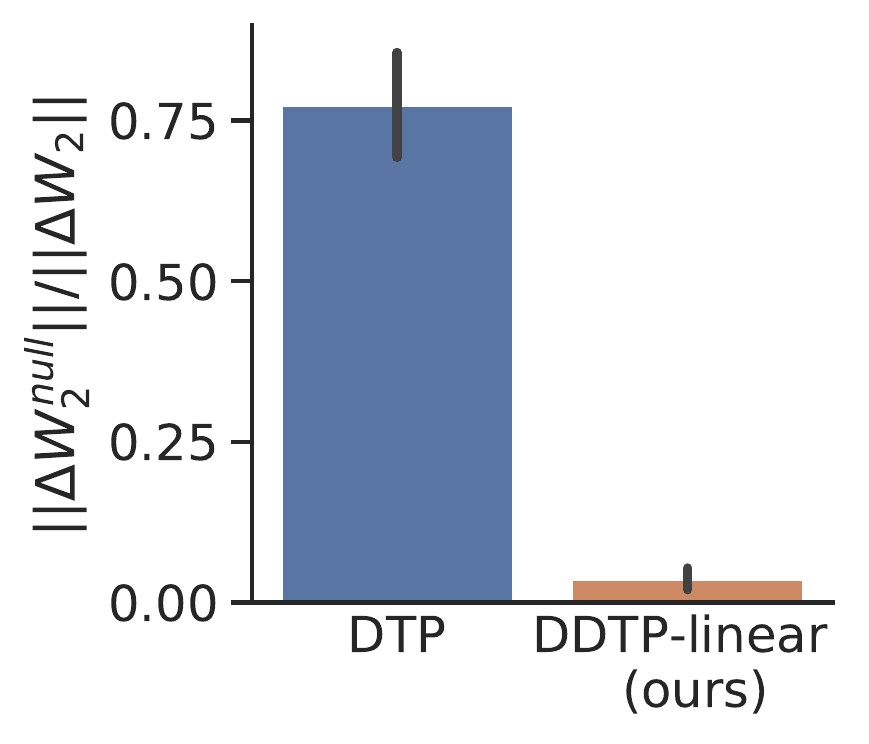}
  \end{center}
  \vspace*{-0.4cm}
  \caption{The average ratio between the norm of the components of $\Delta W_2$ that lie in the nullspace of $\partial \bar{f}_{0,L}(\ve{h}_0) / \partial \Delta W_2$ and the norm of $\Delta W_2$. The nullspace components cannot influence the output and are thus useless. A nonlinear 2-hidden-layer network was used on a synthetic regression dataset, trained with DTP (blue) and DDTP-linear (orange, see section \ref{section:experiments}). Error bars are the standard deviation.}
    \label{fig:barplot_nullspace}
\end{wrapfigure}
Two important corollaries follow from Theorem \ref{theorem:grad_outputspace}. First, we show that DTP on contracting linear networks also pushes the output activation along the negative gradient direction, however, its parameter updates are not minimum-norm, due to layer-wise training of the feedback weights. Hence, a substantial part of the DTP parameter updates does not have any effect on the network output, leading to inefficient parameter updates (see Fig. \ref{fig:barplot_nullspace}). This result helps to explain why DTP does not scale to more complex problems. Second, we show for linear networks, that GNT updates are aligned with the Gauss-Newton updates on the network parameters, for a minibatch size of 1. This follows from the minimum-norm properties of GN in an overparameterized setting (corollaries in SM). We stress that the alignment of the GNT updates with the GN parameter updates only holds for a minibatch size of 1. Averaging GNT parameter updates over a minibatch is not the same as computing the GN parameter updates on a minibatch (see SM). Usually, GN optimization for deep learning is done on large mini-batches \citep{botev2017practical, martens2015optimizing}. However, recent theoretical results prove convergence of GN in an overparameterized setting with small minibatches \citep{cai2019gram, zhang2019fast}, resembling GNT on linear networks. In the following theorem, we show that a linear network trained with GNT indeed converges to the global minimum.

\begin{theorem} \label{theorem:convergence_linear_network}
Consider a linear multilayer network of arbitrary architecture, trained with GNT according to Condition \ref{condition:gn_targets} and with arbitrary batch size. The resulting parameter updates lie within 90 degrees of the corresponding loss gradients. Moreover, for an infinitesimal small learning rate, the network converges to the global minimum.
\end{theorem}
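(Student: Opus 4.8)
The plan is to instantiate Condition~\ref{condition:gn_targets} for a linear network, write each layer's GNT update and its true loss gradient in closed form, prove the descent property by an elementary projector identity, and then pass to the gradient-flow limit, combining a monotonicity/LaSalle argument with the landscape of deep linear networks. Concretely, set $\ve{h}_i^{(b)} = W_i\cdots W_1\ve{h}_0^{(b)}$ and, for the $L_2$ loss, $\ve{e}_L^{(b)} = \ve{h}_L^{(b)} - \ve{l}^{(b)}$. In a linear network the Jacobians are constant, $J_{\bar f_{i,L}} = \bar W_i \triangleq W_L W_{L-1}\cdots W_{i+1}$ with $\bar W_L = I$, so the GNT target of Condition~\ref{condition:gn_targets} is $\Delta\ve{h}_i^{(b)} = -\hat\eta\,\bar W_i^{\dagger}\ve{e}_L^{(b)}$. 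Writing $E = [\ve{e}_L^{(1)}\,\cdots\,\ve{e}_L^{(B)}]$ and $H_{i} = [\ve{h}_{i}^{(1)}\,\cdots\,\ve{h}_{i}^{(B)}]$, a gradient-descent step on the local loss $\Lagr_i = \sum_b\norm{\hat{\ve{h}}_i^{(b)}-\ve{h}_i^{(b)}}_2^2$ (with $\hat{\ve{h}}_i$ held fixed) gives $\Delta W_i = 2\gamma\sum_b\Delta\ve{h}_i^{(b)}\ve{h}_{i-1}^{(b)T} = -2\gamma\hat\eta\,\bar W_i^{\dagger} E H_{i-1}^{T}$, whereas differentiating the global loss directly gives $\partial\Lagr/\partial W_i = \bar W_i^{T} E H_{i-1}^{T}$.

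For the first claim I would use $\bar W_i^{\dagger T}\bar W_i^{T} = (\bar W_i\bar W_i^{\dagger})^{T} = P_i$, the orthogonal projector onto $\mathrm{range}(\bar W_i)$ (symmetric and idempotent), to obtain
\[
  \big\langle \Delta W_i,\ -\partial\Lagr/\partial W_i\big\rangle = 2\gamma\hat\eta\,\Tr\big(H_{i-1}E^{T}P_iE H_{i-1}^{T}\big) = 2\gamma\hat\eta\,\norm{P_i E H_{i-1}^{T}}_F^2 \ge 0 ,
\]
so $\Delta W_i$ makes an angle of at most $90^\circ$ with $-\partial\Lagr/\partial W_i$ (equivalently $\langle\Delta W_i,\partial\Lagr/\partial W_i\rangle\le 0$), i.e.\ it is a descent direction for the global loss; moreover, since $\bar W_i^{\dagger} = \bar W_i^{\dagger}P_i$, the vanishing of $P_iEH_{i-1}^{T}$ is equivalent to $\Delta W_i = 0$, so the inner product is strictly positive whenever the update is nonzero. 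This holds for arbitrary architecture and batch size, with no approximation, because linear-network Jacobians are exact.

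For convergence I would send the learning rate to zero, obtaining the flow $\dot W_i = -2\hat\eta\,\bar W_i^{\dagger} E H_{i-1}^{T}$, along which $\dot\Lagr = \sum_i\langle\partial\Lagr/\partial W_i,\dot W_i\rangle = -2\hat\eta\sum_i\norm{P_iEH_{i-1}^{T}}_F^2 \le 0$ (the $i=L$ term being exactly $-2\hat\eta\norm{\partial\Lagr/\partial W_L}_F^2$, since $\bar W_L=I$ makes the last layer's update ordinary gradient descent). As $\Lagr$ is bounded below it converges; and, modulo a separate argument that the trajectory stays bounded, the LaSalle invariance principle gives that it approaches the equilibrium set $\{P_iEH_{i-1}^{T}=0\ \forall i\}$. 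Since $\bar W_i^{T}$ is injective on $\mathrm{range}(\bar W_i)$ and $\bar W_i^{T}P_i = \bar W_i^{T}$, this equilibrium set coincides \emph{exactly} with the critical-point set $\{\partial\Lagr/\partial W_i = 0\ \forall i\}$ of the parametrized deep-linear loss --- no rank hypotheses are needed up to this point.

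These critical points are, by the classical landscape of deep linear networks, all either global minima or saddles, so the crux --- and where the genuine work lies --- is to show the GNT flow does not converge to a saddle. I would argue this by (i) checking that the obviously bad configurations are not GNT equilibria --- e.g.\ if $W_1 = 0$ then $H_j = 0$ for all $j\ge 1$ but $\dot W_1\propto\bar W_1^{\dagger}EH_0^{T}\neq 0$ in general, so the flow escapes --- and (ii) controlling the ranks of the intermediate weight products $W_{j}\cdots W_{1}$ along the trajectory, so that near any limit point the loss behaves like the well-studied low-rank least-squares landscape, on which a descent flow avoids the saddles from non-degenerate initializations, together with a standard analyticity argument to pin the limit to a single equilibrium. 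Making this analysis of the GNT flow near the (possibly degenerate) saddles airtight, rather than any of the preceding calculations, is the main obstacle.
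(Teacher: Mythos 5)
Your descent-direction argument and your identification of the equilibrium set are exactly the paper's proof, just written with the projector made explicit: the paper computes $\langle \Delta W_i^{BP}, \Delta W_i^{GNT}\rangle_F = \Tr\big(A_i^T (J_{L-1}\cdots J_{i+1})(J_{L-1}\cdots J_{i+1})^{\dagger} A_i\big)$ with $A_i=\sum_b \ve{e}_L^{(b)}\ve{h}_{i}^{(b)T}$, which is your $\norm{P_i E H_{i-1}^T}_F^2$, and it establishes that the GNT and BP updates vanish simultaneously via $\ker(M^T)=\ker(M^{\dagger})$, which is your injectivity-on-the-range observation. So up to and including the statement that the GNT flow monotonically decreases $\Lagr$ and has the same equilibrium set as gradient flow, you have reproduced the paper's argument.

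Where you diverge is the last step, and here you have put your finger on a real weakness rather than created one. The paper closes the proof in one sentence: ``the equilibrium is unique for backpropagation (convex objective) and it is the same as GNT, [so] the two procedures will both converge to the global minimum.'' As you note, the deep-linear loss is convex in the end-to-end map $W_L\cdots W_1$ but \emph{not} in the individual weights, and its critical-point set contains saddles (e.g.\ rank-deficient configurations such as $W_1=0$), so the paper's uniqueness claim is not literally correct and its convergence conclusion is not fully justified. Your proposal is therefore more careful, but it is also incomplete: the saddle-avoidance and boundedness arguments you defer are genuinely nontrivial (descent flows \emph{can} converge to saddles from measure-zero initializations, and here the flow is not a gradient flow of any potential, as the paper's own Proposition on non-conservative dynamics shows, so off-the-shelf ``gradient descent avoids strict saddles'' results do not apply directly). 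In short: your write-up matches the paper everywhere the paper is rigorous, and the gap you flag at the end is a gap in the paper's proof as well, not a defect introduced by your approach.
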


For nonlinear networks, the minimum-norm interpretation of $\Delta W_i$ does not hold exactly. However, the target updates $\Delta \ve{h}_i$ of GNT are still minimum-norm for pushing the output along the negative gradient direction in the output space. Hence, the GN-part of the hybrid GNT method is minimum-norm, but the gradient descent part not anymore. In practice, the interpretation of Theorem \ref{theorem:grad_outputspace} approximately holds for nonlinear networks (see SM). Further, GNT on nonlinear networks does not converge to a true local minimum of the loss function. However, our experimental results indicate that the DTP variants with approximate GN targets succeed in decreasing the output loss sufficiently, even for nonlinear networks (section \ref{section:experiments}). To help explain these results, we show that the GNT updates partially align with the loss gradient updates for networks with large hidden layers and a small output layer, as is the case in many network architectures for classification and regression tasks. Indeed, the properties of $J_{\bar{f}_{i,L}}$ are in general similar to those of a random matrix \citep{arora2015deep} and for zero-mean random matrices in $\mathbb{R}^{m\times n}$ with $n \gg m$, a scalar multiple of its transpose is a good approximation of its pseudo-inverse. Intuitively, this is easy to understand, as $J^{\dagger}=J^T(JJ^T)^{-1}$ and $JJ^T$ is close to a scalar multiple of the identity matrix in this case (theorem in SM). 



To conclude our theory, we showed that the layerwise training of the feedback parameters in DTP leads to inefficient forward parameter updates, and can be resolved by using the new DRL, which also allows for direct feedback connections. Further, we showed that TP and DTP, when combined with DRL, differ substantially from both BP and GN and can be best interpreted as a hybrid method between GN and gradient descent, which produces approximate minimum-norm parameter updates.

\section{Experiments}\label{section:experiments}
We evaluate the new DTP variants on a set of standard image classification datasets: MNIST \citep{lecun1998mnist}, Fashion-MNIST \citep{xiao2017fashion} and CIFAR10 \citep{krizhevsky2014cifar}.\footnote{PyTorch implementation of all methods is available on \url{github.com/meulemansalex/theoretical_framework_for_target_propagation}} We used fully connected networks with $\tanh$ nonlinearities, with a softmax output layer and cross-entropy loss, optimized by Adam \citep{kingma2014adam} (see SM for how our theory can be adapted to the cross-entropy loss). For the hyperparameter searches, we used a validation set of 5000 samples from the training set for all datasets. We report the test errors corresponding to the epoch with the best validation errors (experimental details in SM). We used targets to train all layers in DTP and its variants, in contrast to \citet{lee2015difference}, who trained the last hidden layer with  BP.


\begin{wrapfigure}{r}{0.45\linewidth}
  \begin{center}
    \vspace*{-0.6cm}
    \includegraphics[width=\linewidth]{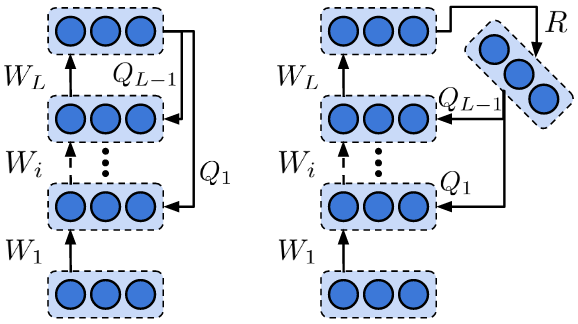}
  \end{center}
  \vspace*{-0.1cm}
  \caption{Structure of DDTP-linear (left) and DDTP-RHL (right).}
    \label{fig:ddtp_schematic}
    \vspace*{-0.4cm}
\end{wrapfigure}
We experimentally evaluate the following new DTP variants (algorithms available in SM). (i) For 
\textbf{DTPDRL} (DTP with DRL) we use the same layerwise feedback architecture as in the original DTP method (see Fig. \ref{fig:tp_schematic}), but the feedback parameters are trained with DRL (eq. \ref{eq:diff_rec_loss}). (ii) We consider two DDTP variants, both trained with DRL. \textbf{DDTP-linear} has direct linear connections as shown in Fig. \ref{fig:ddtp_schematic}. In \textbf{DDTP-RHL} (DDTP with a Random Hidden Layer), the output target is projected by a random fixed matrix $R$ to a wide hidden feedback layer: $\hat{\ve{h}}_{fb} = \tanh(R\hat{\ve{h}}_L)$. From this hidden feedback layer, direct (trained) connections are made to the hidden layers of the network: $g_i(\hat{\ve{h}}_L) = \tanh(Q_i\hat{\ve{h}}_{fb})$ (see Fig. \ref{fig:ddtp_schematic}).
(iii) To decouple the contributions of DRL with other factors, we did four controls. We compare our methods with \textbf{DTP} and with direct feedback alignment (\textbf{DFA}) \citep{nokland2016direct} as a reference for methods with direct feedback connections. For \textbf{DDTP-control}, we train a DDTP-linear architecture with a reconstruction loss that incorporates a loop through the output layer, but does not use the difference correction that is present in DRL. In \textbf{DTP (pre-trained)}, we pre-train the feedback weights of DTP in the same manner as we do for our new DTP variants: 6 epochs before starting the training of the forward weights and one epoch of pure feedback training between each epoch of training both forward and feedback weights.
\begin{table}[h]
\centering
\caption[Test errors]{Test errors corresponding to the epoch with the best validation error over a training of 100 epochs (5x256 fully connected (FC) hidden layers for MNIST and Fashion-MNIST, 3xFC1024 for CIFAR10). Mean $\pm$ SD for 10 random seeds. The best test errors (except BP) are displayed in bold.}
\label{tab:test_results}
\begin{tabular}{*5c}
\toprule
{}   & MNIST & Frozen-MNIST & Fashion-MNIST    & CIFAR10   \\
\midrule
BP & $1.98 \pm 0.14\%$ & $4.39 \pm 0.13\%$ &  $10.74 \pm 0.16\%$ & $45.60 \pm 0.50\%$  \\
\midrule
DDTP-linear  &  $\mathbf{2.04 \pm 0.08\%}$ & $6.42 \pm 0.17\%$ & $\mathbf{11.11 \pm 0.35\%}$ & $\mathbf{50.36 \pm 0.26\%}$  \\
DDTP-RHL  &  $2.10 \pm 0.14\%$ & $ \mathbf{5.11 \pm 0.19\%}$ & $11.53 \pm 0.31\%$ & $51.94 \pm 0.49 \%$  \\
DTPDRL   &  $2.21 \pm 0.09\%$ & $6.10 \pm 0.17\%$ & $11.22 \pm 0.20\%$ & $50.80 \pm 0.43 \%$  \\
DDTP-control  &  $2.51 \pm 0.08\%$ & $9.70 \pm 0.31\%$ & $11.71 \pm 0.28\%$ & $51.75 \pm 0.43 \%$  \\
DTP  &  $2.39 \pm 0.19\%$ & $10.64 \pm 0.53\%$ & $11.49 \pm 0.23\%$ & $51.74 \pm 0.30 \%$ \\
DTP (pre-trained)   &  $2.26 \pm 0.18\%$ & $9.31 \pm 0.40\%$ & $11.52 \pm 0.31\%$ & $52.20 \pm 0.50 \%$  \\
DFA   &  $2.17 \pm 0.14\%$ & / & $11.26 \pm 0.25\%$ & $51.28 \pm 0.41\%$  \\


\bottomrule
\end{tabular}
\vspace*{-0.1cm}
\end{table}

Table \ref{tab:test_results} displays the test errors for all experiments. DDTP-linear systematically outperforms both the original DTP method and the controls on all datasets. The better performance of DTPDRL compared to DTP shows that the DRL loss is indeed an improvement on the layer-wise reconstruction loss. Fig. \ref{fig:alignment_angles} reveals a significant difference between the methods in the alignment of the updates with both the loss gradients and ideal GNT updates. Clearly, our methods are better capable of sending aligned teaching signals backwards through the network, despite that all performances lie close together. For further investigating whether the various methods are able to propagate useful learning signals, we designed the Frozen-MNIST experiment. In this experiment, all the forward parameters are frozen, except for those of the first hidden layer. All feedback parameters are still trained. To reach a good test performance, the network must be able to send useful teaching signals backwards deep into the network. Our results confirm that with DRL, the DTP variants are better capable of backpropagating useful teaching signals due to their better alignment with both the gradient and GNT updates. This experiment is not compatible with DFA, as this method relies on the alignment of all forward parameters with the fixed feedback parameters. We observed that DDTP-RHL, which has significantly more feedback parameters compared to DDTP-linear and DTPDRL, produces the best feedback signals in the frozen MNIST task, but is challenging to train on more complex tasks when the forward weights are not frozen.
\begin{figure}[h!]
    \vspace*{-0.0cm}
    \centering
    \includegraphics[width=0.97\linewidth]{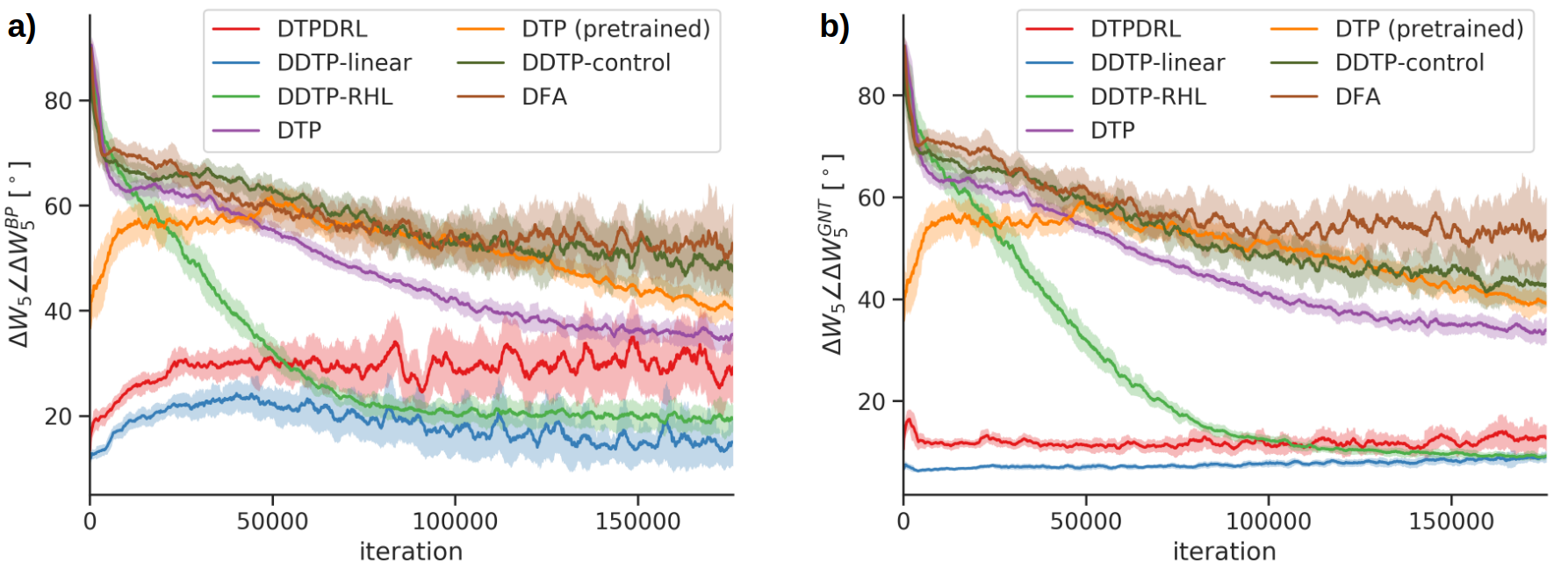}
    \caption[Results]{Angles between the weight updates $\Delta W_5$ of the fifth (last) hidden layer with (a) the loss gradient directions and (b) the GNT weight updates according to Condition \ref{condition:gn_targets} on Fashion-MNIST. A window-average of the angles is plotted, together with the window-standard-deviation.}
    \label{fig:alignment_angles}
    \vspace*{-0.1cm}
\end{figure}

For disentangling optimization capabilities and implicit regularization mechanisms that could both influence the test performance, we performed a second line of experiments focused on minimizing the training loss.\footnote{We used new hyperparameter settings for all methods, optimized for minimizing the training loss.} The results in Table \ref{tab:train_results} show that the DRL loss leads to significantly better optimization capabilities, compared to DTP and the controls. Interestingly, DDTP-linear achieves a lower training loss after 100 epochs compared to BP on MNIST and Fashion-MNIST, which might indicate that GNT can lead to acceleration on those simple datasets.

\begin{table}[h]
\centering
\caption[Training loss]{Training loss after 100 epochs. Mean $\pm$ SD for 10 random seeds. The best training losses (except BP) are displayed in bold.}
\label{tab:train_results}
\begin{tabular}{*5c}
\toprule
{}   & MNIST & Frozen-MNIST  & Fashion-MNIST    & CIFAR10   \\
\midrule
BP & $2.87^{ \pm 4.11}\cdot 10^{-8}$ & $0.191 ^{\pm 0.019}$ &  $6.46^{ \pm 0.25}\cdot 10^{-5}$ & $1.01^{ \pm 0.57}\cdot 10^{-7}$  \\
\midrule
DDTP-linear  &  $\mathbf{1.99^{ \pm 1.90}\cdot 10^{-9}}$ & $0.349^{ \pm 0.029}$ & $\mathbf{1.03 ^{\pm 0.15}\cdot 10^{-5}}$ & $\mathbf{1.77^{ \pm 0.06}\cdot 10^{-6}}$  \\
DDTP-RHL  &  $2.29^{ \pm 0.56}\cdot 10^{-7}$ & $\mathbf{0.289^{ \pm 0.014}}$ & $3.51 ^{\pm 0.80}\cdot 10^{-3}$ & $4.26^{ \pm 3.61}\cdot 10^{-2}$  \\
DTPDRL   &  $3.43^{ \pm 2.57}\cdot 10^{-9}$ & $0.317^{ \pm 0.016}$ & $1.36 ^{\pm 0.48}\cdot 10^{-3}$ & $2.13^{ \pm 0.83}\cdot 10^{-6}$  \\
DDTP-control  & $4.74^{ \pm 9.51}\cdot 10^{-6}$ & $0.995^{ \pm 0.049}$ & $3.88 ^{\pm 2.63}\cdot 10^{-3}$ & $6.14 ^{\pm 0.92}\cdot 10^{-5}$  \\
DTP  &  $1.28^{ \pm 1.04}\cdot 10^{-7}$ & $1.184^{ \pm 0.072}$ & $4.07 ^{\pm 0.42}\cdot 10^{-2}$ & $9.81^{ \pm 6.42}\cdot 10^{-5}$ \\
DTP (pre-trained)   &  $5.72^{ \pm 5.91}\cdot 10^{-4}$ & $0.928^{ \pm 0.002}$ & $2.73 ^{\pm 0.67}\cdot 10^{-2}$ & $6.77^{ \pm 1.89}\cdot 10^{-5}$  \\
DFA   &  $1.57^{ \pm 2.10}\cdot 10^{-6}$ & / & $1.98 ^{\pm 0.24}\cdot 10^{-2}$ & $2.05^{ \pm 1.3}\cdot 10^{-5}$  \\

\bottomrule
\end{tabular}
\vspace*{-0.1cm}
\end{table}

As Theorem \ref{theorem:DRL_GN} applies to general feed-forward mappings, we can also use the GNT framework for convolutional neural networks (CNNs). As a proof-of-concept, we trained a small CNN on CIFAR10 with the methods that have direct feedback connections. Table \ref{tab:cnn_results} shows promising results, indicating that our theory can also be used in the more practical setting of CNNs, as DDTP-linear performs comparably to BP for this small CNN. For comparing with DTP and DTPDRL, careful design of the feedback pathways is needed, which is outside of the scope of this theoretical work.

\begin{table}[h]
\centering
\caption[CNN]{Test errors corresponding to the epoch with the best validation error over a training of 100 epochs on CIFAR10 with a small CNN with $\tanh$ nonlinearities (Conv5x5x32; Maxpool3x3; Conv5x5x64; Maxpool3x3; FC512; FC10). Mean $\pm$ SD for 10 seeds. The best test error (except BP) is displayed in bold.}
\label{tab:cnn_results}
\begin{tabular}{*4c}
\toprule
BP  & DDTP-linear & DDTP-control  & DFA    \\
\midrule
$24.38 \pm 0.29\%$  &  $\mathbf{23.99 \pm 0.31}\%$ & $28.69 \pm 0.87\%$ & $30.00 \pm 0.74\%$\\

\bottomrule
\end{tabular}
\vspace*{-0.1cm}
\end{table}

\section{Discussion}

In a seminal series of papers, LeCun introduced the TP framework and proposed a method to determine targets which is formally equivalent to BP \citep{yann1986learning,lecun1988theoretical}. Thirty years later, \citet{bengio2014auto} suggested to propagate targets using shallow autoencoders, which could be trained without BP. Our theory establishes this form of TP, when extended to DTP and DRL, as a proper credit assignment algorithm, while uncovering fundamental differences to BP. In particular, we have shown that autoencoding TP is best seen as a hybrid between GN and gradient descent. Intriguingly, this suggests a potential link between TP and feedback alignment, a biologically plausible alternative to BP, which has also been related to GN under certain conditions \citep{lillicrap2016random}.

For the optimization of non-invertible neural networks, however, the connection to GN is lost in DTP. Therefore, we introduced a new reconstruction loss, DRL, which involves averaging over stochastic activity perturbations. This novel loss function not only reestablishes the link to GN, but also suggests a new family of algorithms which directly propagates targets from the network output to each hidden layer. Interestingly, numerous anatomical studies of the mammalian neocortex consistently reported such direct feedback connections in the brain \citep{ungerleider2008cortical, rockland1994direct}. Our approach is similar in spirit to perturbative algorithms  \citep{lansdell2019learning,wayne2013self-modeling,lecun1989gemini}, with the important difference that we recover GN targets instead of activation gradients. 

In practice, the propagated targets for DRL methods are not exactly equal to GNT because of limited feedback parameterization capacity, limited training iterations for the feedback path and the approximation of $\lambda$ by weight decay. Figures \ref{fig:alignment_angles} and \ref{fig:fashion_mnist_angles}-\ref{fig:cifar_angles} demonstrate that our methods approximate GNT well. However, for upstream layers, future studies are required for further improvement, e.g. by investigating better feedback parameterizations or by using dynamical inversion \citep{podlaski2020biological}. A better alignment between targets and GNT in upstream layers will likely improve the performance on more complex tasks.
We note that while GN optimization enjoys desirable properties, it is presently unclear if GN targets can be more effective than gradients in neural network optimization. Future work is required to determine if DRL methods can close the gap to BP in large-scale problems, such as those considered by \citet{bartunov2018assessing}. 

DRL requires distinct phases to learn. In particular, it needs a separate noisy phase for each hidden layer, while the other layers are not corrupted by noise, similar to \citet{lee2015difference}, \citet{akrout2019deep} and \citet{kunin2020two}. Although there is mounting evidence for stochastic computation in cortex \citep[e.g.,][]{london2010sensitivity}, coordinated alternations in noise levels are likely difficult to orchestrate in the brain. In this respect DRL is less biologically-plausible than standard DTP. Thus, DRL is best seen as a theoretical upper bound for training feedback weights to propagate GN targets, which can serve as a basis for future more biologically plausible feedback weight training methods.

To conclude, we have shown that it is possible to do credit assignment in a neural network -- i.e., determine how each synaptic strength influences the output \citep{hinton1984boltzmann} -- with TP, using only information that is local to each neuron, in a way that is fundamentally different from conventional BP. Our new direct feedback learning algorithm reinforces the belief that it is possible to optimize neural circuits without requiring the precise, layerwise symmetric feedback structure imposed by BP.

\section*{Broader impact}
Since the nature of our work is mostly theoretical with no immediate practical applications, we do not anticipate any direct societal impact. However, on the long-term, our work can have an impact on related research communities such as neuroscience or deep learning, which can have both positive and negative societal impact, depending on how these fields develop.
For example, we show that the TP framework, when using a new reconstruction loss, is a viable credit assignment method for feedforward networks that fundamentally differs from the standard training method known as BP. Furthermore, TP only uses information which is local to each neuron and mitigates the weight transport and signed error transmission problem, the two major criticisms of BP. This renders TP appealing for neuroscientists that investigate how credit assignment is organized in the brain \citep{lillicrap2020backpropagation, richards2019deep} and how neural circuits (dys)function in health and disease.
From a machine learning perspective, the TP framework has inspired new training methods for recurrent neural networks (RNNs) \citep{manchev2020target, ororbia2020continual, depasquale2018full, abbott2016building}, which is beneficial because the conventional backpropagation-through-time method \citep{werbos1988generalization, robinson1987utility, mozer1995focused} for training RNNs still suffers from significant drawbacks, such as vanishing and exploding gradients \citep{hochreiter1991untersuchungen, hochreiter1997long}. Here, our work provides a new angle for the field to investigate the theoretical underpinnings of credit assignment in RNNs based on TP.

\section*{Acknowledgements}
This work was supported by the Swiss National Science Foundation (B.F.G. CRSII5-173721 and 315230\_189251), ETH project funding (B.F.G. ETH-20 19-01), the Human Frontiers Science Program (RGY0072/2019) and funding from the Swiss Data Science Center (B.F.G, C17-18, J. v. O. P18-03). Johan Suykens acknowledges support of ERC Advanced Grant E-DUALITY (787960), KU Leuven C14/18/068, FWO G0A4917N, Ford KU Leuven Alliance KUL0076, Flemish Government AI Research Program, Leuven.AI Institute. João Sacramento was supported by an Ambizione grant (PZ00P3\_186027) from the Swiss National Science Foundation.
The authors would like to thank Nik Dennler for his help in the implementation of the experiments and William Podlaski for insightful discussions on credit assignment through inversion.
João Sacramento further thanks Greg Wayne for inspiring discussions on perturbation-based credit assignment algorithms.

\bibliography{references.bib}

\newpage
\setcounter{page}{1}
\setcounter{figure}{0} \renewcommand{\thefigure}{S\arabic{figure}}
\setcounter{theorem}{0} \renewcommand{\thetheorem}{S\arabic{theorem}}
\setcounter{definition}{0} \renewcommand{\thedefinition}{S\arabic{definition}}
\setcounter{condition}{0} \renewcommand{\thecondition}{S\arabic{condition}}
\setcounter{table}{0} \renewcommand{\thetable}{S\arabic{table}}

\appendix

\section*{\Large{Supplementary Material:\\A Theoretical Framework for Target Propagation.}}
\textbf{Alexander Meulemans\textsuperscript{1}, Francesco S. Carzaniga\textsuperscript{1}, Johan A.K. Suykens\textsuperscript{2},}\\
  \textbf{João Sacramento\textsuperscript{1}, Benjamin F. Grewe\textsuperscript{1}}\\
  \\
  \textsuperscript{1}Institute of Neuroinformatics, University of Zürich and ETH Zürich\\
  \textsuperscript{2}ESAT-STADIUS, KU Leuven\\
  \texttt{ameulema@ethz.ch}  
\vspace{-1.4cm}
\addcontentsline{toc}{section}{} 
\part{} 
\parttoc 

\section{Proofs and extra information on theoretical results}
In this section, we show all the theorems and proofs, and provide extra information and discussion if appropriate.
\subsection{Proofs section 3.1}
\begin{condition}[Invertible networks]\label{condition:inv_networks_app}
The feed-forward neural network has forward mappings $\ve{h}_i = f_i(\ve{h}_{i-1}) = s_i(W_i\ve{h}_{i-1})$, $i=1, ..., L$ where $s_i$ can be any differentiable, monotonically increasing and invertible element-wise function with domain and image equal to $\mathbb{R}$ and where $W_i$ can be any invertible matrix. The feedback functions for propagating the targets are the inverse of the forward mapping functions: $g_i(\hat{\ve{h}}_{i+1}) = f_{i+1}^{-1}(\hat{\ve{h}}_{i+1}) = W_{i+1}^{-1}s_{i+1}^{-1}(\hat{\ve{h}}_{i+1})$.
\end{condition}

\begin{lemma}[Lemma 1 in main manuscript] \label{lemma:taylor_approx_teaching_signal_app}
Assuming Condition \ref{condition:inv_networks_app} holds and the output target stepsize $\hat{\eta}$ is small compared to $\|\ve{h}_L\|_2/\|\ve{e}_L\|_2$, the target update $\Delta \ve{h}_i \triangleq \hat{\ve{h}}_{i} - \ve{h}_i$ can be approximated by
\begin{align}
    \Delta \ve{h}_i \triangleq \hat{\ve{h}}_{i} - \ve{h}_i = - \hat{\eta}\Bigg[\prod_{k=i}^{L-1}J_{f_{k+1}}^{-1}\Bigg]\ve{e}_L  + \mathcal{O}(\hat{\eta}^2) = - \hat{\eta} J_{\bar{f}_{i, L}}^{-1}\ve{e}_L + \mathcal{O}(\hat{\eta}^2) , 
\end{align}

with $\ve{e}_L = (\partial \Lagr / \partial \ve{h}_L)^T$ evaluated at $\ve{h}_L$, $J_{f_{k+1}}= \partial f_{k+1}(\ve{h}_{k}) / \partial \ve{h}_{k}$ evaluated at $\ve{h}_{k}$ and $J_{\bar{f}_{i, L}}= \partial f_L(..(f_{i+1}(\ve{h}_i))) / \partial \ve{h}_{i}$ evaluated at $\ve{h}_{i}$.
\end{lemma}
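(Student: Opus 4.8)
The plan is to prove the lemma by backward induction on the layer index $i$, proving the factorized form $\Delta\ve{h}_i = -\hat\eta\big[\prod_{k=i}^{L-1}J_{f_{k+1}}^{-1}\big]\ve{e}_L + \mathcal{O}(\hat\eta^2)$ first, and then obtaining the collapsed form $-\hat\eta J_{\bar f_{i,L}}^{-1}\ve{e}_L + \mathcal{O}(\hat\eta^2)$ as an immediate consequence of the chain rule. The base case is $i=L$, where the definition $\hat{\ve{h}}_L = \ve{h}_L - \hat\eta\ve{e}_L$ gives $\Delta\ve{h}_L = -\hat\eta\ve{e}_L$ exactly, which matches the empty-product convention (and in particular $\|\Delta\ve{h}_L\|_2 = \mathcal{O}(\hat\eta)$).

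For the inductive step I would assume the claim at level $i+1$, so that $\|\Delta\ve{h}_{i+1}\|_2 = \mathcal{O}(\hat\eta)$, and write $\hat{\ve{h}}_i = g_i(\hat{\ve{h}}_{i+1}) = g_i(\ve{h}_{i+1} + \Delta\ve{h}_{i+1})$. A first-order Taylor expansion of $g_i$ about $\ve{h}_{i+1}$ gives $\Delta\ve{h}_i = \big(g_i(\ve{h}_{i+1}) - \ve{h}_i\big) + J_{g_i}\Delta\ve{h}_{i+1} + \mathcal{O}(\|\Delta\ve{h}_{i+1}\|_2^2)$. Under Condition \ref{condition:inv_networks_app} we have $g_i = f_{i+1}^{-1}$, so the first bracket vanishes since $g_i(\ve{h}_{i+1}) = \ve{h}_i$, and the inverse function theorem yields $J_{g_i} = J_{f_{i+1}}^{-1}$; here $J_{f_{i+1}} = D_{i+1}W_{i+1}$ with $D_{i+1}$ the diagonal Jacobian of $s_{i+1}$, which is invertible because $s_{i+1}$ is strictly increasing with nonvanishing derivative and $W_{i+1}$ is invertible by assumption. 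Substituting the induction hypothesis for $\Delta\ve{h}_{i+1}$ and absorbing $\mathcal{O}(\|\Delta\ve{h}_{i+1}\|_2^2) = \mathcal{O}(\hat\eta^2)$ gives $\Delta\ve{h}_i = -\hat\eta J_{f_{i+1}}^{-1}\big[\prod_{k=i+1}^{L-1}J_{f_{k+1}}^{-1}\big]\ve{e}_L + \mathcal{O}(\hat\eta^2) = -\hat\eta\big[\prod_{k=i}^{L-1}J_{f_{k+1}}^{-1}\big]\ve{e}_L + \mathcal{O}(\hat\eta^2)$, which closes the induction. The second equality then follows from $J_{\bar f_{i,L}} = J_{f_L}J_{f_{L-1}}\cdots J_{f_{i+1}}$, hence $J_{\bar f_{i,L}}^{-1} = J_{f_{i+1}}^{-1}\cdots J_{f_L}^{-1} = \prod_{k=i}^{L-1}J_{f_{k+1}}^{-1}$.

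The step I expect to require the most care is the legitimacy and uniformity of the $\mathcal{O}(\hat\eta^2)$ remainder, rather than the algebra. The quadratic Taylor remainder for $g_i$ needs $g_i$ to be $C^2$ on a neighbourhood of $\ve{h}_{i+1}$ — this is exactly where the smoothness of $s_{i+1}$ and invertibility of $W_{i+1}$ enter — and the hypothesis that $\hat\eta$ is small relative to $\|\ve{h}_L\|_2/\|\ve{e}_L\|_2$ is what ensures the perturbed activations stay inside such a neighbourhood, so the Taylor bounds hold with constants bounded independently of $\hat\eta$. Propagating the bound through the $L-i$ layers is then bookkeeping: each layer contributes a finite constant depending on suprema of the relevant first and second derivatives of $g_i$ over the neighbourhood, and a finite product of such constants stays finite, so the accumulated error remains $\mathcal{O}(\hat\eta^2)$ as $\hat\eta \to 0$.
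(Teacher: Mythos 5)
Your proposal is correct and takes essentially the same route as the paper: the paper's proof is exactly this backward chain of first-order Taylor expansions of $g_i$ about $\ve{h}_{i+1}$, using $g_i = f_{i+1}^{-1}$ to kill the reconstruction term and the inverse function theorem to identify $J_{g_i} = J_{f_{i+1}}^{-1}$, with the final equality following from the chain-rule factorization of $J_{\bar f_{i,L}}$. You merely make the induction explicit and are somewhat more careful than the paper about the uniformity of the $\mathcal{O}(\hat\eta^2)$ remainder and the invertibility of $J_{f_{i+1}} = D_{i+1}W_{i+1}$, which the paper leaves implicit.
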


\begin{proof}
(\textit{Proof rephrased from \citet{lee2015difference}}.) We prove this lemma by a series of first-order Taylor expansions and using the inverse function theorem. We start with a Taylor approximation for $\hat{\ve{h}}_{L-1}$ around $\ve{h}_{L-1}$.
\begin{align}
    \hat{\ve{h}}_{L-1} &= g_{L-1}(\hat{\ve{h}}_L) = g_{L-1}(\ve{h}_L - \hat{\eta}\ve{e}_L)\\
    &= g_{L-1}(\ve{h}_L) - \hat{\eta}J_{g_{L-1}}\ve{e}_L + \mathcal{O}(\hat{\eta}^2)\\
    &= \ve{h}_{L-1} - \hat{\eta}J_{f_{L}}^{-1}\ve{e}_L + \mathcal{O}(\hat{\eta}^2)
\end{align}
with $J_{g_{i}}$ the Jacobian of $g_{i}$ with respect to $\ve{h}_{i+1}$, evaluated at $\ve{h}_{i+1}$ and $J_{f_{i}}$ the Jacobian of $f_{i}$ with respect to $\ve{h}_{i-1}$, evaluated at $\ve{h}_{i-1}$. For the last step, we used that $g_i$ is the perfect inverse of $f_{i+1}$ and the inverse function theorem. If we continue doing Taylor expansions for each layer, we reach a general expression for $\hat{\ve{h}}_i$:
\begin{align}
    \hat{\ve{h}}_{i} = \ve{h}_i - \hat{\eta}\Bigg[\prod_{k=i}^{L-1}J_{f_{k+1}}^{-1}\Bigg]\ve{e}_L +\mathcal{O}(\hat{\eta}^2) \label{eq:4TP_exactinverse_targetapporx}
\end{align}
The Jacobian $J_{\bar{f}_{i, L}}$ of the total forward mapping $\bar{f}_{i, L}$ from layer $i$ to $L$ is given by:
\begin{align}
    J_{\bar{f}_{i, L}} = \prod_{k=L}^{i+1}J_{f_{k}}.
\end{align}
As all $J_{f_{k}}$ are square and invertible (following from Condition \ref{condition:inv_networks_app}), its inverse is given by
\begin{align}
    J_{\bar{f}_{i, L}}^{-1} = \prod_{k=i}^{L-1}J_{f_{k+1}}^{-1}.
\end{align}
Using the definition $\Delta \ve{h}_i \triangleq \hat{\ve{h}}_{i} - \ve{h}_i$ concludes the proof.
\end{proof}

For proving Theorem 2 of the paper, we need to introduce first a lemma.

\begin{lemma}\label{lemma:4GN_step}
Consider a feed-forward neural network with as forward mapping function $\ve{h}_i = f_i(\ve{h}_{i-1}) = s_i(W_i\ve{h}_{i-1})$, $i=1, ..., L$ where $s_i$ can be any differentiable element-wise function. Furthermore, assume a mini-batch size of 1 and a $L_2$ output loss function. Under these conditions, the Gauss-Newton optimization step for the layer activations, with a block-diagonal approximation of the Gauss-Newton curvature matrix with blocks equal to the layer sizes, is given by:
\begin{align}
    \Delta \ve{h}_i &= -J_i^{\dagger}\ve{e}_L, \quad i=1,...,L-1,
\end{align}
with $J_i = \frac{\partial \ve{h}_L}{\partial \ve{h}_i}$ evaluated at $\ve{h}_i$, $J_i^{\dagger}$ its Moore-Penrose pseudo-inverse \citep{moore1920reciprocal, penrose1955generalized} and $\ve{e}_L=\ve{h}_L-\ve{l}$ the output error, with $\ve{l}$ the output label.
\end{lemma}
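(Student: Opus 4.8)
The plan is to recognize that, once the full Gauss-Newton curvature matrix for the stacked hidden activations is replaced by its block-diagonal part (blocks indexed by layers), the Gauss-Newton step decouples across layers, and each decoupled block is exactly the damped GN iteration of eq.~\ref{eq:damped_gn_iteration}, whose $\lambda\to 0$ limit gives the pseudo-inverse formula.

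First I would set up the underlying least-squares problem. With mini-batch size $1$ and an $L_2$ loss, the residual is $\ve{e}_L = \ve{h}_L - \ve{l}$, and the optimization ``parameters'' are the stacked hidden activations $\ve{\theta} = [\ve{h}_1^T,\dots,\ve{h}_{L-1}^T]^T$, where the output is regarded as the function obtained by recomputing the forward pass downstream of each layer, so that each $\ve{h}_i$ genuinely influences $\ve{h}_L$. By the chain rule the Jacobian of $\ve{h}_L$ with respect to $\ve{h}_i$ is $J_i = \partial\ve{h}_L/\partial\ve{h}_i = \prod_{k=i}^{L-1}J_{f_{k+1}}$, so the Jacobian of $\ve{h}_L$ with respect to $\ve{\theta}$ is the row-concatenation $J = [\,J_1\;\cdots\;J_{L-1}\,]$, and the loss gradient is $J^T\ve{e}_L$, whose $i$-th block is $J_i^T\ve{e}_L$.

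Next I would write the (damped) Gauss-Newton step $\Delta\ve{\theta} = -(G + \lambda I)^{-1}J^T\ve{e}_L$ with $G = J^TJ$, observe that the $(i,j)$ block of $G$ is $J_i^TJ_j$, and impose the block-diagonal approximation $G \approx \mathrm{blockdiag}(J_1^TJ_1,\dots,J_{L-1}^TJ_{L-1})$. Since both this approximate curvature and the identity are block-diagonal with block sizes $n_i$, the linear system decouples into $\Delta\ve{h}_i = -(J_i^TJ_i + \lambda I)^{-1}J_i^T\ve{e}_L$ for each $i = 1,\dots,L-1$. Finally I would invoke the push-through identity $(J_i^TJ_i + \lambda I)^{-1}J_i^T = J_i^T(J_iJ_i^T + \lambda I)^{-1}$ already used in eq.~\ref{eq:damped_gn_iteration} and let $\lambda\to 0$, which yields $\Delta\ve{h}_i = -J_i^\dagger\ve{e}_L$, the claimed formula.

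The main obstacle is not the algebra but pinning down what ``the Gauss-Newton curvature matrix'' means when the variables are layer activations rather than weights: one must commit to the interpretation that each $\ve{h}_i$ is a free variable whose perturbation is propagated forward through $f_{i+1},\dots,f_L$, and that layer targets are computed independently — precisely what the block-diagonal approximation encodes. A secondary point to treat carefully is the overparameterized regime $n_i > n_L$, where $J_i^TJ_i$ is singular and the undamped GN matrix is not invertible; this is why the argument is routed through the damped iteration of eq.~\ref{eq:damped_gn_iteration} and its $\lambda\to0$ limit, for which $J_i^\dagger$ is the well-defined (minimum-norm) answer.
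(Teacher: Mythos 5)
Your proposal is correct and follows essentially the same route as the paper's proof: treat the stacked hidden activations as the optimization variables, observe that the block-diagonal approximation of $G=\bar{J}^T\bar{J}$ (with blocks $J_i^TJ_j$) decouples the normal equations into $J_i^TJ_i\,\Delta\ve{h}_i=-J_i^T\ve{e}_L$, and read off $\Delta\ve{h}_i=-J_i^{\dagger}\ve{e}_L$. The only cosmetic difference is that you justify the pseudo-inverse in the rank-deficient case via the $\lambda\to 0$ limit of the Tikhonov-damped system and the push-through identity, whereas the paper directly invokes the minimum-norm convention of the Moore--Penrose solution of the normal equations; both are valid and lead to the same formula.
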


\begin{proof}
The output loss function $\mathcal{L}$ for a minibatch size of 1 can be written as:
\begin{align}
    \mathcal{L} &= \frac{1}{2} \|\ve{e}_L\|_2^2\\ \label{eq:4L2derivative}
    \ve{e}_L &\triangleq \frac{\partial L}{\partial \ve{h}_L} = \ve{h}_L-\ve{l},
\end{align}
with $\ve{h}_L$ the output layer activation and $\ve{l}$ the output label.
As an experiment of thought, imagine that the parameters of our network are the layer activations $\ve{h}_i$, $i=1,...,L$, concatenated in the total activation vector $\bar{\ve{h}}$, instead of the weights $W_i$. The weights can be seen as fixed values for now. If we want to update the activation values $\bar{\ve{h}}$ according to the Gauss-Newton method we get the following Gauss-Newton curvature matrix (which serves as an approximation of the Hessian of the output loss with respect to $\bar{\ve{h}}$):
\begin{align}
    G = \bar{J}^T\bar{J}
\end{align}

with $\bar{J}$ the Jacobian of $\ve{h}_L$ with respect to $\bar{\ve{h}}$. $\bar{J}$ can be structured in blocks along the column dimension:
\begin{align}
    \text{Block}_i(\bar{J}) = J_i = \frac{\partial \ve{h}_L}{\partial \ve{h}_i}, \quad i=1,...,L
\end{align}
Consequently, $G$ can also be structured in blocks of the form:
\begin{align}
    \text{Block}_{i,j}(G) = J_i^TJ_j, \quad i,j=1,...,L
\end{align}
In the field of Gauss-Newton optimization for deep learning, it is common to approximate $G$ by a block-diagonal matrix $\tilde{G}$ \citep{martens2015optimizing, botev2017practical, chen1990parallel}, as \citet{martens2015optimizing} show that the GN Hessian matrix is block diagonal dominant for feed-forward neural networks. $\tilde{G}$ then consists of the following diagonal blocks:
\begin{align}
    \text{Block}_{i,i}(\tilde{G}) = J_i^TJ_i, \quad i=1,...,L
\end{align}
while all off-diagonal blocks are zero. Now the following linear system has to be solved to compute the activations update $\Delta \bar{\ve{h}}$:
\begin{align}
    \tilde{G}\Delta \bar{\ve{h}} = -\bar{J}^{T}\ve{e}_L,
\end{align}
As $\tilde{G}$ is block-diagonal, this system factorizes naturally in $L$ linear systems of the form
\begin{align}
    J_i^TJ_i\Delta \ve{h}_i &= -J_i^{T}\ve{e}_L, \quad i=1,...,L\\
    \Leftrightarrow \Delta \ve{h}_i &= -J_i^{\dagger}\ve{e}_L, \quad i=1,...,L
\end{align}
If $J_i^TJ_i$ is not invertible, the Moore-Penrose pseudo-inverse gives the solution $\Delta \ve{h}_i$ with the smallest norm, which is the most common choice in practice.
\end{proof}

\begin{theorem}[Theorem 2 in main manuscript]\label{theorem:4GN_app}
Consider an invertible network specified in Condition \ref{condition:inv_networks_app}. Further, assume a mini-batch size of 1 and an $L_2$ output loss function. Under these conditions and in the limit of $\hat{\eta} \xrightarrow{}0$,  TP uses Gauss-Newton optimization with a block-diagonal approximation of the Gauss-Newton curvature matrix, with block sizes equal to the layer sizes, to compute the local layer targets $\hat{\ve{h}}_i$.
\end{theorem}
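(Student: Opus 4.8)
The plan is to show that, under Condition~\ref{condition:inv_networks_app}, the first-order approximation of the TP target update from Lemma~\ref{lemma:taylor_approx_teaching_signal_app} coincides, in the limit $\hat{\eta}\to 0$, with the block-diagonal Gauss-Newton step on the layer activations established in Lemma~\ref{lemma:4GN_step}. Concretely, Lemma~\ref{lemma:taylor_approx_teaching_signal_app} gives $\Delta \ve{h}_i = -\hat{\eta}\, J_{\bar{f}_{i,L}}^{-1}\ve{e}_L + \mathcal{O}(\hat{\eta}^2)$ with $J_{\bar{f}_{i,L}} = \partial \ve{h}_L/\partial \ve{h}_i = \prod_{k=L}^{i+1} J_{f_k}$, so dividing by $\hat{\eta}$ and letting $\hat{\eta}\to 0$ isolates the leading direction $-J_{\bar{f}_{i,L}}^{-1}\ve{e}_L$ and the higher-order corrections vanish.

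Next I would bring in the Gauss-Newton side via Lemma~\ref{lemma:4GN_step}: treating the concatenated activations $\bar{\ve{h}}$ as the optimization variable (weights frozen), the Gauss-Newton curvature $G=\bar{J}^T\bar{J}$ has blocks $J_i^T J_j$, and the standard block-diagonal approximation $\tilde{G}$ with layer-sized diagonal blocks $J_i^T J_i$ makes the normal equations $\tilde{G}\,\Delta\bar{\ve{h}} = -\bar{J}^T\ve{e}_L$ decouple into $J_i^T J_i\,\Delta\ve{h}_i = -J_i^T\ve{e}_L$, whose (minimum-norm) solution is $\Delta\ve{h}_i = -J_i^{\dagger}\ve{e}_L$ with $J_i=\partial\ve{h}_L/\partial\ve{h}_i$.

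To close the argument I would invoke invertibility: under Condition~\ref{condition:inv_networks_app} every $J_{f_k}=\mathrm{diag}(s_k')W_k$ is square and invertible, hence $J_i=J_{\bar{f}_{i,L}}$ is square and invertible, so its Moore--Penrose pseudo-inverse collapses to the ordinary inverse, $J_i^{\dagger}=J_{\bar{f}_{i,L}}^{-1}$. Therefore the block-diagonal Gauss-Newton step is $-J_{\bar{f}_{i,L}}^{-1}\ve{e}_L$, which is exactly the limiting direction of the TP target update from the first step; equivalently, $\hat{\ve{h}}_i - \ve{h}_i = -\hat{\eta}\,J_i^{\dagger}\ve{e}_L + \mathcal{O}(\hat{\eta}^2)$, i.e.\ TP computes the block-diagonal Gauss-Newton activation update scaled by $\hat{\eta}$, up to vanishing higher-order terms.

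I expect the main obstacle to be conceptual rather than computational: pinning down the precise sense in which ``TP uses Gauss-Newton optimization to compute the targets,'' namely identifying the $\hat{\eta}\to 0$ limit of the (normalized) target update with the GN step, and justifying that the relevant approximation is the block-diagonal one with blocks equal to the layer sizes rather than some other structured approximation of $G$. Both points are handled by Lemma~\ref{lemma:4GN_step} together with the invertibility hypothesis, so the remaining effort is bookkeeping --- tracking the $\mathcal{O}(\hat{\eta}^2)$ remainder and verifying the pseudo-inverse-to-inverse reduction layer by layer.
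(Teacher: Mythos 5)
Your proposal is correct and follows essentially the same route as the paper: the paper likewise combines the Taylor expansion of Lemma~\ref{lemma:taylor_approx_teaching_signal_app} with the block-diagonal Gauss--Newton activation step of Lemma~\ref{lemma:4GN_step}, and uses invertibility to collapse $J_i^{\dagger}$ to $J_{\bar{f}_{i,L}}^{-1}$ so that the two updates agree up to the $\mathcal{O}(\hat{\eta}^2)$ remainder, which vanishes as $\hat{\eta}\to 0$.
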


\paragraph{Proof Sketch.} 
\textit{Our proof relies on the following insights. First, if the Gauss-Newton method is used to compute a single target update $\Delta \ve{h}_i$ (while considering $\ve{h}_i$ as the parameters to optimize) this results in $\Delta \ve{h}_i = J_{\bar{f}_{i, L}}^{\dagger}\ve{e}_L $. Second, due to the invertible network setting, the pseudo-inverse is equal to the real inverse and can be factorized over the layer Jacobians resembling Lemma \ref{lemma:taylor_approx_teaching_signal_app}. Finally, when all the target updates $\Delta \ve{h}_i$, $i=1, .., L$ are computed at once with GN, the block-diagonal approximation of the total curvature matrix ensures that each target update is computed independently from the others, such that the above interpretation still holds.}

\begin{proof}
Under the conditions assumed in this theorem, the Gauss-Newton optimization step for the layer activations, with a block-diagonal approximation of the Gauss-Newton Hessian matrix with blocks equal to the layer sizes, is given by (a result of Lemma \ref{lemma:4GN_step}):
\begin{align}
    \Delta\ve{h}_i^{GN} &= -J_i^{\dagger}\ve{e}_L, \quad i=1,...,L-1,
\end{align}
with $J_i = \partial \ve{h}_L/\partial \ve{h}_i$. Under Condition \ref{condition:inv_networks_app}, $J_i$ is square and invertible, hence the pseudo inverse $J_i^{\dagger}$ is equal to the real inverse $J_i^{-1}$. As a result of Lemma \ref{lemma:taylor_approx_teaching_signal_app}, the TP target update $\Delta \ve{h}_i^{TP}$ is given by
\begin{align}
    \Delta\ve{h}_i^{TP} &= -\hat{\eta}J_i^{-1}\ve{e}_L + \mathcal{O}(\hat{\eta}^2), \quad i=1,...,L-1,
\end{align}

We see that $\Delta\ve{h}_i^{TP}$ is approximately equal to $\Delta\ve{h}_i^{GN}$ with stepsize $\hat{\eta}$ with an error of $\mathcal{O}(\hat{\eta}^2)$. In the limit of $\hat{\eta} \xrightarrow{}0$ this error goes to zero, thereby proving the theorem. This proof is inspired by the work of \citet{lillicrap2016random}, who discovered that feedback alignment for one-hidden layer networks is closely related to Gauss-Newton optimization. 

\end{proof}

\subsection{Proofs and extra information for section 3.2}\label{section:sec_3_2}
\begin{lemma}[Lemma 3 in main manuscript] \label{lemma:taylor_approx_teaching_signal_DTP_app}
Assuming the output target step size $\hat{\eta}$ is small compared to $\|\ve{h}_L\|_2/\|\ve{e}_L\|_2$, the target update $\Delta \ve{h}_i \triangleq \hat{\ve{h}}_{i} - \ve{h}_i$ of the DTP method can be approximated by
\begin{align}
    \Delta \ve{h}_i \triangleq \hat{\ve{h}}_{i} - \ve{h}_i = - \hat{\eta}\Bigg[\prod_{k=i}^{L-1}J_{g_{k}}\Bigg]\ve{e}_L  + \mathcal{O}(\hat{\eta}^2),
\end{align}
with $J_{g_{k}}= \partial g_k(\ve{h}_{k+1}) / \partial \ve{h}_{k+1}$ evaluated at $\ve{h}_{k+1}$.
\end{lemma}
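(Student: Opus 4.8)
The plan is to proceed by downward induction on the layer index $i$, mirroring the Taylor-expansion argument used for Lemma~\ref{lemma:taylor_approx_teaching_signal_app}, but now exploiting the difference correction to cancel the reconstruction-error term. The starting point is that, by the DTP definition $\hat{\ve{h}}_i = g_i(\hat{\ve{h}}_{i+1}) - \big(g_i(f_{i+1}(\ve{h}_i)) - \ve{h}_i\big)$ together with $\ve{h}_{i+1} = f_{i+1}(\ve{h}_i)$, the target update collapses to the clean difference $\Delta \ve{h}_i = \hat{\ve{h}}_i - \ve{h}_i = g_i(\hat{\ve{h}}_{i+1}) - g_i(\ve{h}_{i+1})$, in which no reconstruction error appears. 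A first-order Taylor expansion of the smooth map $g_i$ about $\ve{h}_{i+1}$ then gives the one-step relation $\Delta \ve{h}_i = J_{g_i}\Delta \ve{h}_{i+1} + \mathcal{O}(\|\Delta \ve{h}_{i+1}\|_2^2)$.

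The base case is exact: $\Delta \ve{h}_L = \hat{\ve{h}}_L - \ve{h}_L = -\hat{\eta}\ve{e}_L$ by the definition of the output target in eq.~\eqref{eq:output_target}, so in particular $\|\Delta \ve{h}_L\|_2 = \mathcal{O}(\hat{\eta})$. Assuming inductively that $\Delta \ve{h}_{i+1} = -\hat{\eta}\big[\prod_{k=i+1}^{L-1}J_{g_k}\big]\ve{e}_L + \mathcal{O}(\hat{\eta}^2)$, and hence $\|\Delta \ve{h}_{i+1}\|_2 = \mathcal{O}(\hat{\eta})$, substituting into the one-step relation yields $\Delta \ve{h}_i = -\hat{\eta}J_{g_i}\big[\prod_{k=i+1}^{L-1}J_{g_k}\big]\ve{e}_L + J_{g_i}\,\mathcal{O}(\hat{\eta}^2) + \mathcal{O}(\hat{\eta}^2) = -\hat{\eta}\big[\prod_{k=i}^{L-1}J_{g_k}\big]\ve{e}_L + \mathcal{O}(\hat{\eta}^2)$, which closes the induction and gives the claimed formula.

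The points needing care, rather than being genuinely hard, are the bookkeeping of the error terms. One must use the hypothesis that $\hat{\eta}$ is small compared to $\|\ve{h}_L\|_2/\|\ve{e}_L\|_2$ to guarantee that $\hat{\ve{h}}_{i+1}$ stays in a neighbourhood of $\ve{h}_{i+1}$ on which $g_i$ admits a valid first-order expansion with a uniformly bounded second-order remainder, and that the Jacobians $J_{g_k}$ — evaluated at the fixed forward activations, hence independent of $\hat{\eta}$ — are bounded, so that premultiplying an $\mathcal{O}(\hat{\eta}^2)$ term by $J_{g_i}$ keeps it $\mathcal{O}(\hat{\eta}^2)$. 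The main, still modest, obstacle is therefore to verify that the $\mathcal{O}(\|\Delta \ve{h}_{i+1}\|_2^2)$ remainders do not accumulate across the $L-i$ recursive steps into something larger than $\mathcal{O}(\hat{\eta}^2)$; since there are only finitely many layers and each $J_{g_k}$ is bounded, the accumulated remainder is a finite sum of $\mathcal{O}(\hat{\eta}^2)$ contributions and hence still $\mathcal{O}(\hat{\eta}^2)$.
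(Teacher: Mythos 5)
Your proof is correct and follows essentially the same route as the paper's: the difference correction cancels the reconstruction error so that $\Delta \ve{h}_i = g_i(\hat{\ve{h}}_{i+1}) - g_i(\ve{h}_{i+1})$, and a chain of first-order Taylor expansions (which the paper writes out for layer $L-1$ and then iterates, and which you formalize as downward induction) yields the product of Jacobians. Your explicit bookkeeping of the $\mathcal{O}(\hat{\eta}^2)$ remainders across the finitely many layers is a slightly more careful rendering of the same argument.
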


\begin{proof}
In DTP, the target for layer $i$ is computed as $\hat{\ve{h}}_i = g_i(\hat{\ve{h}}_{i+1}) + \ve{h}_i - g_i(\ve{h}_{i+1})$. A first-order Taylor approximation for $\hat{\ve{h}}_{L-1}$ around $\ve{h}_{L-1}$ gives
\begin{align}
\hat{\ve{h}}_{L-1} &= g_{L-1}(\hat{\ve{h}}_L) + \ve{h}_{L-1} - g_{L-1}(\ve{h}_{L}) \\
    &= g_{L-1}(\ve{h}_L - \hat{\eta}\ve{e}_L) + \ve{h}_{L-1} - g_{L-1}(\ve{h}_{L})\\
    &= g_{L-1}(\ve{h}_L) - \hat{\eta}J_{g_{L-1}}\ve{e}_L + \mathcal{O}(\hat{\eta}^2) + \ve{h}_{L-1} - g_{L-1}(\ve{h}_{L})\\
    &= \ve{h}_{L-1} - \hat{\eta}J_{g_{L-1}}\ve{e}_L + \mathcal{O}(\hat{\eta}^2)
\end{align}
with $J_{g_{i}}$ the Jacobian of $g_{i}$ with respect to $\ve{h}_{i+1}$. A further series of first-order Taylor expansions until layer $i$ gives:

\begin{align}
    \hat{\ve{h}}_{i}  = \ve{h}_i - \hat{\eta}\Bigg[\prod_{k=i}^{L-1}J_{g_{k}}\Bigg]\ve{e}_L  + \mathcal{O}(\hat{\eta}^2)
\end{align}
Using the definition $\Delta \ve{h}_i \triangleq \hat{\ve{h}}_{i} - \ve{h}_i$ concludes the proof.
\end{proof}

\paragraph{Why doesn't DTP propagate GN targets?} As shown in Lemma \ref{lemma:4GN_step}, for propagating GN targets, the target updates should be equal to:
\begin{align}
    \Delta \ve{h}_i^{GN} &= -J_{\bar{f}_{i, L}}^{\dagger}\ve{e}_L
\end{align}
with $J_{\bar{f}_{i, L}}$ the Jacobian of the forward mapping from layer $i$ to the output, evaluated at the batch sample $\ve{h}_i$. In DTP, the feedback pathways $g_i$ are trained on the following layer-wise reconstruction loss \citep{lee2015difference}:
\begin{align} \label{eq:rec_loss_app}
    \mathcal{L}_i^{\text{rec}} &= \sum_{b=1}^B \left \|  g_i\big(f_{i+1}(\ve{h}_{i}^{(b)} + \sigma \ve{\epsilon})\big) - (\ve{h}_{i}^{(b)} + \sigma \ve{\epsilon} )\right \|_2^2 
\end{align}
with $B$ the minibatch size, $\ve{\epsilon}$ white standard Gaussian noise and $\sigma$ the standard deviation of the noise. For clarity, let us assume that we have linear feedforward mappings $f_i(\ve{h}_{i-1}) = W_i \ve{h}_{i-1}$ and linear feedback mappings $g_i(\ve{h}_{i+1}) = Q_i \ve{h}_{i+1}$. Furthermore, we assume a batch-setting (i.e. the mini-batch size $B$ is equal to the total batch size). Then the GN target updates are given by:
\begin{align} \label{gnt_target_updates}
    \Delta \ve{h}_i^{GN} &= -J_{\bar{f}_{i, L}}^{\dagger}\ve{e}_L = \bigg[\prod_{k=L}^{i+1}W_k\bigg]^{\dagger}\ve{e}_L
\end{align}
Further, the minimum of \eqref{eq:rec_loss_app} for $Q_i$ has a closed form solution.
\begin{align}
    Q_i^*\big(W_{i+1}\Gamma_iW_{i+1}^T\big) = \Gamma_iW_{i+1}^T
\end{align}
with $\Gamma_i = \sum_{b=1}^B (\ve{h}_{i}^{(b)} + \sigma \ve{\epsilon} )(\ve{h}_{i}^{(b)} + \sigma \ve{\epsilon} )^T$, which can be interpreted as a covariance matrix. For contracting networks (diminishing layer sizes) $W_{i+1}\Gamma_iW_{i+1}^T$ is usually invertible, leading to 
\begin{align}
    Q_i^* = \Gamma_iW_{i+1}^T\big(W_{i+1}\Gamma_iW_{i+1}^T\big)^{-1}
\end{align}
When $\sigma$ is big relative to the norm of $\ve{h}_i$ (\citet{lee2015difference} and \citet{bartunov2018assessing} take $\sigma$ in the same order of magnitude as $\|\ve{h}_i\|_2$), $(\ve{h}_{i}^{(b)} + \sigma \ve{\epsilon} )$ can be approximately seen as white noise and $\Gamma_i$ is thus close to a scalar multiple of the identity matrix. For $\Gamma_i=cI$ and in contracting networks, $Q_i$ converges to 
\begin{align}
    Q_i^* = W_{i+1}^T\big(W_{i+1}W_{i+1}^T\big)^{-1} = W_{i+1}^{\dagger}
\end{align}
Following Lemma \ref{lemma:taylor_approx_teaching_signal_DTP_app}, the DTP target updates are given by (the approximation is exact in the linear case):
\begin{align} \label{eq:dtp_target_updates}
    \Delta \ve{h}_i^{DTP} &= \prod_{k=i+1}^{L}W_k^{\dagger}\ve{e}_L
\end{align}
Despite the resemblance of \eqref{gnt_target_updates} and \eqref{eq:dtp_target_updates}, $\Delta \ve{h}_i^{DTP}$ is not equal to the Gauss-Newton target update $\Delta \ve{h}_i^{GN}$, because in general, $\bigg[\prod_{k=L}^{i+1}W_k\bigg]^{\dagger}$ cannot be factorized as $\prod_{k=i+1}^{L}W_k^{\dagger}$. The pseudo-inverse of a product of matrices $(AB)^{\dagger}$ can only be factorized as $B^{\dagger}A^{\dagger}$ if one or more of the following conditions hold \citep{campbell2009generalized}:
\begin{itemize}
    \item $A$ has orthonormal columns
    \item $B$ has orthonormal rows
    \item $B = A^*$ ($B$ is the conjugate transpose of $A$)
    \item $A$ has all columns linearly independent and $B$ has all rows linearly independent
\end{itemize}
In general, the weight matrices do not satisfy any of these conditions for a neural network with arbitrary architecture. In section \ref{section:sec_3_4} we show that $\Delta \ve{h}_i^{DTP}$ leads to inefficient forward parameter updates. In the nonlinear case, it is in general not true that the Jacobian $J_{g_i}^{(b)}$ of the feedback mappings will be a good approximation of $J_{f_{i+1}}^{(b)\dagger}$, pushing $\Delta \ve{h}_i^{DTP}$ even further away from $\Delta \ve{h}_i^{GN}$.

\subsection{Proofs and extra information for section 3.3} \label{section:sec_3_3}
\begin{figure}[ht]
    \centering
    \begin{subfigure}{\linewidth}
        \centering
        \includegraphics[width=0.8\linewidth]{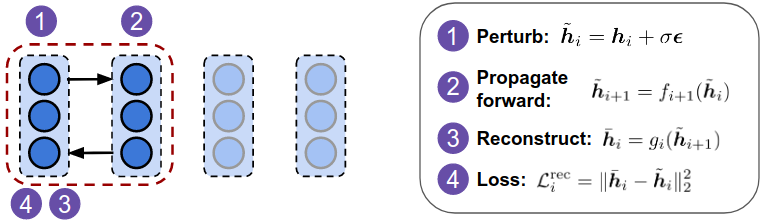}
        \caption{Reconstruction loss for DTP}
    \end{subfigure}
    \vspace{0.5cm}
     \begin{subfigure}{\linewidth}
     \centering
        \includegraphics[width=\linewidth]{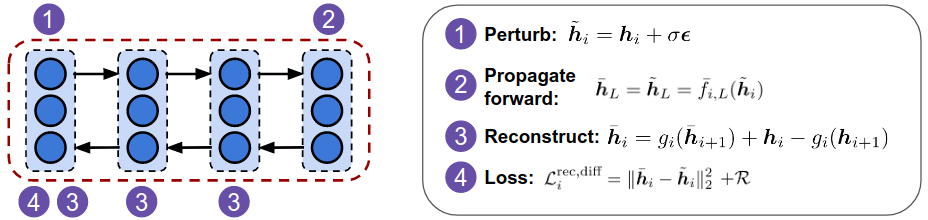}
        \caption{Difference Reconstruction Loss}
    \end{subfigure}
    \caption{Schematic of the reconstruction loss used in DTP and our new Difference Reconstruction Loss (DRL). The reconstruction loop is outlined in red. $\mathcal{R}$ indicates a regularization term, for which weight decay on the feedback parameters is used in practice. Note that we visualized the practical format of DRL, where the expectation is replaced by a single sample. If more samples are wished, the procedure is repeated $n$ times with the same batch sample but different independent samples of $\ve{\epsilon}$ and the loss is averaged in the end. }
    \label{fig:DRL_schematic}
\end{figure}

\begin{theorem}[Theorem 4 in main manuscript]\label{theorem:drl}

The difference reconstruction loss for layer $i$, with $\sigma$ driven in limit to zero, is equal to
\begin{align}
    \lim_{\sigma \xrightarrow{} 0} \Lagr_i^{\text{rec,diff}} = \sum_{b=1}^B \E_{\ve{\epsilon}_1 \sim \mathcal{N}(0,1)} \Big[\|J_{\bar{g}_{L,i}}^{(b)}J_{\bar{f}_{i,L}}^{(b)}\ve{\epsilon}_1 - \ve{\epsilon}_1 \|^2_2\Big]
    + \E_{\ve{\epsilon}_2 \sim \mathcal{N}(0,1)}\Big[\lambda\|J_{\bar{g}_{L,i}}^{(b)}\ve{\epsilon}_2\|_2^2 \Big],
\end{align}
with $J_{\bar{g}_{L,i}}^{(b)} = \prod_{k=i}^{L-1}J_{g_{k}}^{(b)}$ the Jacobian of feedback mapping function $\bar{g}_{L,i}$ evaluated at $\ve{h}_k^{(b)}$, $k=i+1, .., L$, and $J_{\bar{f}_{i,L}}^{(b)}$ the Jacobian of the forward mapping function $\bar{f}_{i,L}$ evaluated at $\ve{h}_i^{(b)}$. The minimum of $\lim_{\sigma \xrightarrow{} 0} \Lagr_i^{\text{rec,diff}}$ is reached if for each batch sample holds that
\begin{align}\label{eq:rec_loss_Tikhonov_damping_app}
    J_{\bar{g}_{L,i}}^{(b)} = \prod_{k=i}^{L-1}J_{g_{k}}^{(b)} =  J_{\bar{f}_{i,L}}^{(b)T}\big(J_{\bar{f}_{i,L}}^{(b)}J_{\bar{f}_{i,L}}^{(b)T} + \lambda I\big)^{-1}.
\end{align}
When the regularization parameter $\lambda$ is driven in limit to zero, this results in $J_{\bar{g}_{L,i}}^{(b)} = J_{\bar{f}_{i,L}}^{(b)\dagger}$.

\end{theorem}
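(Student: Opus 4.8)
The plan is to reduce each of the two squared-norm terms in $\Lagr_i^{\text{rec,diff}}$ to a quadratic form in the relevant Jacobians through a first-order Taylor expansion, exploiting the defining fixed-point property of the difference correction, and then to minimize the resulting Frobenius-norm expression in closed form. The starting observation is the elementary identity that drives everything: since $g_i^{\text{diff}}\big(f_{i+1}(\ve h_i),f_{i+1}(\ve h_i),\ve h_i\big)=\ve h_i$, an induction over the recursion defining $\bar g_{L,i}^{\text{diff}}$ gives $\bar g_{L,i}^{\text{diff}}(\ve h_L^{(b)},\ve h_L^{(b)},\ve h_i^{(b)})=\ve h_i^{(b)}$, and because $\bar f_{i,L}(\ve h_i^{(b)})=\ve h_L^{(b)}$, also $\bar g_{L,i}^{\text{diff}}\big(\bar f_{i,L}(\ve h_i^{(b)}),\ve h_L^{(b)},\ve h_i^{(b)}\big)=\ve h_i^{(b)}$. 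Hence, writing $\phi_b(\ve\delta):=\bar g_{L,i}^{\text{diff}}\big(\bar f_{i,L}(\ve h_i^{(b)}+\ve\delta),\ve h_L^{(b)},\ve h_i^{(b)}\big)-(\ve h_i^{(b)}+\ve\delta)$ and $\psi_b(\ve\delta):=\bar g_{L,i}^{\text{diff}}(\ve h_L^{(b)}+\ve\delta,\ve h_L^{(b)},\ve h_i^{(b)})-\ve h_i^{(b)}$, we have $\phi_b(0)=\psi_b(0)=0$, so $\phi_b(\sigma\ve\epsilon_1)=\sigma J_{\phi_b}\ve\epsilon_1+\mathcal O(\sigma^2)$ and $\psi_b(\sigma\ve\epsilon_2)=\sigma J_{\psi_b}\ve\epsilon_2+\mathcal O(\sigma^2)$.

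Next I would compute the two Jacobians at $\ve\delta=0$. Differentiating the recursion $\bar g_{L,i}^{\text{diff}}(\hat{\ve h}_L,\ve h_L,\ve h_i)=g_i^{\text{diff}}\big(\bar g_{L,i+1}^{\text{diff}}(\hat{\ve h}_L,\ve h_L,\ve h_{i+1}),\ve h_{i+1},\ve h_i\big)$ with respect to its first argument, and using that the correction term $\ve h_j-g_j(f_{j+1}(\ve h_j))$ does not depend on that argument, an induction shows that $\partial_1\bar g_{L,i}^{\text{diff}}$, evaluated where the first argument equals $\ve h_L^{(b)}$ — so that all intermediate arguments collapse to the clean forward activations $\ve h_{k+1}^{(b)}$ — equals $\prod_{k=i}^{L-1}J_{g_k}^{(b)}=J_{\bar g_{L,i}}^{(b)}$. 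By the chain rule, together with $\partial_{\ve\delta}\bar f_{i,L}(\ve h_i^{(b)}+\ve\delta)\big|_{\ve\delta=0}=J_{\bar f_{i,L}}^{(b)}$, this yields $J_{\phi_b}=J_{\bar g_{L,i}}^{(b)}J_{\bar f_{i,L}}^{(b)}-I$ and $J_{\psi_b}=J_{\bar g_{L,i}}^{(b)}$. Substituting $\ve\delta=\sigma\ve\epsilon$, squaring, and using that the prefactor $1/\sigma^2$ in \eqref{eq:diff_rec_loss} exactly cancels the leading $\sigma^2$, the $\mathcal O(\sigma^2)$ Taylor remainders contribute $\mathcal O(\sigma)$; exchanging limit and expectation (dominated convergence, since the activations are smooth and the remainder is bounded by a polynomial in $\|\ve\epsilon\|$, which is Gaussian-integrable) and summing over $b$ produces exactly the claimed expression for $\lim_{\sigma\to0}\Lagr_i^{\text{rec,diff}}$.

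Finally, for the minimizer, I would use $\E_{\ve\epsilon\sim\mathcal N(0,1)}[\ve\epsilon\ve\epsilon^T]=I$ to rewrite $\E[\|A\ve\epsilon\|_2^2]=\Tr(AA^T)=\norm{A}_F^2$, so that the limiting loss equals $\sum_{b=1}^B\big(\norm{J_{\bar g_{L,i}}^{(b)}J_{\bar f_{i,L}}^{(b)}-I}_F^2+\lambda\norm{J_{\bar g_{L,i}}^{(b)}}_F^2\big)$. Each summand is a convex (strictly so for $\lambda>0$) quadratic in the matrix $J^{(b)}:=J_{\bar g_{L,i}}^{(b)}$ — a Tikhonov-regularized least-squares problem; setting its gradient $2\big(J^{(b)}J_{\bar f_{i,L}}^{(b)}-I\big)J_{\bar f_{i,L}}^{(b)T}+2\lambda J^{(b)}$ to zero gives the stated $J^{(b)}=J_{\bar f_{i,L}}^{(b)T}\big(J_{\bar f_{i,L}}^{(b)}J_{\bar f_{i,L}}^{(b)T}+\lambda I\big)^{-1}$, and convexity makes this the global minimizer of that summand; hence, whenever the feedback parameterization can realize this value simultaneously for all $b$, it minimizes the sum. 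Letting $\lambda\to0$ and invoking the standard Tikhonov characterization of the pseudo-inverse, $J_{\bar f_{i,L}}^{(b)T}\big(J_{\bar f_{i,L}}^{(b)}J_{\bar f_{i,L}}^{(b)T}+\lambda I\big)^{-1}\to J_{\bar f_{i,L}}^{(b)\dagger}$, completes the proof. The main obstacle is the bookkeeping in the middle step: tracking the nested difference-corrected composition carefully enough to see that every $J_{g_k}^{(b)}$ ends up evaluated at the noise-free forward activation $\ve h_{k+1}^{(b)}$ — which is precisely what the difference correction buys, letting the expectation run over white noise while $J_{\bar f_{i,L}}$ stays pinned to the data point; the remaining analytic and algebraic steps are routine.
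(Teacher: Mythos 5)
Your proposal is correct and follows essentially the same route as the paper's proof: first-order Taylor expansions anchored by the fixed-point property of the difference correction (so every $J_{g_k}^{(b)}$ is evaluated at the clean activation $\ve{h}_{k+1}^{(b)}$), reduction of the Gaussian expectations to Frobenius norms via $\E[\ve{\epsilon}\ve{\epsilon}^T]=I$, and zeroing the gradient of the resulting Tikhonov-regularized quadratic to obtain $J_{\bar{f}_{i,L}}^{(b)T}(J_{\bar{f}_{i,L}}^{(b)}J_{\bar{f}_{i,L}}^{(b)T}+\lambda I)^{-1}$ and, as $\lambda\to 0$, the pseudo-inverse. Your added care about dominated convergence and the per-sample realizability caveat matches (and slightly sharpens) the paper's treatment.
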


\begin{proof}
The proof consists of two main parts. First, we show how $\Lagr_i^{\text{rec,diff}}$ looks like in the limit of $\sigma \xrightarrow{} 0$. Second, we investigate the minimum of $\Lagr_i^{\text{rec,diff}}$. 

For proving the first part, we do the following first-order Taylor expansions.
\begin{align}
    \bar{f}_{i,L}(\ve{h}_i^{(b)} + \sigma \ve{\epsilon}_1) &= \ve{h}_L^{(b)} + \sigma J_{\bar{f}_{i,L}}^{(b)}\ve{\epsilon}_1 + \mathcal{O}(\sigma^2) \\
    \bar{g}^{\text{diff}}_{L,i}(\bar{f}_{i,L}(\ve{h}_i^{(b)} + \sigma \ve{\epsilon}_1), \ve{h}_L^{(b)},\ve{h}_i^{(b)}) &=  \bar{g}_{L,i}\big(\bar{f}_{i,L}(\ve{h}_i^{(b)} + \sigma \ve{\epsilon}_1)\big) - \bar{g}_{L,i}(\ve{h}_L^{(b)}) + \ve{h}_i^{(b)}\\
    &= \sigma J_{\bar{g}_{L,i}}^{(b)}J_{\bar{f}_{i,L}}^{(b)}\ve{\epsilon}_1 + \ve{h}_i^{(b)} + \mathcal{O}(\sigma^2)\\
    \bar{g}^{\text{diff}}_{L,i}(\ve{h}_L^{(b)} + \sigma \ve{\epsilon}_2, \ve{h}_L^{(b)},\ve{h}_i^{(b)}) &= \bar{g}_{L,i}(\ve{h}_L^{(b)} + \sigma \ve{\epsilon}_2) - \bar{g}_{L,i}(\ve{h}_L^{(b)}) + \ve{h}_i^{(b)}\\
    &= \sigma J_{\bar{g}_{L,i}}^{(b)}\ve{\epsilon}_2 + \ve{h}_i^{(b)} + \mathcal{O}(\sigma^2)
\end{align}
with $J_{\bar{g}_{L,i}}^{(b)} = \prod_{k=i}^{L-1}J_{g_{k}}^{(b)}$ the Jacobian of feedback mapping function $\bar{g}_{L,i}$ with respect to $\ve{h}_L$ evaluated at $\ve{h}_k^{(b)}$, $k=i+1, .., L$, and $J_{\bar{f}_{i,L}}^{(b)}$ the Jacobian of the forward mapping function $\bar{f}_{i,L}^{(b)}$ evaluated at $\ve{h}_i^{(b)}$. Filling these Taylor approximations into the definition of $\Lagr_i^{\text{rec,diff}}$ and taking the limit of $\sigma \xrightarrow{} 0$ concludes the first part of the proof.
\begin{align}
    \lim_{\sigma \xrightarrow{} 0} \Lagr_i^{\text{rec,diff}} = \sum_{b=1}^B \E_{\ve{\epsilon}_1 \sim \mathcal{N}(0,1)} \Big[\|J_{\bar{g}_{L,i}}^{(b)}J_{\bar{f}_{i,L}}^{(b)}\ve{\epsilon}_1 - \ve{\epsilon}_1 \|^2_2\Big]
    + \E_{\ve{\epsilon}_2 \sim \mathcal{N}(0,1)}\Big[\lambda\|J_{\bar{g}_{L,i}}^{(b)}\ve{\epsilon}_2\|_2^2 \Big],
\end{align}
We see that $\Lagr_i^{\text{rec,diff}}$ is a sum of positive terms. The absolute minimum of $\Lagr_i^{\text{rec,diff}}$ can thus be reached if for each batch sample $b$, $J_{\bar{g}_{L,i}}^{(b)}$ is independent from $J_{\bar{g}_{L,i}}^{(c \neq b)}$ and when we minimize each positive term separately. When $\bar{g}_{L,i}$ is parameterized by a finite amount of parameters, $J_{\bar{g}_{L,i}}^{(b)}$ is not independent from $J_{\bar{g}_{L,i}}^{(c \neq b)}$ and we will discuss the implications of this assumption below this proof. Each positive term in $\lim_{\sigma \xrightarrow{} 0}\Lagr_i^{\text{rec,diff}}$ is given by 
\begin{align}
    \lim_{\sigma \xrightarrow{} 0}\Lagr_i^{\text{rec,diff},(b)} = \E_{\ve{\epsilon}_1 \sim \mathcal{N}(0,1)} \Big[\|J_{\bar{g}_{L,i}}^{(b)}J_{\bar{f}_{i,L}}^{(b)}\ve{\epsilon}_1 - \ve{\epsilon}_1 \|^2_2\Big]
    + \E_{\ve{\epsilon}_2 \sim \mathcal{N}(0,1)}\Big[\lambda\|J_{\bar{g}_{L,i}}^{(b)}\ve{\epsilon}_2\|_2^2 \Big]
\end{align}
for which we introduce the following shorthand notation:
\begin{align} \label{eq:drl_elem_shorthand}
    \lim_{\sigma \xrightarrow{} 0}\Lagr_i^{\text{rec,diff},(b)} = \E_{\ve{\epsilon}_1 \sim \mathcal{N}(0,1)} \Big[\|J_gJ_f\ve{\epsilon}_1 - \ve{\epsilon}_1 \|^2_2\Big]
    + \E_{\ve{\epsilon}_2 \sim \mathcal{N}(0,1)}\Big[\lambda\|J_g\ve{\epsilon}_2\|_2^2 \Big]
\end{align}

Next, we seek an expression for its gradient with respect to $J_g$. For that, we rewrite equation \eqref{eq:drl_elem_shorthand} with traces and use the fact that $\ve{\epsilon}_i$ are white noise. 
\begin{align}
    \lim_{\sigma \xrightarrow{} 0}\Lagr_i^{\text{rec,diff},(b)} =& \E_{\ve{\epsilon}_1 \sim \mathcal{N}(0,1)} \Big[\Tr\big(\ve{\epsilon}_1\ve{\epsilon}_1^T\big) + \Tr\big(J_gJ_f\ve{\epsilon}_1\ve{\epsilon}_1^TJ_f^TJ_g^T\big) - 2\Tr\big( J_gJ_f\ve{\epsilon}_1\ve{\epsilon}_1^T \big)\Big] + .. \nonumber\\
    & .. \E_{\ve{\epsilon}_2 \sim \mathcal{N}(0,1)} \Big[ \lambda \Tr\big(J_g\ve{\epsilon}_2\ve{\epsilon}_2^TJ_g^T\big)\Big] \\
    =& \Tr\big(I\big) + \Tr\big(J_gJ_fJ_f^TJ_g^T\big) - 2\Tr\big( J_gJ_f\big) + \lambda\Tr\big(J_gJ_g^T\big)
\end{align}
The gradient with respect to $J_g$ is given by
\begin{align}
    \nabla_{J_g} \big(\lim_{\sigma \xrightarrow{} 0}\Lagr_i^{\text{rec,diff},(b)}\big) = 2 J_g\big(J_fJ_f^T + \lambda I\big) - 2J_f^T
\end{align}
Requiring that the gradient is zero gives the following optimality condition for $J_g$:
\begin{align}
    J_g^*\big(J_fJ_f^T + \lambda I\big) &= J_f^T \\
    J_g^* &= J_f^T \big(J_fJ_f^T + \lambda I\big)^{-1}.
\end{align}
$\big(J_fJ_f^T + \lambda I\big)$ is always invertible, as $J_fJ_f^T$ is positive semi-definite and $\lambda$ is strictly positive. Taking the limit of $\lambda \xrightarrow{} 0$, this results in 
\begin{align}
    \lim_{\lambda \xrightarrow{} 0} J_g^* = J_f^{\dagger}
\end{align}
This last step follows from the definition of the Moore-Penrose pseudo-inverse \citep{campbell2009generalized} and can be easily seen by taking the singular value decomposition of $J_f$
\end{proof}

\paragraph{The minimum of DRL in a parameterized setting.} Theorem \ref{theorem:drl} showed the absolute minimum of the difference reconstruction loss and proved that it is reached when for each batch sample holds that
\begin{align}\label{eq:rec_loss_thikonov_damping}
    J_{\bar{g}_{L,i}}^{(b)} = \prod_{k=i}^{L-1}J_{g_{k}}^{(b)} =  J_{\bar{f}_{i,L}}^{(b)T}\big(J_{\bar{f}_{i,L}}^{(b)}J_{\bar{f}_{i,L}}^{(b)T} + \lambda I\big)^{-1}
\end{align}
However, Theorem \ref{theorem:drl} did not cover whether this absolute minimum will be reached by an actual parameterized feedback path $g_i$. Two things prevent the feedback paths from reaching the exact absolute minimum of DRL. 

First, $J_{\bar{g}_{L,i}}^{(b)}$, for different samples $b$, will depend on the same limited amount of parameters of $g_i$. Hence, $J_{\bar{g}_{L,i}}^{(b)}$ cannot be treated independently from $J_{\bar{g}_{L,i}}^{(c\neq b)}$ and a parameter setting will be sought that brings $J_{\bar{g}_{L,i}}^{(b)}$ as close as possible to $J_{\bar{f}_{i,L}}^{(b)\dagger}$ for all batch samples $b$, but they will in general not be equal for each $b$. The more parameters $g_i$ has, the more capacity it has to approximate different $J_{\bar{g}_{L,i}}^{(b)}$ for different batch samples and hence the lower DRL it will reach after convergence. 

Second, in contracting networks (diminishing layer sizes), a second issue occurs. Since the network is contracting, different samples of $\ve{h}_i^{(b)}$ can map to the same output and $\ve{h}_{k>i}^{(b)}$. Hence, different $J_{\bar{f}_{i,L}}^{(b)}$ will correspond to the same $J_{\bar{g}_{L,i}}^{(b)}$, as $J_{\bar{g}_{L,i}}^{(b)}$ is solely evaluated at $\ve{h}_{k>i}^{(b)}$. Therefore, it is impossible for $J_{g_i}^{(b)}$ to approximate $J_{\bar{f}_{i,L}}^{(b)\dagger}$ for all $b$. This is especially troubling for direct feedback connections, as then $J_{\bar{g}_{L,i}}^{(b)}$ is solely evaluated at the output value, which is often of much lower dimension than the hidden layers. We can alleviate this issue by drawing inspiration from the synthetic gradient framework (\citep{jaderberg2017decoupled}). When we provide $\ve{h}_i$ as an extra input to $g_i(\hat{\ve{h}}_L, \ve{h}_i)$, the (direct) feedback connections can use this extra information to compute the hidden layer target or to reconstruct the corrupted hidden layer activation  $\ve{h}_i + \sigma \ve{\epsilon}$, thereby making it possible to let all $J_{\bar{f}_{i,L}}^{(b)}$ correspond to a different $J_{\bar{g}_{L,i}}^{(b)}$. As a concrete example, we can adjust the DDTP-RHL method from the experimental section to incorporate these recurrent feedback connections. This new \textit{DDTP-RHL(rec)} method is shown in figure \ref{fig:ddtp-rhl-rec}. In the DDTP-RHL method without recurrent feedback connections, the feedback path $g_i$ is parameterized as
\begin{align}
    g_i(\ve{h}_L) = \tanh\big(Q_i\tanh(R\ve{h}_L)\big)
\end{align}
If we add the recurrent feedback connections, this results in 
\begin{align}
    g_i^{\text{rec}}(\ve{h}_L, \ve{h}_i) = \tanh\big(Q_i\tanh(R\ve{h}_L) + S_i\ve{h}_i\big)
\end{align}
More specific, for propagating targets this would result in
\begin{align}
    g_i^{\text{rec}}(\hat{\ve{h}}_L, \ve{h}_i) = \tanh\big(Q_i\tanh(R\hat{\ve{h}}_L) + S_i\ve{h}_i\big)
\end{align}
In the context of the DRL, this results in
\begin{align}
    g_i^{\text{rec}}\big(\bar{f}_{i,L}(\ve{h}_i + \sigma \ve{\epsilon}), \ve{h}_i\big) = \tanh\Big(Q_i\tanh\big(R\bar{f}_{i,L}(\ve{h}_i + \sigma \ve{\epsilon})\big) + S_i\ve{h}_i\Big)
\end{align}
Note that the non-corrupted sample $\ve{h}_i$ is used in the recurrent feedback connection. 

\begin{figure}[h]
    \centering
    \includegraphics[width=0.4\textwidth]{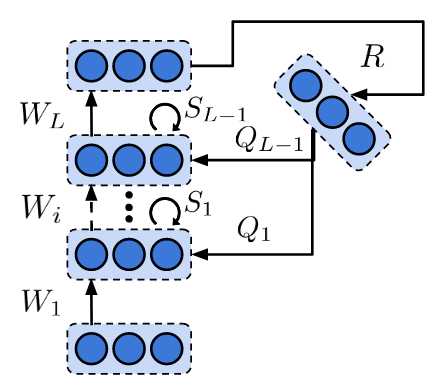}
    \caption{Schematic of the DDTP-RHL method with recurrent feedback connections.}
    \label{fig:ddtp-rhl-rec}
\end{figure}

\paragraph{Approximating Tikhonov regularization by weight decay.} The difference reconstruction loss (DRL) introduces a Tikhonov regularization term by corrupting the output activation with white noise and propagating this output backward:
\begin{align}
    \mathcal{R}_i = \frac{1}{\sigma^2} \sum_{b=1}^B\E_{\ve{\epsilon}_2 \sim \mathcal{N}(0,1)}\Big[\lambda\|\bar{g}_{L,i}^{\text{diff}}(\ve{h}^{(b)}_L + \sigma\ve{\epsilon}_2, \ve{h}_L^{(b)}, \ve{h}_i^{(b)}) - \ve{h}_i^{(b)} \|_2^2 \Big]
\end{align}
However, from a practical side, we would like to remove the need for a second noise source $\ve{\epsilon}_2$ on the output, as this introduces the need for an extra separate phase for training the feedback parameters, because $\ve{\epsilon}_2$ may not interfere with $\ve{\epsilon}_1$. Hence, we show that this regularization term can be approximated by simple weight decay on the feedback parameters, removing the need for an extra phase. From Theorem \ref{theorem:drl} follows that, for $\sigma \xrightarrow{} 0 $, this regularizer term can be written as
\begin{align}
    \lim_{\sigma \xrightarrow{} 0} \mathcal{R}_i = \sum_{b=1}^B \|J_{\bar{g}_{L,i}}^{(b)}\|_F^2
\end{align}
with $\|.\|_F$ the Frobenius norm. If $g_i$ is parameterized as $t_i(Q_i\hat{\ve{h}}_{i+1})$, and using the fact the Frobenius norm is submultiplicative (following from the Cauchy-Schwarz inequality), we can define the following upperbound for $\mathcal{R}_i$.
\begin{align}
    \lim_{\sigma \xrightarrow{} 0} \mathcal{R}_i &= \sum_{b=1}^B \|\prod_{k=i}^{L-1}D_{t_k}^{(b)} Q_k\|_F^2 \\
    &\leq \sum_{b=1}^B \prod_{k=i}^{L-1}\|D_{t_k}^{(b)}\|_F^2 \| Q_k\|_F^2\\
    &\leq B \prod_{k=i}^{L-1}n_i \| Q_k\|_F^2
\end{align}
with $D_{t_k}^{b}$ the diagonal matrix with the derivatives of $t_i$, evaluated at $Q_i\ve{h}_{i+1}^{(b)}$ and $n_i$ the size of layer $i$. For the last step, we used the assumption that the derivative of $t_i$ lies in the interval $[0;1]$, which is true for the conventional non-linearities used in deep learning. Applying weight decay results in adding the following regularizer term to the DRL:
\begin{align}
    \mathcal{R}^{\text{wd}} = \sum_{k=1}^{L-1} \| Q_k\|_F^2
\end{align}
Hence, we see that weight decay penalizes the same terms $\| Q_k\|_F^2$ appearing in the upper bound of $\mathcal{R}_i$. The product is replaced by a sum, so the interactions between $\| Q_k\|_F^2$ will be different for $\mathcal{R}^{\text{wd}}$, however, a similar regularizing effect can be expected, as it puts pressure on the same norms $\| Q_k\|_F^2$. Note that for linear direct feedback connections (corresponding to DDTP-linear in the experiments), $J_{\bar{g}_{L,i}}^{(b)} = Q_i$, hence $\mathcal{R}_i$ can be replaced exactly by $\mathcal{R}^{\text{wd}}$.

\paragraph{Interpretation of DRL in a noisy environment.} The difference reconstruction loss is based on a noise-corrupted sample that is propagated forward to the network output and afterwards backpropagated to layer $i$ again, with a reconstruction correction based on the non-corrupted layer activations $\ve{h}_i$. As example, consider the DRL for layer $L-1$ (without the regularization term for simplicity, as we approximate it in practice by weight decay).
\begin{align}
    \Lagr_{L-1}^{\text{rec,diff}} = \frac{1}{\sigma^2} \sum_{b=1}^B &\E_{\ve{\epsilon} \sim \mathcal{N}(0,1)} \Big[ \|g_{L-1}\big(f_{L}(\tilde{\ve{h}}^{(b)}_{L-1})\big) + \ve{h}_{L-1}^{(b)} - g_{L-1}(\ve{h}_L^{(b)}) - (\tilde{\ve{h}}^{(b)}_{L-1})\|^2_2\Big]
\end{align}
with $\tilde{\ve{h}}_{L-1} = \ve{h}_{L-1} + \sigma \ve{\epsilon}$. Hence, the difference reconstruction loss needs both the non-corrupted activations $\ve{h}_{L-1}$ and $\ve{h}_L$, and the corrupted activations $\tilde{\ve{h}}_{L-1}$ and $f_{L}(\tilde{\ve{h}}_{L-1})$. At first sight, it seems dubious whether a biological neuron could keep track of its noise-corrupted and non-corrupted activation at the same time. However, for small $\sigma$, the non-corrupted activations are approximately equal to the time average of the noise-corrupted activations. For $\ve{h}_{L-1}$ it follows directly from the fact that $\ve{\epsilon}$ is white noise.
\begin{align}
    \E_{\ve{\epsilon} \sim \mathcal{N}(0,1)} \big[\ve{h}_{L-1} + \sigma \ve{\epsilon}\big] = \ve{h}_{L-1}
\end{align}
For small $\sigma$, we can do a first-order Taylor approximation for $f_{L}(\tilde{\ve{h}}_{L-1})$.
\begin{align}
    \E_{\ve{\epsilon} \sim \mathcal{N}(0,1)} \big[f_{L}(\tilde{\ve{h}}_{L-1})\big] = \E_{\ve{\epsilon} \sim \mathcal{N}(0,1)} \big[f_{L}(\ve{h}_{L-1}) + \sigma J_{f_L} \ve{\epsilon} + \mathcal{O}(\sigma^2) \big] \approx \ve{h}_L 
\end{align}
Hence, the time averages can be used for the reconstruction correction term $\ve{h}_{L-1} - g_{L-1}(\ve{h}_L)$, such that the neurons only need to represent the noise-corrupted activations, if there exists a mechanism to keep track of the time average of its activations. Similar arguments hold for $\Lagr_{i}^{\text{rec,diff}}$ of the other layers of the network. Similar to \citet{lee2015difference}, \citet{akrout2019deep} and \citet{kunin2020two}, a single layer $i$ needs to be noise-corrupted with additive white noise, while all other layers cannot have additive noise of their own, for training the feedback parameters of layer $i$.  \citet{lee2015difference} \citet{akrout2019deep} and \citet{kunin2020two} allow for training the feedback parameters of half of the network layers simultaneously, while keeping the other layers noise-free, which is less restrictive, but still needs the same fundamental mechanism to keep certain layers noise-free. This is a serious biological constraint, and future research should aim to adjust our difference reconstruction loss such that it can operate in an environment where all layers are noisy at the same time, while keeping its favorable properties.

\paragraph{Feedback weight updates.}
From the biological perspective, it is interesting to know which feedback parameter updates result from the difference reconstruction loss. We consider the difference reconstruction loss with weight decay as regularizer, as this is the version we use in the practical implementation of the methods. Furthermore we use a single noise sample $\ve{\epsilon}$ to approximate the expectation, as is also done in our practical implementation. For simplicity, we consider the DDTP-linear method, which has direct linear feedback connections $Q_i$. In this case, the difference reconstruction loss for a minibatch size of 1 is defined as
\begin{align}\label{eq:diff_rec_loss_app}
    \Lagr_i^{\text{rec,diff}} = \frac{1}{\sigma^2}  \|Q_i\bar{f}_{i,L}(\tilde{\ve{h}}_i) - Q_i\bar{f}_{i,L}(\tilde{\ve{h}}_i) + \ve{h}_i - \tilde{\ve{h}}_i\|^2_2 + \lambda\|Q_i\|_F^2
\end{align}
with $\tilde{\ve{h}}_i = \ve{h}_i + \sigma \ve{\epsilon}$. The resulting parameter update is given by
\begin{align}
    \Delta Q_i = -\frac{\eta}{2} \frac{\partial \Lagr_i^{\text{rec,diff}}}{\partial Q_i} = -\eta \big(\tilde{\ve{h}}_i^{\text{fb}} - \ve{h}_i^{\text{fb}} - (\tilde{\ve{h}}_i - \ve{h}_i)\big)(\tilde{\ve{h}}_L - \ve{h}_L)^T -\eta\lambda Q_i
\end{align}
with $\tilde{\ve{h}}_L = \bar{f}_{i,L}(\tilde{\ve{h}}_i)$, $\tilde{\ve{h}}_i^{\text{fb}} = Q_i\bar{f}_{i,L}(\tilde{\ve{h}}_i)$ and $\ve{h}_i^{\text{fb}} = Q_i\bar{f}_{i,L}(\tilde{\ve{h}}_i)$. We see that the parameter update is fully local to the individual neurons of each layer. The parameter consists of three differences between a noisy activation (e.g. $\tilde{\ve{h}}_i$) and a time-average or base activation (e.g. $\ve{h}_i$). $\ve{h}_i^{\text{fb}}$ and $\tilde{\ve{h}}_i^{\text{fb}}$ could represent the activations of a separate feedback neuron or a segregated dendritic compartment of the considered neuron \citep{sacramento2018dendritic, guerguiev2017towards}. $\tilde{\ve{h}}_i$ and $\ve{h}_i$ could represent the activation of the considered neuron, or the activation of a segregated dendritic compartment of the considered neuron. Finally, $\tilde{\ve{h}}_L$ and $\ve{h}_L$ represent the pre-synaptic activation of the feedback weights $Q_i$. Similar interpretations holds for other parameterizations of the feedback functions $g_i$.

\subsection{Proofs and extra information for section 3.4} \label{section:sec_3_4}

\begin{condition}[Gauss-Newton Target method]\label{condition:gn_targets_app}
The network is trained by GN targets: each hidden layer target is computed by
\begin{align}
     \Delta \ve{h}_i^{(b)} \triangleq \hat{\ve{h}}_{i}^{(b)} - \ve{h}_i^{(b)} = - \hat{\eta} J_{\bar{f}_{i,L}}^{(b)\dagger}\ve{e}_L^{(b)},
\end{align}
after which the network parameters of each layer $i$ are updated by a gradient descent step on its corresponding local mini-batch loss $\Lagr_i = \sum_b \|\Delta \ve{h}_i^{(b)}\|^2_2$, while considering $\hat{\ve{h}}_i$ fixed.
\end{condition}

\begin{theorem}[Theorem 5 in main manuscript] \label{theorem:grad_outputspace_app}
Consider a contracting linear multilayer network ($n_1\geq n_2 \geq .. \geq n_L$) trained by GNT according to Condition \ref{condition:gn_targets_app}. For a mini-batch size of 1, the parameter updates $\Delta W_i$ are minimum-norm updates under the constraint that $\ve{h}_L^{(m+1)} = \ve{h}_L^{(m)} - c^{(m)}\ve{e}^{(m)}_L$ and when $\Delta W_i$ is considered independent from $\Delta W_{j\neq i}$, with $c^{(m)}$ a positive scalar and $m$ indicating the iteration.
\end{theorem}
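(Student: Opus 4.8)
The plan is to specialize the GNT rule to the linear setting, write the resulting forward update in closed form, and then verify by a direct computation that it coincides with the minimum-norm solution of the stated constrained problem. For a linear network $\ve{h}_i = W_i\ve{h}_{i-1}$ one has $J_{\bar{f}_{i,L}} = W_L W_{L-1}\cdots W_{i+1} =: J_i \in \mathbb{R}^{n_L\times n_i}$, and since the architecture is contracting, $n_i \geq n_L$, so for generic weights $J_i$ has full row rank and $J_iJ_i^{\dagger} = I_{n_L}$. I would note that this genericity is also precisely what makes the constraint $\ve{h}_L^{(m+1)} = \ve{h}_L^{(m)} - c^{(m)}\ve{e}_L^{(m)}$ attainable at all. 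A gradient step of rate $\eta$ on the local loss $\Lagr_i = \|\hat{\ve{h}}_i - W_i\ve{h}_{i-1}\|_2^2$ with $\hat{\ve{h}}_i$ frozen gives $\Delta W_i = 2\eta\,\Delta\ve{h}_i\,\ve{h}_{i-1}^T$, and substituting $\Delta\ve{h}_i = -\hat{\eta}J_{\bar{f}_{i,L}}^{\dagger}\ve{e}_L$ from Condition \ref{condition:gn_targets_app} yields the closed form $\Delta W_i = -2\eta\hat{\eta}\,J_i^{\dagger}\ve{e}_L\,\ve{h}_{i-1}^T$.

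Next I would compute the effect of applying this update to $W_i$ alone, freezing all $W_{j\neq i}$. Linearity gives $\ve{h}_L^{(m+1)} = \ve{h}_L^{(m)} + J_i\,\Delta W_i\,\ve{h}_{i-1}$, and plugging in the closed form together with $J_iJ_i^{\dagger}=I$ produces $\ve{h}_L^{(m+1)} = \ve{h}_L^{(m)} - 2\eta\hat{\eta}\,\|\ve{h}_{i-1}\|_2^2\,\ve{e}_L$. Hence the constraint holds with the explicit positive scalar $c^{(m)} = 2\eta\hat{\eta}\,\|\ve{h}_{i-1}\|_2^2 > 0$; this is the value of $c^{(m)}$ against which minimality of the update must be measured.

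Then I would solve $\min_{\Delta W_i}\|\Delta W_i\|_F$ subject to $J_i\,\Delta W_i\,\ve{h}_{i-1} = -c^{(m)}\ve{e}_L$. Vectorizing, the constraint becomes $(\ve{h}_{i-1}^T\otimes J_i)\,\mathrm{vec}(\Delta W_i) = -c^{(m)}\ve{e}_L$, whose minimum-norm solution is $(\ve{h}_{i-1}^T\otimes J_i)^{\dagger}(-c^{(m)}\ve{e}_L)$. Using the Kronecker pseudo-inverse identity $(A\otimes B)^{\dagger} = A^{\dagger}\otimes B^{\dagger}$ and $(\ve{h}_{i-1}^T)^{\dagger} = \ve{h}_{i-1}/\|\ve{h}_{i-1}\|_2^2$, this un-vectorizes to $\Delta W_i^{\min} = -\tfrac{c^{(m)}}{\|\ve{h}_{i-1}\|_2^2}\,J_i^{\dagger}\ve{e}_L\,\ve{h}_{i-1}^T$. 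Substituting the value of $c^{(m)}$ found above collapses the prefactor to $-2\eta\hat{\eta}$, recovering exactly the GNT update from the first step, which closes the argument.

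I expect the main subtlety to be the rank/genericity point: the clean identity $J_iJ_i^{\dagger} = I_{n_L}$ underpins both the attainability of the constraint and the simplification of $c^{(m)}$, and it requires $J_i$ to have full row rank, which is generic for contracting architectures but not automatic (I would make this assumption explicit, or remark that otherwise the output moves only along the projection of $-\ve{e}_L$ onto $\mathrm{col}(J_i)$). A minor point to state carefully is the validity of $(\ve{h}_{i-1}^T\otimes J_i)^{\dagger} = (\ve{h}_{i-1}^T)^{\dagger}\otimes J_i^{\dagger}$, which follows from the (unconditional) Kronecker pseudo-inverse identity; the remaining steps are routine linear algebra.
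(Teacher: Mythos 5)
Your proposal is correct and follows essentially the same route as the paper's proof: write the GNT update in closed form, verify it moves the output by $-c^{(m)}\ve{e}_L$ with an explicit positive $c^{(m)}$, and show it coincides with the minimum-norm solution of the per-layer constrained problem. The only difference is that you solve the constrained problem via vectorization and the Kronecker pseudo-inverse identity rather than the paper's Lagrangian/KKT computation, and you make explicit the full-row-rank assumption behind $J_iJ_i^{\dagger}=I$ that the paper leaves implicit.
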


\begin{proof}
Let $J_i = \partial\ve{h}_{i+1}/\partial\ve{h}_i$ and $\hat{J}_i = J_{L-1}\dots J_i$, then $\Delta W_i = -\eta(J_{L-1}\dots J_{i})^{\dagger}\ve{e}_L\ve{h}_{i-1}^T = -\eta \hat{J}_i^{\dagger} \ve{e}_L\ve{h}_{i-1}^T$.
In a linear network the GNT method produces the following dynamic:
\begin{align}
\ve{h}_L^{(m+1)} &= \sum_{i=1}^L\hat{J}_i(W_i+\Delta W_i)\ve{h}_{i-1}^{(m)} = \sum_{i=1}^L\hat{J}_iW_i\ve{h}_{i-1}^{(m)} + \sum_{i=1}^L\hat{J}_i\Delta W_i\ve{h}_{i-1}^{(m)}\\
&= \ve{h}_L^{(m)} - \eta\sum_{i=1}^L\hat{J}_i\hat{J}_i^{\dagger}\ve{e}_L^{(m)}\ve{h}_{i-1}^{(m)T}\ve{h}_{i-1}^{(m)} = \ve{h}_L^{(m)} - \eta\ve{e}_L^{(m)}\sum_{i=1}^L\norm{\ve{h}_{i-1}^{(m)}}_2^2
\end{align}
Letting $c^{(m)} = \eta\sum_{i=1}^{L-1}\norm{\ve{h}_i^{(m)}}_2^2$ we get
\[
\ve{h}_L^{(m+1)} = \ve{h}_L^{(m)} - c^{(m)}\ve{e}_L^{(m)}
\]
To show that this update represents the minimum weight choice we solve the following optimization problem:
\begin{align}
\argmin_{\Delta W_i}\qquad &\sum_{i=1}^{L}\norm{\Delta W_i}_F^2\\
\text{s.t.}\qquad &\ve{h}_L^{(m+1)} = \ve{h}_L^{(m)} - c^{(m)}\ve{e}_L^{(m)}
\end{align}
If $\Delta W_i$ is considered independent of $\Delta W_{j\neq i}$ the above minimization problem can be split in $L$ optimization problems:
\begin{align}
\argmin_{\Delta W_i}\qquad &\norm{\Delta W_i}_F^2\\
\text{s.t.}\qquad &\hat{J}_i(W_i+\Delta W_i)\ve{h}_{i-1}^{(m)} = \hat{J}_iW_ih_{i-1}^{(m)} - c_i^{(m)}\ve{e}_L^{(m)}\\
& \iff \hat{J}_i\Delta W_i\ve{h}_{i-1}^{(m)} = - c^{(m)}_i\ve{e}_L^{(m)},
\end{align}
with $c^{(m)} = \sum_{i=1}^{L-1}c_i^{(m)}$ since we are working with a linear network. Note that the assumption that $\Delta W_i$ is independent of $\Delta W_{j\neq i}$ is similar to the block-diagonal approximation of the Gauss-Newton curvature matrix in Theorem \ref{theorem:4GN_app}. For ease of notation, we drop the iteration superscripts. We can now write the Lagrangian
\[
\Lagr = \norm{\Delta W_i}_F^2 + \ve{\lambda}^T\left(\hat{J}_i\Delta W_i\ve{h}_{i-1} + c_i\ve{e}_L\right)
\]
As this is a convex optimization problem, the optimal solution can be found by taking the derivatives of the Lagrangian (resulting in the Karush-Kuhn-Tucker conditions).
\begin{align}
&\frac{\partial \Lagr}{\partial W_i} = 2\Delta W_i + \hat{J}_i^T\ve{\lambda} \ve{h}_{i-1}^T = 0 \implies \Delta W_i^* = -\frac{1}{2}\hat{J}_i^T\ve{\lambda} \ve{h}_{i-1}^T\\
&\frac{\partial \Lagr}{\partial \ve{\lambda}} = \hat{J}_i\Delta W_i\ve{h}_{i-1} + c_i\ve{e}_L = -\frac{1}{2}\hat{J}_i\hat{J}_i^T\ve{\lambda}\norm{\ve{h}_{i-1}}_2^2 + c_i\ve{e}_L = 0 \\ &\implies \ve{\lambda}^* = \frac{2c_i}{\norm{\ve{h}_{i-1}}_2^2}(\hat{J}_i\hat{J}_i^T)^{-1}\ve{e}_L
\end{align}
Combining the two conditions
\[
\Delta W_i^* = -\frac{c_i}{\norm{\ve{h}_{i-1}}_2^2}\hat{J}_i^T(\hat{J}_i\hat{J}_i^T)^{-1}\ve{e}_L\ve{h}_{i-1}^T = -\frac{c_i}{\norm{\ve{h}_{i-1}}_2^2}\hat{J}_i^{\dagger}\ve{e}_L\ve{h}_{i-1}^T = -\frac{c_i}{\norm{\ve{h}_{i-1}}_2^2}\Delta W_i^{GNT}
\]
We see that $\Delta W_i^{GNT}$ is a positive scalar multiple of the minimum-norm update $\Delta W_i^*$, thereby concluding the proof.
\end{proof}

\paragraph{DTP has inefficient updates.}
In Theorem \ref{theorem:grad_outputspace_app}, we showed that GNT updates on linear networks push the output activation in the negative gradient direction and can be considered minimum-norm in doing so. In Corollary \ref{cor:dtp_inefficient} below, we show that vanilla DTP updates also push the output activation in the negative gradient direction, but are not minimum-norm. This is due to the fact that $\Delta \vec{W}_i^{DTP}$ has components in the null-space of $\partial \ve{h}_L / \partial \vec{W_i}$. These null-space components do not contribute to any change in the output of the network and can thus be considered useless. Furthermore, these components will most likely interfere with the parameter updates of other training samples, thereby impairing the training process. The origin of these null-space components lies in the fact that the feedback parameters are trained by layer-wise reconstruction losses. Hence, under the right conditions, each target update $\Delta \ve{h}_i^{DTP}$ is minimum-norm in pushing the activation $\ve{h}_{i+1}$ of the layer on top towards its target $\hat{\ve{h}}_{i+1}$ (which is reflected in $Q_i^*=W_{i+1}^{\dagger}$ under the layer-wise reconstruction loss, as discussed in section \ref{section:sec_3_2}). However, this layer-wise minimum-norm characteristic does not translate to being minimum-norm for pushing the output towards its target. Indeed, a target update $\Delta \ve{h}_i^{DTP}$ that is minimum-norm for pushing $\ve{h}_{i+1}$ towards $\hat{\ve{h}}_{i+1}$ is not in general minimum-norm for pushing $\ve{h}_{L}$ towards $\hat{\ve{h}}_{L}$ because $\big[\prod_{k=L}^{i+1} W_k\big]^{\dagger}$ is in general not factorizable as $\prod_{k=i+1}^L W_k^{\dagger}$. This result helps explaining why DTP cannot scale to more difficult datasets \citep{bartunov2018assessing}. The GNT updates in contrast, have no components in the null-space of $\partial \ve{h}_L / \partial \vec{W_i}$. For Corollary \ref{cor:dtp_inefficient}, we assumed that white noise is used for the layer-wise reconstruction losses. This assumption is approximately true in practical use-cases of DTP training \citep{lee2015difference, bartunov2018assessing}, as the authors corrupt the layer activations $\ve{h}_i$ with white noise with a standard deviation in the same order of magnitude as the layer activations, for computing the reconstruction loss.

\begin{corollary}\label{cor:dtp_inefficient}
Consider a contracting linear multilayer network trained by DTP where white noise is used to train the feedback parameters. For a mini-batch size of 1, the resulting forward parameter update pushes the current output activation along the negative gradient direction $\ve{e}_L = \partial \Lagr / \partial \ve{h}_L$ in the output space.
\begin{align} \label{eq:gradient_output_space_dtp}
    \ve{h}_L^{(m+1)} = \ve{h}_L^{(m)} - c^{(m)}\ve{e}_L^{(m+1)}, 
\end{align}
with $c^{(m)}$ a positive scalar. Furthermore,  $\|\Delta W_i^{DTP}\|_F \geq \|\Delta W_i^{GNT}\|_F$ for all layers $i$, with $\Delta W_i^{GNT}$ the parameter updates resulting from the GNT method as described in Condition \ref{condition:gn_targets_app}.
\end{corollary}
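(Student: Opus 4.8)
The plan is to assemble pieces that are already available. From Section~\ref{section:sec_3_2}, for a contracting linear network whose feedback weights are trained on the layer-wise reconstruction loss with white noise, the feedback optimum is $Q_i^\ast = W_{i+1}^\dagger$, so by Lemma~\ref{lemma:taylor_approx_teaching_signal_DTP_app} (which is \emph{exact}, not merely first order, in the linear case) the DTP target update is $\Delta\ve{h}_i^{DTP} = -\hat\eta\,\tilde W_i\,\ve{e}_L$, where I write $\tilde W_i \triangleq W_{i+1}^\dagger W_{i+2}^\dagger\cdots W_L^\dagger$. A gradient step on the local loss $\Lagr_i = \|\hat{\ve{h}}_i - W_i\ve{h}_{i-1}\|_2^2$ (minibatch size $1$) then gives $\Delta W_i^{DTP} = \eta\,\Delta\ve{h}_i^{DTP}\ve{h}_{i-1}^T = -\eta\hat\eta\,\tilde W_i\ve{e}_L\ve{h}_{i-1}^T$, and correspondingly, with $\hat J_i \triangleq \partial\ve{h}_L/\partial\ve{h}_i = W_LW_{L-1}\cdots W_{i+1}$, $\Delta W_i^{GNT} = -\eta\hat\eta\,\hat J_i^\dagger\ve{e}_L\ve{h}_{i-1}^T$, exactly as in the proof of Theorem~\ref{theorem:grad_outputspace_app}.

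The key lemma I would establish is the telescoping identity $\hat J_i\tilde W_i = I$. Since the network is contracting, each $W_k\in\mathbb{R}^{n_k\times n_{k-1}}$ with $n_{k-1}\ge n_k$ has (generically) full row rank, hence $W_kW_k^\dagger = I_{n_k}$; peeling $W_LW_{L-1}\cdots W_{i+1}W_{i+1}^\dagger W_{i+2}^\dagger\cdots W_L^\dagger$ from the inside outward collapses it to the identity. Two consequences matter: $\ve{e}_L$ lies in the range of $\hat J_i$, and $\tilde W_i\ve{e}_L$ is a particular solution of $\hat J_i\ve{x} = \ve{e}_L$, while $\hat J_i^\dagger\ve{e}_L$ is the minimum-norm solution of that same system (it is the component of the solution set lying in $(\ker\hat J_i)^\perp$).

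For the first claim I would substitute $\Delta W_i^{DTP}$ into the linearized output recursion used in Theorem~\ref{theorem:grad_outputspace_app}, $\ve{h}_L^{(m+1)} = \ve{h}_L^{(m)} + \sum_{i=1}^L\hat J_i\Delta W_i^{DTP}\ve{h}_{i-1}^{(m)}$; each summand equals $-\eta\hat\eta\,\hat J_i\tilde W_i\ve{e}_L^{(m)}\|\ve{h}_{i-1}^{(m)}\|_2^2 = -\eta\hat\eta\,\ve{e}_L^{(m)}\|\ve{h}_{i-1}^{(m)}\|_2^2$ by the telescoping identity, so $\ve{h}_L^{(m+1)} = \ve{h}_L^{(m)} - c^{(m)}\ve{e}_L^{(m)}$ with $c^{(m)} = \eta\hat\eta\sum_{i=1}^L\|\ve{h}_{i-1}^{(m)}\|_2^2 > 0$; because the loss is $L_2$ and the network linear, $\ve{e}_L^{(m+1)} = (1-c^{(m)})\ve{e}_L^{(m)}$, which for a small enough learning rate ($c^{(m)}<1$) lets me rewrite the step in the $\ve{e}_L^{(m+1)}$ form of \eqref{eq:gradient_output_space_dtp} with a relabelled positive scalar. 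For the second claim, both updates are rank one with the same right factor $\ve{h}_{i-1}^T$, so $\|\Delta W_i^{DTP}\|_F = \eta\hat\eta\,\|\tilde W_i\ve{e}_L\|_2\|\ve{h}_{i-1}\|_2$ and likewise with $\hat J_i^\dagger\ve{e}_L$ for GNT. Writing $\tilde W_i\ve{e}_L = \hat J_i^\dagger\ve{e}_L + \ve{n}$, the difference $\ve{n}$ lies in $\ker\hat J_i$ (both vectors map to $\ve{e}_L$) and is orthogonal to $\hat J_i^\dagger\ve{e}_L$, so $\|\tilde W_i\ve{e}_L\|_2^2 = \|\hat J_i^\dagger\ve{e}_L\|_2^2 + \|\ve{n}\|_2^2 \ge \|\hat J_i^\dagger\ve{e}_L\|_2^2$, hence $\|\Delta W_i^{DTP}\|_F \ge \|\Delta W_i^{GNT}\|_F$; the same decomposition identifies $\ve{n}\ve{h}_{i-1}^T$ as the null-space component of $\Delta W_i^{DTP}$ with respect to $\Delta W_i\mapsto\hat J_i\Delta W_i\ve{h}_{i-1}$, which vanishes for GNT, matching Figure~\ref{fig:barplot_nullspace}.

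The step I expect to need the most care is the reduction ``contracting $\Rightarrow W_kW_k^\dagger = I$'': the hypothesis $n_1\ge\cdots\ge n_L$ only controls the shapes, so I would either add a genericity/full-rank assumption on the $W_k$ or argue that the conclusion holds off a measure-zero set of weight configurations; if some $W_k$ is rank-deficient, both the telescoping identity and the ``feasible-but-not-minimum-norm'' narrative break. A lesser subtlety is that, as in Theorem~\ref{theorem:grad_outputspace_app}, ``pushes the output along the negative gradient direction'' is taken relative to the first-order output recursion rather than the exact product $\prod_k(W_k+\Delta W_k)\ve{h}_0$; I would keep that convention.
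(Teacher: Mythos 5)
Your proposal is correct and follows essentially the same route as the paper's proof: the telescoping identity $W_L\cdots W_{i+1}W_{i+1}^\dagger\cdots W_L^\dagger = I$ for contracting networks gives the output-space recursion, and the orthogonal decomposition of the DTP update into its component in $(\ker\hat J_i)^\perp$ (which must coincide with the GNT component) plus a kernel component yields the norm inequality by Pythagoras. Your version is in fact slightly tighter in two places the paper glosses over: you exploit the common rank-one factor $\ve{h}_{i-1}^T$ to reduce the Frobenius-norm comparison to a vector-norm comparison, and you explicitly flag that the telescoping step needs full row rank of each $W_k$ (a genericity assumption the paper's proof silently makes) as well as the relabelling needed to express the step in terms of $\ve{e}_L^{(m+1)}$ rather than $\ve{e}_L^{(m)}$.
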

\begin{proof}
Call $\Delta W_i^{GNT} = (J_{L-1}\dots J_{i+1})^{\dagger}\ve{e}^{(m)}_L\ve{h}_{i+1}^{(m)T}$ and $\Delta W_i^{DTP} = J_{i+1}^{\dagger}\dots J_{L-1}^{\dagger}\ve{e}_L^{(m)}\ve{h}_{i+1}^{(m)T}$ the updates prescribed by GNT and DTP respectively, with $J_i \triangleq \partial \ve{h}_{i+1} / \partial \ve{h}_i = W_{i+1}$ (see section \ref{section:sec_3_2} for a derivation of the DTP update for linear networks with white noise for feedback parameter training). The fact that DTP pushes the output activation along the negative gradient direction corresponding to equation \eqref{eq:gradient_output_space_dtp} can be proved analogously to Theorem \ref{theorem:grad_outputspace_app}, as $(J_{L-1}\dots J_{i+1}) (J_{L-1}\dots J_{i+1})^{\dagger} = (J_{L-1}\dots J_{i+1}) (J_{i+1}^{\dagger}\dots J_{L-1}^{\dagger}) = I$ for contracting networks. For the second part of the proof, consider $\Delta W_i^{GNT}\ve{x} = \ve{g_r} + \ve{g_n}$ and $\Delta W_i^{DTP}\ve{x} = \ve{t_r} + \ve{t_n}$ with $\norm{x}_2 = 1$ an arbitrary vector, where the vectors described by the index $r$ and $n$ are the projections onto the image and kernel of $(J_{L-1}\dots J_{i+1})$ respectively. Since $im(A^{\dagger}) = im(A^T) = ker(A)^{\perp}$ we have that $\ve{g_n} = 0$. Since the network is contracting
\begin{align}\label{eq:inefficient_dtp_first_line}
(J_{L-1}\dots J_{i+1})\ve{g_r} &= (J_{L-1}\dots J_{i+1})\Delta W_i^{GNT}\ve{x} = (J_{L-1}\dots J_{i+1})\Delta W_i^{DTP}\ve{x}\\
&= (J_{L-1}\dots J_{i+1})(\ve{t_r} + \ve{t_n}) = (J_{L-1}\dots J_{i+1})\ve{t_r}
\end{align}
For equation \eqref{eq:inefficient_dtp_first_line}, we used the fact that $(J_{L-1}\dots J_{i+1}) (J_{L-1}\dots J_{i+1})^{\dagger} = (J_{L-1}\dots J_{i+1}) (J_{i+1}^{\dagger}\dots J_{L-1}^{\dagger}) = I$ for contracting networks. By definition, both $\ve{g_r}$ and $\ve{t_r}$ are not in the kernel of $(J_{L-1}\dots J_{i+1})$, hence $\ve{g_r} = \ve{t_r}$. Finally by orthogonality of $\ve{t_r}$ and $\ve{t_n}$
\[
\norm{\ve{t_r}+\ve{t_n}}_2^2 = \norm{\ve{t_r}}_2^2+\norm{\ve{t_n}}_2^2 = \norm{\ve{g_r}}_2^2+\norm{\ve{t_n}}_2^2 \geq \norm{\ve{g_r}}_2^2
\]
and therefore
\[
\norm{\Delta W_i^{DTP}}_2^2 \geq \norm{\Delta W_i^{GNT}}_2^2
\]
The theorem follows by the equivalence of matrix norms.

\end{proof}

\paragraph{Minimum-norm properties of Gauss-Newton in an over-parameterized setting.}
In Theorem \ref{theorem:grad_outputspace_app}, we showed that the GNT method can be interpreted as finding the minimum-norm update $\Delta W_i$ under the constraint that the output activation should move in the negative gradient direction in the output space, as a result of the update. This result is closely connected to the properties of Gauss-Newton optimization in an over-parameterized setting. The Gauss-Newton parameter update for a nonlinear model parameterized by $\ve{\beta}$ is given by
\begin{align} \label{eq:gauss_newton_iteration}
    \Delta \ve{\beta} &= -J^{\dagger}\ve{e}_L,
\end{align}
with $J$ the Jacobian of the model outputs for each batch sample with respect to $\ve{\beta}$ and $\ve{e}_L$ the vector containing the output error for each batch sample. As reviewed in section \ref{appendix:gauss_newton}, this update can be seen as a linear least-squares regression with $J$ as the design matrix and $\ve{e}_L$ as the target values. However, this interpretation is only valid for under-parameterized models (fewer model parameters compared to the number of entries in $\ve{e}_L$), which is the usual setting for doing Gauss-Newton optimization. Theorem \ref{theorem:grad_outputspace_app}, however, operates in an over-parameterized setting, as we have many more layer weights $W_i$ than output errors $\ve{e}_L$ for a minibatch size of 1 in contracting networks. In this case, there are many possible parameter updates $\Delta \ve{\beta}$ that exactly reduce the error $\ve{e}_L$ to zero for the linearized model in the current mini-batch. A sensible way to resolve this indefiniteness is to take the minimum-norm update $\Delta \ve{\beta}$ that exactly reduces the error $\ve{e}_L$ to zero in the linearized model and this is exactly what the pseudo-inverse $J^{\dagger}$ does. The Gauss-Newton iteration (eq. \ref{eq:gauss_newton_iteration}) in an over-parameterized setting can thus be interpreted as a minimum-norm update, which is a fundamentally different interpretation compared to the under-parameterized setting. Gauss-Newton optimization is usually not used in this over-parameterized setting, however, recent research has started with investigating the theoretical properties of GN for over-parameterized networks \citep{cai2019gram, zhang2019fast}. The minimum-norm interpretation of GN suggests that the GNT method on linear networks is closely related to GN optimization for the network parameters. Corollary \ref{cor:gnt_gn} makes this relationship explicit. 

\begin{corollary} \label{cor:gnt_gn}
Consider a contracting linear multilayer network trained by GN targets according to Condition \ref{condition:gn_targets_app}. For a mini-batch size of 1 and an L2 loss function, the resulting parameter updates $\Delta W_i^{GNT}$ are a positive scalar multiple of the parameter updates $\Delta W_i^{GN}$ resulting from the GN optimization method with a block-diagonal approximation of the Gauss-Newton curvature matrix, with block sizes equal to the layer weights sizes.
\end{corollary}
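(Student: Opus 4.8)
The plan is to write down both update rules explicitly for a linear network and check that they coincide up to a positive, layer-dependent scalar. First I would record the GNT forward update. Because the network is linear, $\ve{h}_i = W_i\ve{h}_{i-1}$, so the local loss $\Lagr_i = \norm{\hat{\ve{h}}_i - \ve{h}_i}_2^2$ of Condition \ref{condition:gn_targets_app} equals $\norm{\hat{\ve{h}}_i - W_i\ve{h}_{i-1}}_2^2$ with $\hat{\ve{h}}_i$ frozen, and a gradient-descent step gives $\Delta W_i^{GNT} = 2\eta\,\Delta\ve{h}_i\,\ve{h}_{i-1}^T = -\eta'\,J_{\bar{f}_{i,L}}^{\dagger}\ve{e}_L\,\ve{h}_{i-1}^T$ for a positive constant $\eta' = 2\eta\hat{\eta}$, exactly as already derived inside the proof of Theorem \ref{theorem:grad_outputspace_app}.

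Next I would spell out the block-diagonal GN update on the weights. Collecting all weights into $\ve{\beta}$ with $\bar{J} = \partial\ve{h}_L/\partial\ve{\beta}$ (minibatch size $1$, so $\bar{J}$ has $n_L$ rows), the argument of Lemma \ref{lemma:4GN_step} applies verbatim with the activation Jacobians replaced by parameter Jacobians: the block-diagonal approximation of $\bar{J}^T\bar{J}$ decouples the normal equations into one system per layer, and in the contracting (hence over-parameterized, $n_i n_{i-1}\ge n_L$) regime the relevant solution is the minimum-norm one $\Delta\,\text{vec}(W_i)^{GN} = -(J_i^{(\beta)})^{\dagger}\ve{e}_L$, with $J_i^{(\beta)} = \partial\ve{h}_L/\partial\,\text{vec}(W_i)$; this mirrors both Lemma \ref{lemma:4GN_step} and the over-parameterized-GN discussion preceding the corollary, and uses the undamped ($\lambda\to 0$) GN step to match the undamped GNT targets.

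The computational heart is then purely Kronecker algebra. Since $\ve{h}_L = J_{\bar{f}_{i,L}}W_i\ve{h}_{i-1}$ for a linear net, the identity $\text{vec}(AXB) = (B^T\otimes A)\,\text{vec}(X)$ gives $J_i^{(\beta)} = \ve{h}_{i-1}^T\otimes J_{\bar{f}_{i,L}}$; then $(A\otimes B)^{\dagger} = A^{\dagger}\otimes B^{\dagger}$ together with $(\ve{h}_{i-1}^T)^{\dagger} = \ve{h}_{i-1}/\norm{\ve{h}_{i-1}}_2^2$ yields $(J_i^{(\beta)})^{\dagger} = \norm{\ve{h}_{i-1}}_2^{-2}\,\ve{h}_{i-1}\otimes J_{\bar{f}_{i,L}}^{\dagger}$, and reshaping back with the same $\text{vec}$ identity gives $\Delta W_i^{GN} = -\norm{\ve{h}_{i-1}}_2^{-2}\,J_{\bar{f}_{i,L}}^{\dagger}\ve{e}_L\,\ve{h}_{i-1}^T$. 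Comparing with the first step, $\Delta W_i^{GNT} = \eta'\norm{\ve{h}_{i-1}}_2^{2}\,\Delta W_i^{GN}$, which is a positive scalar multiple (positive whenever $\ve{h}_{i-1}\neq 0$ and the learning rates are positive), completing the argument.

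I expect two points to need care. The first is purely conventional but should be stated cleanly: the block-diagonal GN system for layer $i$ is rank-deficient in the over-parameterized regime, and one must commit to resolving it by the minimum-norm (pseudo-inverse) solution — this is the same convention already fixed in Lemma \ref{lemma:4GN_step}, so it only needs to be invoked rather than re-justified. The second is keeping the Kronecker bookkeeping honest: the key fact is that the Moore-Penrose pseudo-inverse commutes with the Kronecker product, unlike with ordinary matrix products (which is precisely the obstruction exploited in Section \ref{section:sec_3_2}), so the scalar-multiple identity itself requires no rank assumptions on $J_{\bar{f}_{i,L}}$, and the two reshaping steps must use a single fixed vectorization convention throughout.
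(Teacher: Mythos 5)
Your proposal is correct and follows essentially the same route as the paper's proof: both vectorize $W_i$, factor $\partial\ve{h}_L/\partial\,\mathrm{vec}(W_i)$ into the activation Jacobian $J_{\bar{f}_{i,L}}$ times a structured matrix built from $\ve{h}_{i-1}$, factor the Moore--Penrose pseudo-inverse accordingly, and read off the scalar $\|\ve{h}_{i-1}\|_2^{-2}$ against the GNT update $-\eta'J_{\bar{f}_{i,L}}^{\dagger}\ve{e}_L\ve{h}_{i-1}^T$. The only difference is bookkeeping: you invoke $(A\otimes B)^{\dagger}=A^{\dagger}\otimes B^{\dagger}$, whereas the paper writes the same object as the block matrix $H_{i-1}^T$ (which is precisely a Kronecker product of $\ve{h}_{i-1}^T$ with the identity) and obtains its pseudo-inverse via the orthonormal-rows factorization rule and an explicit SVD.
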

\begin{proof}
In order to avoid tensors in the proof, we use a vectorized form of $W_i$, $\vec{W}_i$, which is a vector containing the concatenated rows of $W_i$. Furthermore, let us define $H_i^T$ as:
\begin{align} \label{eq:Hi_matrix}
H_i^T \triangleq \begin{bmatrix}
\ve{h}_i^T  & \ve{0}^T     &  \hdots     &\ve{0}^T  \\
\ve{0}^T  & \ddots &      & \vdots\\
\vdots &  & \ddots & \ve{0}^T   \\
\ve{0}^T & \hdots  & \ve{0}^T      & \ve{h}_i^T
\end{bmatrix}
\end{align}
Then it holds that $W_i \ve{h}_{i-1} = H_{i-1}^T \vec{W}_i$. Finally, let us define $\bar{f}_{i,L}(\ve{h}_i)$ as the forward mapping from $\ve{h}_i$ to $\ve{h}_L$. Analogue to Lemma \ref{lemma:4GN_step}, the Gauss-Newton steps for $\vec{W}_i$, with a block-diagonal approximation of the GN curvature matrix and a mini-batch size of 1 are given by:
\begin{align}
    \Delta \vec{W}_i^{GN} = -J_{\ve{h}_L, \vec{W}_i}^{\dagger}\ve{e}_L, \quad i=1,...,L-1
\end{align}
with $J_{\ve{h}_L, \vec{W}_i} \triangleq \partial \ve{h}_L/ \partial \vec{W}_i$. For linear networks, $J_{\ve{h}_L, \vec{W}_i}$ is equal to
\begin{align}
    J_{\ve{h}_L, \vec{W}_i} = J_{\ve{h}_L, \ve{h}_i}H_{i-1}^T
\end{align}
with $J_{\ve{h}_L, \ve{h}_i} \triangleq \partial \ve{h}_L/ \partial \ve{h}_i$. As $\frac{1}{\|\ve{h}_{i-1}\|_2}H_{i-1}^T$ has orthonormal rows, $J_{\ve{h}_L, \vec{W}_i}^{\dagger}$ can be factorized as follows \citep{campbell2009generalized}:
\begin{align}
    J_{\ve{h}_L, \vec{W}_i}^{\dagger} = \big(H_{i-1}^T\big)^{\dagger} J_{\ve{h}_L, \ve{h}_i}^{\dagger}
\end{align}

In order to investigate $\big(H_{i-1}^T\big)^{\dagger}$, we take a look at the singular value decomposition (SVD) of $H_{i-1}^T$.
\begin{align}
    H_{i-1}^T = U\Sigma V^T
\end{align}
$U$ and $V$ are both square orthogonal matrices and $\Sigma$ contains the singular values. As the SVD is unique and $H_{i-1}^T$ has orthogonal rows, we can find the singular value decomposition intuitively by normalizing the rows of $H_{i-1}^T$.
\begin{align}
    U &= I\\
    V &= \begin{bmatrix}
\frac{1}{\|\ve{h}_{i-1}\|_2}H_{i-1}   &  \vdots & \tilde{V}
\end{bmatrix}\\
    \Sigma &= \begin{bmatrix}
\|\ve{h}_{i-1}\|_2 I  &  \vdots & \ve{0}
\end{bmatrix}
\end{align}
with $\tilde{V}$ an orthonormal basis orthogonal to the column space of $\bar{H}_{i-1}$. Hence, the block-diagonal GN updates for $W_i$ can be written as 
\begin{align}
    \Delta \vec{W}_i^{GN} = -\frac{1}{\|\ve{h}_{i-1}\|_2^2}H_{i-1} J_{\ve{h}_L, \ve{h}_i}^{\dagger}\ve{e}_L, \quad i=1,...,L-1\\
    \Delta W_i^{GN} = -\frac{1}{\|\ve{h}_{i-1}\|_2^2}J_{\ve{h}_L, \ve{h}_i}^{\dagger}\ve{e}_L\ve{h}_{i-1}^T, \quad i=1,...,L-1
\end{align}

For linear networks, the GNT update for $W_i$ is given by 
\begin{align}
    \Delta W_i^{GNT} = -\eta_i \hat{\eta} J_{\ve{h}_L,\ve{h}_i}^{\dagger}\ve{e}_L\ve{h}_{i-1}^T, \quad i=1,...,L-1
\end{align}
with $\eta_i$ the learning rate of layer $i$ and $\hat{\eta}$ the output step size. We see that $\Delta W_i^{GNT}$ is a positive scalar multiple of $\Delta W_i^{GN}$, thereby concluding the proof.
\end{proof}

\paragraph{Minimum-norm interpretation of error backpropagation.} Gradient descent (the optimization method behind error backpropagation) has also a minimum-norm interpretation, however, with a different constraint compared to the GNT method, as shown in Proposition \ref{cor:bp_minimum_norm}. Gradient descent uses minimum-norm updates, with as a sole purpose to decrease the loss, whereas GNT uses minimum-norm updates in order to move the output in a specific direction indicated by the output target. In the current implementations of TP and its variants, this output direction is specified as the negative gradient direction in output space, but one could also specify other directions if that would be favourable for the considered application.

\begin{proposition} \label{cor:bp_minimum_norm}
The gradient descent update is the solution to the following minimum-norm optimization problem.
\begin{align}
\lim_{c \xrightarrow{} 0} \argmin_{\Delta W_i} \qquad &\sum_{i=1}^{L}\norm{\Delta W_i}_F^2\\
\text{s.t.}\qquad & L^{(m+1)} = L^{(m)} - c
\end{align}
with $L^{(m)}$ the loss at iteration $m$ and $c$ a positive scalar.
\end{proposition}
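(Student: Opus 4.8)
The plan is to mirror the Lagrangian/KKT argument used in the proof of Theorem~\ref{theorem:grad_outputspace_app}, but with the equality constraint ``the loss decreases by $c$'' in place of ``the output moves along $-\ve{e}_L$''. First I would linearise the constraint. Writing the post-update loss as a first-order Taylor expansion around the current parameters, $L^{(m+1)} = L^{(m)} + \sum_{i}\langle \nabla_{W_i}L,\Delta W_i\rangle + \mathcal{O}\big(\sum_i \norm{\Delta W_i}_F^2\big)$ with $\langle\cdot,\cdot\rangle$ the Frobenius inner product, the constraint $L^{(m+1)} = L^{(m)} - c$ becomes $\sum_i \langle \nabla_{W_i}L,\Delta W_i\rangle = -c + \mathcal{O}\big(\sum_i \norm{\Delta W_i}_F^2\big)$.

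Second, I would argue that in the regime $c\to 0$ the quadratic remainder is negligible, so the problem reduces to the linearly-constrained quadratic program ``minimise $\sum_i \norm{\Delta W_i}_F^2$ subject to $\sum_i \langle \nabla_{W_i}L,\Delta W_i\rangle = -c$''. The cleanest way is to rescale: setting $\Delta W_i = c\,V_i$, the objective becomes $c^2 \sum_i \norm{V_i}_F^2$ and the constraint becomes $\sum_i \langle \nabla_{W_i}L, V_i\rangle = -1 + \mathcal{O}(c)$; since the competitor $V_i = -\nabla_{W_i}L/\sum_j\norm{\nabla_{W_j}L}_F^2$ is feasible up to an $\mathcal{O}(c)$ correction (which exists by the implicit function theorem as long as $\nabla L\neq 0$), the optimal $\norm{V_i}_F$ stays bounded, and letting $c\to 0$ washes out the $\mathcal{O}(c)$ term in the constraint.

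Third, I would solve the resulting convex program exactly as in Theorem~\ref{theorem:grad_outputspace_app}: form the Lagrangian $\Lagr = \sum_i \norm{\Delta W_i}_F^2 + \lambda\big(\sum_i \langle \nabla_{W_i}L,\Delta W_i\rangle + c\big)$, set $\partial\Lagr/\partial\Delta W_i = 2\Delta W_i + \lambda\nabla_{W_i}L = 0$ to obtain $\Delta W_i^* = -\tfrac{\lambda}{2}\nabla_{W_i}L$, and substitute into the constraint to fix $\lambda = 2c/\sum_j\norm{\nabla_{W_j}L}_F^2$. This gives $\Delta W_i^* = -\tfrac{c}{\sum_j\norm{\nabla_{W_j}L}_F^2}\nabla_{W_i}L$, which is precisely the gradient descent step with positive learning rate $\eta = c/\sum_j\norm{\nabla_{W_j}L}_F^2$; as $c\to 0$ this learning rate tends to $0$, so the minimiser of the stated problem is the (infinitesimal) gradient descent update, concluding the proof.

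I expect the main obstacle to be making the reduction in the second step fully rigorous, i.e. controlling how the true nonlinear level set of $L$ deviates from its tangent hyperplane and showing this does not perturb the minimiser in the $c\to 0$ limit. Besides the homogeneity/rescaling argument sketched above, one can handle this with a standard perturbation bound: the optimum of the exact problem is sandwiched between that of the linearised problem and that of a perturbed linear constraint shifted by $\mathcal{O}(c^2)$, both of which have the same gradient-direction minimiser up to $\mathcal{O}(c^2)$ terms. Everything else is the routine convex-program computation above, identical in structure to the proof of Theorem~\ref{theorem:grad_outputspace_app}.
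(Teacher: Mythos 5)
Your proposal is correct and follows essentially the same route as the paper's proof: linearise the loss constraint by a first-order Taylor expansion, drop the $\mathcal{O}(\norm{\Delta W}^2)$ remainder in the $c\to 0$ limit, and solve the resulting convex program via the Lagrangian to obtain $\Delta W_i^* = -\bigl(c/\sum_j\norm{\nabla_{W_j}L}_F^2\bigr)\nabla_{W_i}L$. Your rescaling argument $\Delta W_i = c\,V_i$ is a slightly more careful justification of why the quadratic remainder can be neglected than the paper's "assume and check at the end" treatment, but it does not change the substance of the proof.
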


\begin{proof}
First, let us define $\vec{W_i}$ as the vector with the concatenated rows of $W_i$ and $\vec{W}$ the concatenated vector of all $\vec{W_i}$. Then, the Lagrangian of this constrained optimization problem can be written as:
\begin{align}
    \Lagr = \norm{\Delta \vec{W}}_2^2 + \lambda ( L^{(m+1)} - L^{(m)} + c)
\end{align}
A first-order Taylor expansion of $L^{(m+1)}$ around $L^{(m)}$ gives:
\begin{align}
    L^{(m+1)} = L^{(m)} + \frac{\partial L}{\partial \vec{W}} \Delta \vec{W} + \mathcal{O}(\|\Delta \vec{W} \|^2_2)
\end{align}
We will assume that $\mathcal{O}(\|\Delta \vec{W} \|^2_2)$ vanishes in the limit of $c \xrightarrow{} 0$ relative to the first-order Taylor expansion and $c$, and check this assumption in the end of the proof. Hence, the Lagrangian can be approximated as 
\begin{align}
    \Lagr \approx \norm{\Delta \vec{W}}_2^2 + \lambda ( \frac{\partial L}{\partial \vec{W}} \Delta \vec{W} + c)
\end{align}
As this is a convex optimization problem, the optimal solution can be found by solving the following set of equations.
\begin{align}
    \frac{\partial \Lagr}{\partial \lambda} = \frac{\partial L}{\partial \vec{W}} \Delta \vec{W} + c = 0\\
    \frac{\partial \Lagr}{\partial \Delta \vec{W}} = 2\Delta \vec{W} + \lambda \frac{\partial L}{\partial \vec{W}}^T
\end{align}
This set of equations has as solutions 
\begin{align}
    \lambda^* &= \frac{2c}{\norm{\frac{\partial L}{\partial \vec{W}}}_2^2} \\
    \Delta \vec{W}^* &= - \frac{c}{\norm{\frac{\partial L}{\partial \vec{W}}}_2^2} \frac{\partial L}{\partial \vec{W}}^T
\end{align}
We see that the minimum-norm solution for $\Delta \vec{W}$ is the gradient descent update with stepsize $c/\|\frac{\partial L}{\partial \vec{W}}\|_2^2$. Further, $\mathcal{O}(\|\Delta \vec{W} \|^2_2) \sim \mathcal{O}(c^2)$, thus $\mathcal{O}(\|\Delta \vec{W} \|^2_2)$ vanishes in the limit of $c \xrightarrow{} 0$ relative to the first-order Taylor expansion and $c$, thereby concluding the proof.
\end{proof}

\paragraph{GN optimization for minibatches bigger than 1.}
In Corollary \ref{cor:gnt_gn} we showed that the GNT method is equal to a block-diagonal approximation of GN for a minibatch size equal to 1. We emphasize, however, that this relation only holds for minibatch sizes equal to 1 and does not generalize to larger minibatches. The Gauss-Newton iteration with a minibatch size of B and a block-diagonal approximation is given by
\begin{align}\label{eq:gn_update_batch}
    \Delta \vec{W}_i^{GN} &= \lim_{\lambda \xrightarrow{} 0}\Big[\sum_{b=1}^B G^{(b)} + \lambda I \Big]^{-1} \Big[\sum_{b=1}^B J^{(b)T}\ve{e}_L^{(b)}\Big] \\
    G^{(b)} &\triangleq J^{(b)T} J^{(b)}
\end{align}
with $G$ the Gauss-Newton curvature matrix, $J^{(b)} \triangleq \partial \ve{h}_L / \partial \vec{W}_i$ evaluated on mini-batch sample $b$ and $\lambda$ a damping parameter. For linear networks, the GNT parameter updates with a minibatch size of B can be written as:
\begin{align}\label{eq:gnt_update_batch}
    \Delta \vec{W}_i^{GNT} &= \lim_{\lambda \xrightarrow{} 0} \sum_{b=1}^B \Big[\big( G^{(b)} + \lambda I \big)^{-1} J^{(b)T}\ve{e}_L^{(b)}\Big] \\
    &= \sum_{b=1}^B \Big[J^{(b)\dagger}\ve{e}_L^{(b)}\Big]
\end{align}
We see that for $B=1$, these expressions overlap, but for $B>1$ they are not equal anymore, because the order of the sum and inverse operation is switched. GNT can thus be best interpreted in the framework of minimum-norm parameter updates, even if bigger batch-sizes are used, whereas GN optimization has a different interpretation for bigger minibatch sizes.

\paragraph{Linear networks trained with GNT converge to the global minimum.}
Even though the GNT method does not correspond with the GN method for bigger mini-batch sizes, we can still show that GNT on linear networks converges to the global minimum. Theorem \ref{theorem:convergence_gnt_linear_networks} proves this for arbitrary batch sizes and an infinitesimally small learning rate. Note that a batch setting instead of a minibatch setting is used (i.e. one minibatch that is equal to the total training batch).

\begin{theorem}[Theorem 6 in main manuscript]\label{theorem:convergence_gnt_linear_networks}
Consider a linear multilayer network of arbitrary architecture, trained with GNT according to Condition \ref{condition:gn_targets_app} and with arbitrary batch size. The resulting parameter updates lie within 90 degrees of the corresponding loss gradients. Moreover, for an infinitesimal small learning rate, the network converges to the global minimum.
\end{theorem}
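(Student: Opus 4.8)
The plan is to first make everything explicit for linear networks and then run a gradient-flow argument. Write the end-to-end map as $W \triangleq W_L\cdots W_1$, set $\hat J_i \triangleq W_L\cdots W_{i+1}$ (so the forward Jacobian $J_{\bar f_{i,L}}^{(b)}=\hat J_i$ is sample-independent), and collect the per-sample output errors and layer-$(i-1)$ activations columnwise into $E_L = [\ve e_L^{(1)},\dots,\ve e_L^{(B)}]$ and $H_{i-1} = [\ve h_{i-1}^{(1)},\dots,\ve h_{i-1}^{(B)}]$. Since the GNT targets are $\Delta\ve h_i^{(b)} = -\hat\eta\,\hat J_i^{\dagger}\ve e_L^{(b)}$ and $\Delta W_i$ is a gradient-descent step on $\Lagr_i = \sum_b\norm{\hat{\ve h}_i^{(b)}-W_i\ve h_{i-1}^{(b)}}^2$ with $\hat{\ve h}_i$ held fixed, one obtains $\Delta W_i = -2\eta_i\hat\eta\,\hat J_i^{\dagger} E_L H_{i-1}^{T}$, while the ordinary loss gradient for a linear network is $\nabla_{W_i}\Lagr = \hat J_i^{T} E_L H_{i-1}^{T}$.

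\textbf{The angle bound.} For the first claim I would compute the Frobenius inner product of the update with the negative loss gradient. Using cyclicity of the trace,
\begin{equation}
\big\langle \Delta W_i,\,-\nabla_{W_i}\Lagr\big\rangle_F \;=\; 2\eta_i\hat\eta\,\Tr\!\big(E_L^{T}(\hat J_i^{\dagger})^{T}\hat J_i^{T} E_L\, H_{i-1}^{T}H_{i-1}\big).
\end{equation}
Now $(\hat J_i^{\dagger})^{T}\hat J_i^{T} = (\hat J_i\hat J_i^{\dagger})^{T} = \hat J_i\hat J_i^{\dagger} =: P_i$, the orthogonal projector onto $\mathrm{col}(\hat J_i)$, hence symmetric and idempotent. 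Therefore $E_L^{T}P_iE_L = (P_iE_L)^{T}(P_iE_L)\succeq 0$ and $H_{i-1}^{T}H_{i-1}\succeq 0$, and since the trace of a product of two positive-semidefinite matrices is nonnegative, the inner product is $\ge 0$ for every layer, with equality only when $\Delta W_i = 0$. Concatenating the per-layer updates and gradients, the full GNT parameter update lies within $90^\circ$ of the negative loss gradient, i.e. it is a (weak) descent direction, which is the first assertion of the theorem.

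\textbf{Convergence.} Passing to the $\eta\to 0$ limit gives the flow $\dot W_i = -2\eta_i\hat\eta\,\hat J_i^{\dagger}E_LH_{i-1}^{T}$, and by the previous step $\tfrac{d}{dt}\Lagr = \sum_i\langle\nabla_{W_i}\Lagr,\dot W_i\rangle = -2\hat\eta\sum_i\eta_i\Tr(E_L^{T}P_iE_L\,H_{i-1}^{T}H_{i-1})\le 0$, so $\Lagr(t)$ is monotonically non-increasing and converges to some $\Lagr_\infty\ge\Lagr^\star$. To identify $\Lagr_\infty$ with the global minimum $\Lagr^\star=\min_W\Lagr$ I would use that $\Lagr$ is convex in the end-to-end map $W$, that the induced flow $\dot W = \sum_i\hat J_i\dot W_i\check J_i$ (with $\check J_i = W_{i-1}\cdots W_1$) is again a descent direction in $W$-space, and a LaSalle / monotone-flow argument to show $W(t)$ converges to a stationary point of the flow. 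At such a point every $\Delta W_i = 0$; running the layer recursion from $i=L$ downwards — at $i=L$ one has $\hat J_L=I$, so $P_L=I$ and $E_LH_{L-1}^{T}=0$ — shows the limiting residual is orthogonal to all attainable update directions, i.e. $\nabla_W\Lagr$ lies in the normal cone of the reachable set of end-to-end maps, which for the convex $\Lagr(W)$ is precisely the first-order optimality condition, giving $\Lagr_\infty=\Lagr^\star$.

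\textbf{Main obstacle.} I expect the genuinely delicate part to be this last step: excluding stalling of the flow at a non-optimal configuration. The effective update $\dot W$ can vanish while $\nabla_W\Lagr\neq 0$ whenever some layer map loses rank, because then $P_i$ projects part of the residual away; the argument therefore has to control the ranks of the $\hat J_i$ along the trajectory — e.g.\ via a conserved-quantity/balancedness property of the GNT flow, or a genericity assumption on the initialization and data — and also has to establish boundedness of the trajectory, which is awkward because of the rescaling symmetry $W_i\mapsto cW_i,\ W_{i+1}\mapsto c^{-1}W_{i+1}$. The clean way around the latter is to track the invariant end-to-end map $W$ (or the activations) rather than the raw weight matrices.
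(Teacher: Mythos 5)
Your angle-bound argument is correct and is essentially the paper's own: both of you write $\Delta W_i^{GNT}\propto -\hat J_i^{\dagger}A_i$ and $-\nabla_{W_i}\Lagr = -\hat J_i^{T}A_i$ with $A_i = E_LH_{i-1}^{T}$, use that $(\hat J_i^{\dagger})^{T}\hat J_i^{T}=\hat J_i\hat J_i^{\dagger}$ is an orthogonal projector, and conclude nonnegativity of the Frobenius inner product from the trace of a product of positive semi-definite matrices.

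The gap is in the convergence step, and the ingredient you are missing relative to the paper is the elementary identity $\ker(M^{\dagger})=\ker(M^{T})=\mathrm{col}(M)^{\perp}$. Applied to $M=\hat J_i$ it gives, for \emph{every} layer,
\begin{align}
\Delta W_i^{BP}=\hat J_i^{T}A_i=0 \;\iff\; \mathrm{im}(A_i)\subseteq\ker(\hat J_i^{T})=\ker(\hat J_i^{\dagger}) \;\iff\; \Delta W_i^{GNT}=\hat J_i^{\dagger}A_i=0,
\end{align}
so the GNT flow and the backpropagation gradient flow have exactly the same stationary points, layer by layer. The paper then combines this with your descent inequality: the loss is non-increasing along the GNT flow, any equilibrium of the GNT flow is an equilibrium of the gradient flow, and (so the paper asserts) the gradient flow has a unique equilibrium, the global minimum. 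This replaces your recursion from layer $L$ downward and the normal-cone argument, which as written does not close: $E_LH_{L-1}^{T}=0$ only expresses first-order optimality of $W_L$ with the representations $\ve{h}_{L-1}$ frozen, and says nothing about whether those representations (hence the end-to-end map) are globally optimal.

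That said, the obstacle you flag at the end is real and is \emph{not} resolved by the paper either. The loss of a deep linear network is convex in the end-to-end map $W_L\cdots W_1$ but not in the tuple $(W_1,\dots,W_L)$; rank-deficient configurations (e.g.\ two consecutive zero weight matrices) are stationary points of both $\Delta W_i^{BP}$ and $\Delta W_i^{GNT}$ that need not be global minima, so "the equilibrium is unique for backpropagation (convex objective)" requires exactly the kind of rank-preservation or genericity assumption you describe. Your instinct to track the end-to-end map and worry about rank collapse and boundedness is sound; the honest conclusion is that the paper's proof closes the argument by an appeal that itself glosses over this point, whereas you identified it and stopped.
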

\begin{proof}
Consider a batch of size $B$ with inputs $\{\ve{h}_0^{(b)}\}_{1\leq b\leq B}$ and errors $\{\ve{e}_L^{(b)}\}_{1\leq b\leq B}$, and let $A_i \triangleq \sum_{b=1}^{B}[\ve{e}_L^{(b)}\ve{h}_i^{(b)T}]$. Given the linear setting, $J_i$ is independent of the presented input for each batch, so
\begin{align}
\Delta W_i^{GNT} &= \sum_{b=1}^{B}[(J_{L-1} \dots J_{i+1})^{\dagger}\ve{e}_L^{(b)}\ve{h}_i^{(b)T}] = (J_{L-1} \dots J_{i+1})^{\dagger}A_i\\
\Delta W_i^{BP} &= \sum_{b=1}^{B}[(J_{L-1} \dots J_{i+1})^T\ve{e}_L^{(b)}\ve{h}_i^{(b)T}] = (J_{L-1} \dots J_{i+1})^TA_i
\end{align}
To compute the angle between the updates we take the Frobenius inner product
\begin{align}
\langle \Delta W_i^{BP}, \Delta W_i^{GNT} \rangle_F &= Tr(A_i^T(J_{L-1} \dots J_{i+1})(J_{L-1} \dots J_{i+1})^{\dagger}A_i) \\
&= Tr(A_iA_i^T(J_{L-1} \dots J_{i+1})(J_{L-1} \dots J_{i+1})^{\dagger}) = Tr(B\Sigma)
\end{align}
In the last equality a change of basis has been performed with the SVD basis of $(J_{L-1} \dots J_{i+1})(J_{L-1} \dots J_{i+1})^{\dagger}$ (which is PSD). $B$ is $A_iA_i^T$ under the new basis while $\Sigma$ is a diagonal matrix with non-negative entries.
$A_iA_i^T$ is positive semi-definite and remains so even after a change of basis, so it always has non-negative diagonal entries. Clearly then $Tr(B\Sigma) > 0$ and therefore
\[
\langle \Delta W_i^{BP}, \Delta W_i^{GNT} \rangle_F > 0
\]
The angle between the updates hence remains within 90 degrees.

\medskip

\noindent To show that the procedures converge to the same critical point, we prove that
\[
\Delta W_i^{BP} = 0 \iff \Delta W_i^{GNT} = 0
\]
This follows from the fact that $ker(M^T) = ker(M^{\dagger})$ for any matrix $M$, since
\begin{align}
\Delta W_i^{BP} = 0 &\iff im(A_i) \subset ker((J_{L-1} \dots J_{i+1})^T)\\
&\iff im(A_i) \subset ker((J_{L-1} \dots J_{i+1})^{\dagger}) \iff \Delta W_i^{GNT} = 0
\end{align}
Finally, for small enough step size we can compare the gradient flow of both GNT and backpropagation, in particular
\begin{align}
\tau^{BP}\frac{\partial W_i}{\partial t} &= -(J_{L-1} \dots J_{i+1})^T\sum_{b=1}^{B}[\ve{e}_L^{(b)}\ve{h}_i^{(b)}]\\
\tau^{GNT}\frac{\partial W_i}{\partial t} &= -(J_{L-1} \dots J_{i+1})^{\dagger}\sum_{b=1}^{B}[\ve{e}_L^{(b)}\ve{h}_i^{(b)}]
\end{align}
Given that backpropagation is guaranteed to always descend the gradient assuming an infinitesimal stepsize, GNT will always reduce the loss under the same assumption, as the directions are always within 90 degrees. As the equilibrium is unique for backpropagation (convex objective) and it is the same as GNT, the two procedures will both converge to the global minimum.
\end{proof}

\paragraph{Minimum-norm target update interpretation for nonlinear networks.}
The direct connection between GNT and GN optimization on the parameters for a minibatch size of 1 does not hold exactly for nonlinear networks. This is due to the nonlinear dependence of $\ve{h}_i$ on $W_i$ in the local layer loss $\Lagr_i=\|\ve{h}_i - \hat{\ve{h}}_i\|_2^2$. For nonlinear networks, the GNT parameter update for a minibatch size of 1 results in
\begin{align}
    \Delta \vec{W}_i^{GNT} = -\eta H_{i-1}D_iJ_i^{\dagger}\ve{e}_L
\end{align}
with $D_i=\partial \ve{h}_i / \partial \ve{a}_i$ evaluated at $\ve{a}_i = W_i \ve{h}_{i-1}$, $J_i = \partial \ve{h}_L / \partial \ve{h}_i$, and $\vec{W}_i$ and $H_{i-1}$ as defined in Corollary \ref{cor:gnt_gn}. With the same reasoning as in Corollary \ref{cor:gnt_gn}, the GN parameter update for a minibatch size of 1 is given by
\begin{align}
     \Delta \vec{W}_i^{GN} = -\frac{1}{\|\ve{h}_{i-1}\|_2^2} H_{i-1}\big(J_i D_i\big)^{\dagger}\ve{e}_L
\end{align}

We see that $\vec{W}_i^{GNT}$ is not equal anymore to $\vec{W}_i^{GN}$. Hence, for an exact interpretation of GNT on nonlinear networks, we should return to the hybrid view. First, the GNT method uses Gauss-Newton optimization to compute its hidden layer targets, after which it does gradient descent on the local loss $\Lagr_i=\|\ve{h}_i - \hat{\ve{h}}_i\|_2^2$ to update the network parameters. As the Gauss-Newton optimization for the targets operates in an over-parameterized regime in contracting networks, the GNT target updates should be interpreted as minimum-norm target updates, with as constraint to push the output activation towards its target, similar to Theorem \ref{theorem:grad_outputspace_app}. The network parameters are then updated by gradient descent on $\Lagr_i$ to push the layer activation closer to its minimum-norm target. The parameter updates can thus not exactly be interpreted in a minimum-norm sense, but intuitively, the same effect is pursued through the minimum-norm targets. Furthermore, Fig. \ref{fig:output_space_toy} indicates that for nonlinear networks, the GNT parameter updates push the output activation approximately along the negative gradient direction, similar to Theorem \ref{theorem:grad_outputspace_app}.

\begin{SCfigure}
  \centering
  \caption{The direction of the change in output activation, resulting from GNT update (blue) and BP update (red), plotted together with the loss contours. A nonlinear 2-hidden-layer network with two output neurons ($y_1$ and $y_2$) was used on a synthetic regression dataset.}
  \vspace*{-0.4cm}
  \includegraphics[width=0.42\linewidth]{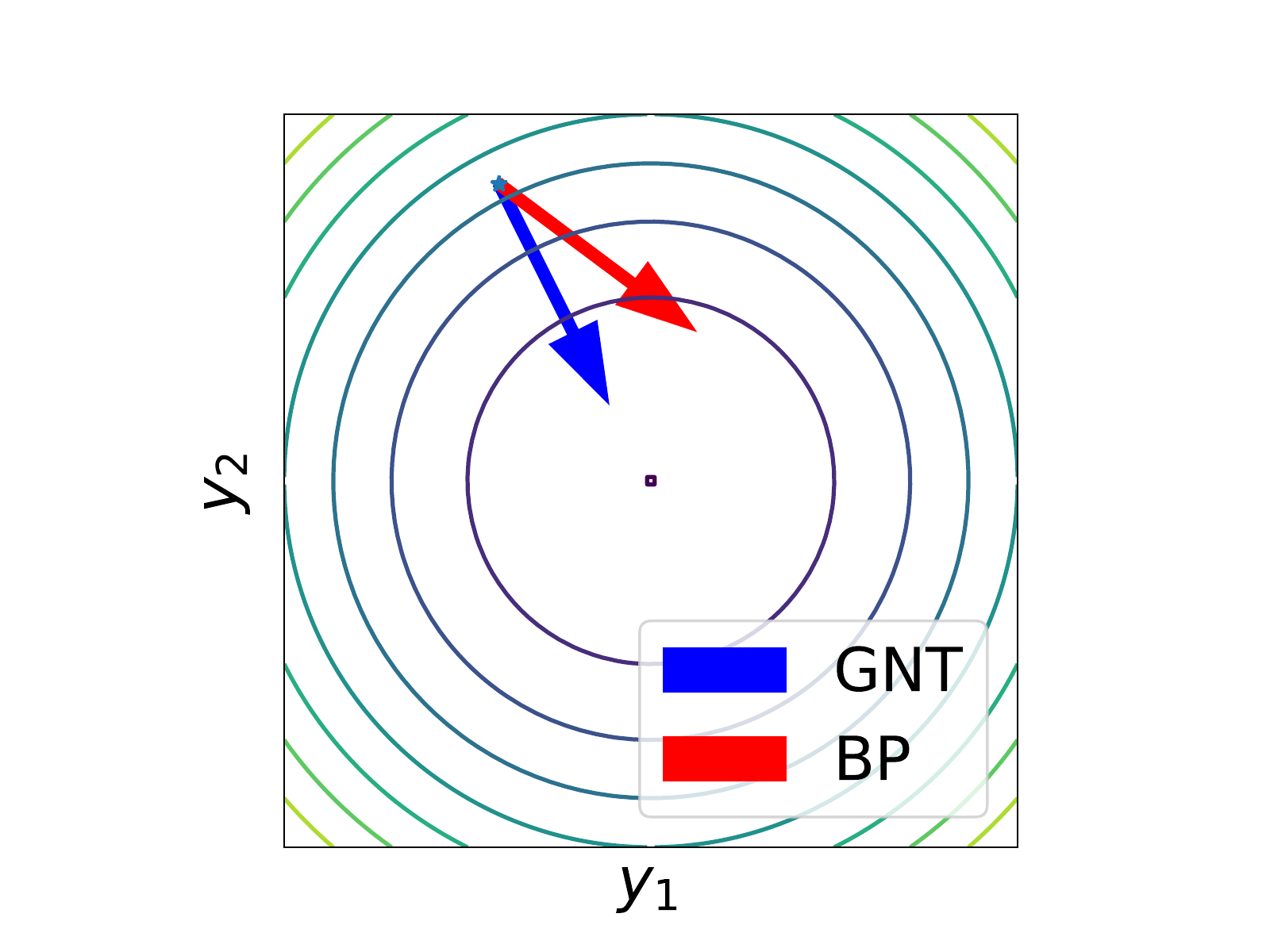}
      \label{fig:output_space_toy}
\end{SCfigure}

\paragraph{Minimum-norm target updates under various norms.} 
We established an interpretation of GNT, connecting it to finding minimum-norm target updates that push the output activation towards its target. Besides the possibility to vary the direction specified by the output target, one can also use different norms for defining the 'minimum-norm' targets. For the difference reconstruction loss, we used the 2-norm, which gives rise to the link with Moore-Penrose pseudo-inverses and Gauss-Newton optimization. However, the difference reconstruction loss is not restricted to this 2-norm and when using other norms, this will give rise to minimum-norm target updates under the corresponding norm. \citet{ororbia2019biologically} indicated that other distance measures might be beneficial for defining local loss functions $\Lagr_i(\hat{\ve{h}}_i, \ve{h}_i)$ in the DTP framework. Using our framework, future research can now investigate whether using other norms for the difference reconstruction loss results in better performance. 

\paragraph{Does there exist an energy function for GNT?}
In the previous paragraphs, we showed that GNT is closely related to Gauss-Newton optimization and that it converges for linear networks. For non-linear networks, we indicated that GNT can best be interpreted as a hybrid method between GN optimization for finding the targets and gradient descent for updating the parameters. However, it remains an open question whether general convergence can be proved for the nonlinear case, and if yes, to which minimum the GNT method converges. The convergence proofs for error-backpropagation all use the fact that error-backpropagation is a form of gradient descent, and it thus follows the gradient of a loss function \citep{white1989learning, tesauro1989asymptotic}. For being able to follow a similar approach for a convergence proof, the GNT method should as well follow the gradient of some \textit{energy function}. However, in Proposition \ref{theorem:gnt_no_energyfunction} we show that GNT does not follow the gradient of any function, for which we followed a similar approach to \citet{lillicrap2016random}. The proposition applies for both linear and non-linear networks. Another path forward towards a general convergence proof for GNT would be to find an energy function for which the GNT updates are \textit{steepest descent updates} under a certain norm \citep{boyd2004convex}. The Gauss-Newton parameter updates can be interpreted as steepest descent updates under the norm induced by $G$, the Gauss-Newton curvature matrix. Similarly, one could hope that there exist an energy function and a certain norm for which the GNT updates represent a steepest descent direction. We hypothesize that such energy function and norm do not exist for batch sizes bigger than one, because the order of the summation and inverse operation in equation \eqref{eq:gnt_update_batch} are reversed compared to equation \eqref{eq:gn_update_batch}, making it hard to disentangle a norm-inducing matrix and a gradient on the training batch. However, we have not yet proven this rigorously.

\begin{proposition} \label{theorem:gnt_no_energyfunction}
The GNT updates as described in Condition \ref{condition:gn_targets_app} do not produce a conservative dynamical system, hence it does not follow the gradient of any function.
\end{proposition}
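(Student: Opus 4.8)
The plan is to use the standard criterion from vector calculus: a $C^1$ vector field $F:\mathbb{R}^n\to\mathbb{R}^n$ is the gradient of a scalar function on a simply connected domain if and only if its Jacobian $DF$ is everywhere symmetric (equivalently, the associated $1$-form $\sum_a F_a\,d\theta_a$ is closed). I would therefore collect all forward weights into one parameter vector $\ve\theta=(\mathrm{vec}(W_1),\dots,\mathrm{vec}(W_L))$, regard one step of the GNT rule of Condition~\ref{condition:gn_targets_app} as defining the vector field $F(\ve\theta)$ whose blocks are $\Delta W_1,\dots,\Delta W_L$, and show that $DF$ fails to be symmetric. Since a single asymmetric off-diagonal entry at a single point already rules out $F$ being a gradient, it suffices to exhibit one network and one data configuration for which this happens; this mirrors the approach of \citet{lillicrap2016random} for feedback alignment.

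The first step is to reduce to the smallest transparent instance, which also sidesteps differentiating the Moore--Penrose pseudo-inverse with respect to upstream weights. Take a scalar two-layer linear network, $h_1=W_1 h_0$ and $h_2=W_2 h_1$, with a single training example $(h_0,l)=(1,l)$ and an $L_2$ loss, so that $e_L=W_2W_1-l$. Every layer Jacobian is a nonzero scalar, hence $J^\dagger=J^{-1}$ and all derivatives are elementary. Unrolling Condition~\ref{condition:gn_targets_app}: the hidden-layer target update is $\Delta h_1=-\hat\eta\,W_2^{-1}e_L$, and gradient descent on $\Lagr_1=(h_1-\hat h_1)^2$ with $\hat h_1$ held fixed gives, up to a positive constant, $\Delta W_1\ \propto\ -\bigl(W_1-l\,W_2^{-1}\bigr)$; the output layer is trained by gradient descent on the output loss, giving $\Delta W_2\ \propto\ -\bigl(W_2W_1-l\bigr)W_1$.

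Next I would compute the two mixed partials that symmetry of $DF$ would force to agree: $\partial(\Delta W_1)/\partial W_2\ \propto\ -\,l\,W_2^{-2}$, which does not depend on $W_1$, whereas $\partial(\Delta W_2)/\partial W_1\ \propto\ -(2W_2W_1-l)$, which does. They are unequal as functions on parameter space --- for instance at $W_1=2$, $W_2=1$, $l=1$ the first equals $-1$ and the second equals $-3$ --- and the inequality survives any rescaling of the two blocks by fixed positive layer-wise learning rates, since one side depends on $W_1$ and the other does not. Hence $DF$ is not symmetric, the $1$-form $\sum_a F_a\,d\theta_a$ is not closed, and $F$ is not the gradient of any function: the induced GNT dynamics is not conservative. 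Because GNT is a single fixed rule defined for arbitrary architectures, the existence of this instance proves the proposition; the nonlinear case follows by the identical computation after inserting a smooth activation (or by a small perturbation of the linear example), the extra term from differentiating the hidden-layer Jacobian being generically unable to cancel the asymmetry.

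I expect the only real difficulty to be bookkeeping in the fully general, non-square, nonlinear setting, where one must differentiate $J^\dagger_{\bar f_{i,L}}$ with respect to the weights of layers $j>i$ using the pseudo-inverse differential identity $dA^\dagger=-A^\dagger(dA)A^\dagger+A^\dagger A^{\dagger T}(dA)^T(I-AA^\dagger)+(I-A^\dagger A)(dA)^T A^{\dagger T}A^\dagger$ together with careful $\mathrm{vec}$/Kronecker tracking; the minimal example above avoids this entirely at the harmless cost of proving a negative statement by counterexample. A minor point worth recording in the write-up is that non-closedness is strictly stronger than non-exactness, so the argument needs no hypothesis on the topology of the (open, full-measure) set of parameters on which the Jacobians behave well.
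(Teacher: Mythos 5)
Your proposal is correct and takes essentially the same route as the paper: reduce to a scalar two-layer linear network, write the GNT update rule as a vector field on $(w_1,w_2)$, and exhibit unequal mixed partials $\partial(\Delta W_1)/\partial W_2 \neq \partial(\Delta W_2)/\partial W_1$, so the field is not closed and hence not a gradient. Your added remarks on robustness to positive layer-wise learning-rate rescaling and on closedness versus exactness are sound but do not change the argument.
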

\begin{proof}
We will restrict ourselves to the case of one hidden layer, with each layer having only one unit, to avoid cumbersome notations. This choice does not impact the generality of the statement. Let $h_0$ be the input, $h_1 = w_1h_0$ be the state of the hidden layer, and $h_2 = w_2h_1$ be the output layer. The error signal is $e_L = h_2 - t$, with $t$ being the target. The updates to $w_2$ and $w_1$ after presentation of one sample are
\begin{align*}
\dot{w_2} &= e_Lh_1\\
\dot{w_1} &= j^{-1}e_Lh_0
\end{align*}

\noindent with $j=\partial h_2 / \partial h_1 = w_2$. If $\dot{w_1}$ and $\dot{w_1}$ follow the gradient of any function, the Hessian of this function should be symmetric: 
\[
\frac{\partial \dot{w_2}}{\partial w_1} = \frac{\partial \dot{w_1}}{\partial w_2}
\]
This condition is also known as \textit{conservative dynamics} in dynamical systems. 
However 
\[
\frac{\partial \dot{w_2}}{\partial w_1} = \frac{\partial e_Lh_1}{\partial w_1} = \frac{\partial e_L}{\partial w_1}h_1 + e_L\frac{\partial h_1}{\partial w_1} = w_2h_0h_1 + e_Lh_0
\]
and
\begin{align}
\frac{\partial \dot{w_1}}{\partial w_2} &= \frac{\partial j^{-1}e_Lh_0}{\partial w_2} = \left(\frac{\partial j^{-1}}{\partial w_2}e_L + j^{-1}\frac{\partial e_L}{\partial w_2}\right)h_0 = \frac{\partial j^{-1}}{\partial w_2}e_Lh_0 + j^{-1}h_1h_0\\
&= -\frac{1}{w_2^2}e_Lh_0+\frac{1}{w_2}h_1h_0
\end{align}
In general 
\[
\frac{\partial \dot{w_2}}{\partial w_1} \neq \frac{\partial \dot{w_1}}{\partial w_2}
\]
and the dynamical system is non-conservative.
\end{proof}

\paragraph{GNT updates align with gradients in contracting networks.}
Despite the lack of a general convergence proof for the GNT method in the nonlinear case, experimental results indicate that our DTP variants (which are an approximation of the GNT method) succeed in decreasing the loss sufficiently, even for nonlinear networks. To help explain this observation, we indicate that the GNT updates approximately align with the gradient direction in high probability, if the network has large hidden layers compared to the output layer, which is the case for many classification and regression problems. We start from the assumption that the properties of $J_{\bar{f}_{i,L}}$ are in general similar to those of a random matrix \citep{arora2015deep}. If a zero-mean initialization for the weights is used, we assume that $J_{\bar{f}_{i,L}}$ is also approximately random and zero-mean. For this case, we prove that a scalar multiple of its transpose is a good approximation of its pseudo-inverse. Intuitively, this is easy to understand, as $J^{\dagger}=J^T(JJ^T)^{-1}$ and $JJ^T$ is close to a scalar multiple of the identity matrix for zero-mean random matrices in $\mathbb{R}^{m\times n}$ with $n \gg m$:
\begin{align}
    \big[JJ^T\big]_{ii} &= \sum_k^n J_{i,k}^2 \approx n\sigma^2\\
    \big[JJ^T\big]_{ij} &= \sum_k^n J_{i,k}J_{j,k} \approx 0, \quad \forall i\neq j
\end{align}
with $J_{i,k}$ the element on the $i$-th row and $k$-th column of $J$, and $\sigma^2$ the variance of the random variables. The GNT update and the gradient (BP) update are given by respectively:

\begin{align}
    \Delta W_i^{GNT} &= -\eta \sum_b^B D_i^{(b)}J_i^{(b) \dagger}\ve{e}^{(b)}_L \ve{h}_{i-1}^{(b)T} \\
    \Delta W_i^{BP} &= -\eta \sum_b^B D_i^{(b)}J_i^{(b) T}\ve{e}^{(b)}_L \ve{h}_{i-1}^{(b)T} 
\end{align}
with $J_i^{(b)} = \partial \ve{h}_L / \partial \ve{h}_i$ evaluated at training sample $b$. We see that if $J_i^{(b)T}$ is approximately equal to $J_i^{(b)\dagger}$, $\Delta W_i^{GNT}$ will align with $\Delta W_i^{BP}$. In Theorem \ref{theorem:alignment_gnt_bp}, we prove formally that $J_i^{(b)T}$ aligns with $J_i^{(b)\dagger}$ if it is considered as a zero-mean random matrix with $n\gg m$. Before stating the theorem, we need to introduce some auxiliary concepts and lemmas.

\begin{definition}\label{def:e_pseudo}
Let $A \in \mathbb{R}^{m\times n}$ and $f: \mathbb{R}^{m\times n} \to \mathbb{R}^{n\times m}$. $f(A)$ is an $\epsilon-\text{approximate}$ Moore-Penrose pseudoinverse (or $\epsilon-\text{pseudoinverse}$) of $A$ if it satisfies the following conditions:
\begin{enumerate}
    \item $\norm{Af(A)A - A}_F^2 \leq \epsilon$
    \item $\norm{f(A)Af(A) - f(A)}_F^2 \leq \epsilon$
    \item $Af(A)$ and $f(A)A$ are Hermitian
\end{enumerate}
These are an $\epsilon-\text{approximate}$ version of the Penrose conditions.
\end{definition}

\begin{lemma}\label{lemma:levy}
A classical result from Paul L\'{e}vy prescribes that the n-dimensional unit spheres $\mathbb{S}^n$ equipped with their respective uniform probability measures form a normal L\'{e}vy family. Therefore given one $\ve{x} \in \mathbb{S}^n$, for any $\ve{x}' \in \mathbb{S}^n$ we have
\[ \mathbb{P}(\lVert\langle \ve{x}, \ve{x}' \rangle\rVert \geq t) \leq C_1\exp{(-C_2nt^2)} \]
where $C_1, C_2$ are universal constants.
\end{lemma}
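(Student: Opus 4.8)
The plan is to obtain the stated tail inequality from the normal-L\'{e}vy-family property of the sphere, which is L\'{e}vy's classical spherical isoperimetric inequality: for $(\mathbb{S}^n,\rho,\mu)$ with $\rho$ the geodesic metric and $\mu$ the uniform probability measure, every measurable $A$ with $\mu(A)\geq\tfrac12$ has $\mu(A_s)\geq 1-C_1\exp(-C_2 n s^2)$ for its $s$-enlargement $A_s=\{\ve{x}':\rho(\ve{x}',A)\leq s\}$, with $C_1,C_2$ universal. Granting this, the remaining work is a purely geometric translation. A first reduction: the reflection $\ve{x}'\mapsto\ve{x}'-2\langle\ve{x},\ve{x}'\rangle\ve{x}$ is a $\mu$-preserving isometry of $\mathbb{S}^n$ that flips the sign of $\langle\ve{x},\ve{x}'\rangle$, so the law of $\langle\ve{x},\ve{x}'\rangle$ is symmetric and non-atomic, whence $\mathbb{P}(|\langle\ve{x},\ve{x}'\rangle|\geq t)\leq 2\,\mathbb{P}(\langle\ve{x},\ve{x}'\rangle\geq t)$, the factor $2$ to be absorbed into $C_1$; and since this probability is $0$ for $t>1$ I may assume $t\in(0,1]$.

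Next I would fix $\ve{x}$ as a pole and take the closed hemisphere $A\triangleq\{\ve{x}':\langle\ve{x},\ve{x}'\rangle\leq 0\}$, so that $\mu(A)=\tfrac12$. The key elementary fact is that a point $\ve{x}'$ whose latitude above the equator is $\ell\in[0,\tfrac{\pi}{2}]$ satisfies $\langle\ve{x},\ve{x}'\rangle=\sin\ell$ and has geodesic distance exactly $\ell$ to $A$; hence $A_s=\{\ve{x}':\langle\ve{x},\ve{x}'\rangle\leq\sin s\}$ for $s\in[0,\tfrac{\pi}{2}]$, so $(A_s)^c=\{\langle\ve{x},\ve{x}'\rangle>\sin s\}$. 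Applying the isoperimetric bound and choosing $s=\arcsin t$ gives $\mathbb{P}(\langle\ve{x},\ve{x}'\rangle\geq t)\leq C_1\exp(-C_2 n(\arcsin t)^2)\leq C_1\exp(-C_2 n t^2)$, where the last step uses $\arcsin t\geq t$ on $[0,1]$; together with the symmetrization step above, this proves the lemma.

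The main obstacle is the normal-L\'{e}vy-family property itself, with the correct $n$-scale decay and genuinely $n$-independent constants. I would make this self-contained through the Gaussian model: take $\ve{x}'=\ve{g}/\|\ve{g}\|$ with $\ve{g}\sim\mathcal{N}(0,I_{n+1})$ and, without loss of generality, $\ve{x}=\ve{e}_0$, so $\langle\ve{x},\ve{x}'\rangle=g_0/\|\ve{g}\|$. Writing $S=\sum_{i=1}^n g_i^2\sim\chi^2_n$, one has $\{g_0^2\geq t^2\|\ve{g}\|^2\}\subseteq\{g_0^2\geq t^2 n/2\}\cup\{S<n/2\}$; a Gaussian tail bound controls the first event by $2\exp(-t^2 n/4)$, and a standard $\chi^2$ lower-tail bound gives $\mathbb{P}(S<n/2)\leq\exp(-cn)\leq\exp(-cnt^2)$ for $t\in(0,1]$, so that $\mathbb{P}(|\langle\ve{x},\ve{x}'\rangle|\geq t)\leq C_1\exp(-C_2 n t^2)$ directly, making the isoperimetric route optional. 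Two further care points: the dimension bookkeeping $\mathbb{S}^n\subset\mathbb{R}^{n+1}$ (which only changes constants, e.g.\ $n-1\geq n/2$ for $n\geq 2$), and, for a third alternative that avoids both isoperimetry and the Gaussian model, using the explicit marginal density of $u=\langle\ve{x},\ve{x}'\rangle$, proportional to $(1-u^2)^{(n-2)/2}$ on $[-1,1]$, together with $1-u^2\leq e^{-u^2}$ to bound $\int_t^1(1-u^2)^{(n-2)/2}\,du$.
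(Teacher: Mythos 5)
Your proposal is correct, but note that the paper itself does not prove this lemma at all: its ``proof'' is a one-line pointer to the literature (\citealp{brazitikos2014geometry}), so any actual derivation is necessarily your own. What you supply is a sound, essentially standard concentration-of-measure argument, and in fact three interchangeable ones. The isoperimetric route (hemisphere $A=\{\ve{x}':\langle\ve{x},\ve{x}'\rangle\le 0\}$, the identification $A_s=\{\langle\ve{x},\ve{x}'\rangle\le\sin s\}$, the substitution $s=\arcsin t\ge t$, and the reflection $\ve{x}'\mapsto\ve{x}'-2\langle\ve{x},\ve{x}'\rangle\ve{x}$ to symmetrize) is exactly the translation the cited reference would perform, so it buys fidelity to the statement as phrased (``normal L\'{e}vy family'') at the cost of taking L\'{e}vy's isoperimetric inequality as a black box. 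The Gaussian-model route is the one I would foreground: the decomposition $\{g_0^2\ge t^2\|\ve{g}\|^2\}\subseteq\{g_0^2\ge t^2 n/2\}\cup\{S<n/2\}$ with a Gaussian upper tail and a $\chi^2_n$ lower-tail Chernoff bound is fully self-contained, elementary, and delivers genuinely universal constants with the correct $e^{-cnt^2}$ scale; the absorption of $e^{-cn}$ into $e^{-cnt^2}$ for $t\in(0,1]$ is the right trick and the restriction to $t\le 1$ is harmless since the probability vanishes for $t>1$. The only place requiring a little extra care is the third alternative via the marginal density $\propto(1-u^2)^{(n-2)/2}$, where the normalizing constant grows like $\sqrt{n}$ and must be absorbed into $C_1$ (or offset by adjusting $C_2$); as sketched this is fine but is the least finished of the three. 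Either of your first two routes can stand in for the paper's citation without loss.
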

\begin{proof}
For a proof refer to \cite{brazitikos2014geometry}.
\end{proof}

\begin{lemma}\label{lemma:matrix_levy}
Given a family $\{\ve{x}_i\}_{1\leq i \leq m}$ of $m$ randomly chosen vectors in $\mathbb{S}^n$, then for each $i$
\[
\mathbb{P}(\bigcup_{j\neq i} (\lVert\langle \ve{x}_i, \ve{x}_j \rangle\rVert \geq t)) \leq C_1m\exp{(-C_2nt^2)}
\]
\end{lemma}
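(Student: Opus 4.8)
The plan is to obtain this statement as a direct union bound over Lemma \ref{lemma:levy}. First I would fix the index $i$ and condition on the realization of $\ve{x}_i$. Since the vectors $\{\ve{x}_j\}_{1\le j\le m}$ are drawn independently and uniformly from $\mathbb{S}^n$, conditioning on $\ve{x}_i$ leaves each of the remaining $\ve{x}_j$ with $j\neq i$ still uniformly distributed on $\mathbb{S}^n$. Hence Lemma \ref{lemma:levy} applies verbatim to the pair $(\ve{x}_i,\ve{x}_j)$, with $\ve{x}_i$ playing the role of the fixed vector, giving for every $j\neq i$
\[
\mathbb{P}\big(\lVert\langle \ve{x}_i, \ve{x}_j\rangle\rVert \geq t \mid \ve{x}_i\big) \leq C_1\exp(-C_2 n t^2),
\]
and this bound is uniform over all realizations of $\ve{x}_i$.

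Second, I would apply the union bound over the $m-1$ indices $j\neq i$:
\[
\mathbb{P}\big(\textstyle\bigcup_{j\neq i}\{\lVert\langle \ve{x}_i, \ve{x}_j\rangle\rVert \geq t\} \mid \ve{x}_i\big) \leq \sum_{j\neq i} C_1\exp(-C_2 n t^2) \leq C_1 m\exp(-C_2 n t^2).
\]
Because the right-hand side does not depend on $\ve{x}_i$, taking the expectation over $\ve{x}_i$ removes the conditioning and yields the claimed inequality. (If one prefers, the whole argument can be run unconditionally by simply noting that $\ve{x}_i$ and $\ve{x}_j$ are independent uniform vectors, so Lemma \ref{lemma:levy} holds after first conditioning on $\ve{x}_i$; the conditioning is only a bookkeeping device.)

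I do not expect any genuine obstacle here: the only point requiring care is that Lemma \ref{lemma:levy} is phrased for a fixed first vector and a uniformly random second vector, so one must explicitly invoke the mutual independence of the $\ve{x}_j$ to legitimise the conditioning step before the union bound. After that, it is a one-line estimate. I would also remark that this is a bound on the event that $\ve{x}_i$ is nearly orthogonal to \emph{all} other vectors; it is this per-index formulation (rather than a bound over all $\binom{m}{2}$ pairs, which would cost an extra factor of $m$) that is needed downstream when controlling the rows of a random Jacobian in Theorem \ref{theorem:alignment_gnt_bp}.
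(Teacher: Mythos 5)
Your proposal is correct and matches the paper's proof, which is exactly the union bound over the $m-1$ indices $j\neq i$ applied to Lemma \ref{lemma:levy}. The conditioning on $\ve{x}_i$ that you spell out is left implicit in the paper but is the right way to make the application of Lemma \ref{lemma:levy} rigorous.
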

\begin{proof}
The proof is a straightforward application of the union bound to Lemma \ref{lemma:levy}.
\end{proof}

\begin{theorem}\label{theorem:alignment_gnt_bp}
Let $A \in \mathbb{R}^{m\times n}$ be a matrix with $n \gg m$ and with rows sampled uniformly from the unit ball $\mathbb{B}^n$, scaled appropriately. Moreover let $s = \max_{i}\sqrt{\sum_{j=1}^n a_{ij}^2}$. Then $s^{-1}A^T$ is an $\epsilon-\text{pseudoinverse}$ of $s^{-1}A$ with high probability.
\end{theorem}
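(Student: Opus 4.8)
The plan is to verify the three defining properties of an $\epsilon$-pseudoinverse from Definition \ref{def:e_pseudo} for the pair $(B, B^T)$ with $B \triangleq s^{-1}A$. The third condition is immediate: $BB^T$ and $B^TB$ are real symmetric, hence Hermitian, for any real $B$. The first two conditions are equivalent to each other by transposition, since $\norm{BB^TB - B}_F = \norm{(BB^TB-B)^T}_F = \norm{B^TBB^T - B^T}_F$ because the Frobenius norm is transpose-invariant. So the whole argument reduces to controlling $\norm{BB^TB - B}_F^2 = \norm{(BB^T - I_m)B}_F^2$, i.e. to showing that $BB^T$ is close to the $m\times m$ identity.

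Next I would analyze $E \triangleq BB^T - I_m$ entrywise. Writing $\ve{a}_i$ for the $i$-th row of $A$, the diagonal entries are $[BB^T]_{ii} = \norm{\ve{a}_i}_2^2/s^2 = \norm{\ve{a}_i}_2^2/\max_k\norm{\ve{a}_k}_2^2 \in (0,1]$, and the off-diagonal entries are $[BB^T]_{ij} = \langle\ve{a}_i,\ve{a}_j\rangle/s^2$ for $i\neq j$. For the diagonal I would use concentration of volume in $\mathbb{B}^n$: a uniform point has radius $R$ with $\mathbb{P}(R\le 1-\delta) = (1-\delta)^n$, so a union bound over the $m$ rows gives $\norm{\ve{a}_i}_2 \ge 1-\delta$ for all $i$ with probability at least $1 - m(1-\delta)^n$; since also $\norm{\ve{a}_i}_2 \le s \le 1$, this forces $|E_{ii}| = 1 - \norm{\ve{a}_i}_2^2/s^2 \le 2\delta$. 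For the off-diagonal, normalizing by the row norms and invoking Lemma \ref{lemma:matrix_levy} (plus one further union bound over $i$) yields $|\langle\ve{a}_i,\ve{a}_j\rangle| \le t\,\norm{\ve{a}_i}_2\norm{\ve{a}_j}_2 \le t$ simultaneously for all $i\neq j$ with probability at least $1 - C_1 m^2\exp(-C_2 n t^2)$; combined with $s^2 \ge (1-\delta)^2$ this gives $|E_{ij}| \le 2t$ for $\delta$ small. Hence on the good event $\norm{E}_F^2 \le m(2\delta)^2 + m(m-1)(2t)^2 \le 4m^2\max(\delta,t)^2$.

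Finally I would convert this into a bound on $\norm{EB}_F$. Since $\norm{EB}_F \le \norm{E}_F\,\norm{B}_{\mathrm{op}}$ and $\norm{B}_{\mathrm{op}}^2 = \norm{BB^T}_{\mathrm{op}} \le 1 + \norm{E}_{\mathrm{op}} \le 1 + \norm{E}_F$, we get $\norm{BB^TB - B}_F^2 \le \norm{E}_F^2\big(1 + \norm{E}_F\big) \le 4m^2\max(\delta,t)^2\big(1 + 2m\max(\delta,t)\big)$. Choosing $t$ and $\delta$ of order $\sqrt{\epsilon}/m$ makes the right-hand side at most $\epsilon$, and the total failure probability $m(1-\delta)^n + C_1 m^2\exp(-C_2 n t^2)$ then tends to $0$ as $n\to\infty$ with $m$ fixed, which is the meaning of "$n\gg m$" and "with high probability"; the second Penrose condition follows from the transpose symmetry noted above, completing the verification.

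As for difficulty: the algebra is routine once the concentration inputs are in place, so the main obstacle is being careful about the two distinct sources of randomness and their union bounds — in particular the normalization $s = \max_k\norm{\ve{a}_k}_2$, which couples the rows, and combining the diagonal concentration (volume near the sphere) with the off-diagonal concentration (Lemma \ref{lemma:matrix_levy}) using compatible choices of $\delta$ and $t$ so that the final Frobenius bound scales correctly in $m$. A secondary subtlety is that the required $t\sim\sqrt{\epsilon}/m$ forces $n\gtrsim m^2\log m/\epsilon$ for a non-vacuous probability bound, so "$n\gg m$" should be read in that quantitative sense.
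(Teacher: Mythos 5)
Your proposal is correct, and it takes a cleaner route than the paper's proof while proving slightly more. The paper also reduces to controlling $BB^TB-B$ via Lemma \ref{lemma:matrix_levy}, but it works entrywise, expanding $\norm{BB^TB-B}_F^2$ into three parts and bounding each with the definition of $s$ and Cauchy--Schwarz; the price is that the row-norm discrepancy enters through the deterministic quantities $c_i = s^{-3}\langle \ve{x}_i,\ve{x}_i\rangle - s^{-1}$, and the proof must \emph{assume} $\epsilon > 2s^2\sum_i c_i^2$, i.e.\ that the rows have nearly equal norms (the paper explicitly flags this and notes the argument simplifies when rows are normalized). You instead factor the error as $\norm{(BB^T-I)B}_F \le \norm{BB^T-I}_F\,\norm{B}_{\mathrm{op}}$ and control $BB^T-I$ directly: the off-diagonal entries by the same L\'evy concentration (applied, more carefully than in the paper, to the normalized directions, which are uniform on the sphere and independent of the radii), and the diagonal entries by the elementary fact that $\mathbb{P}(\norm{\ve{a}_i}_2 \le 1-\delta) = (1-\delta)^n$ for a uniform point in $\mathbb{B}^n$. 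This converts the paper's matrix-dependent assumption into an event that holds with high probability under the stated sampling model, so your version actually discharges a hypothesis the paper leaves open, at the cost of one extra union bound and the (standard) inequalities $\norm{XY}_F\le\norm{X}_F\norm{Y}_{\mathrm{op}}$ and $\norm{B}_{\mathrm{op}}^2\le 1+\norm{BB^T-I}_F$. Your closing quantitative reading of ``$n\gg m$'' as $n\gtrsim m^2\log m/\epsilon$ matches the scale of the paper's final bound $n=\Omega(\max\{K_1^{-2},K_2^{-2}\}\log(C_1m/\delta))$.
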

\begin{proof}
Call $B \triangleq s^{-1}A$. $AA^T$ and $A^TA$ are Hermitian, hence the corresponding forms with $B$ are Hermitian as well and they satisfy condition $3$ of Definition \ref{def:e_pseudo}.
Consider the row vectors $\{\ve{x}_i\}_{1\leq i \leq m}$ of $A$. It is easy to see that 
\[
(BB^T)_{ij} = s^{-2}\langle \ve{x}_i, \ve{x}_j \rangle
\]
The row vectors of $B$ are by construction contained in the unit sphere, so it is possible to apply Lemma \ref{lemma:matrix_levy}. With probability greater than $1 - C_1m\exp{(-C_2nt^2)}$
\begin{align}
\norm{(BB^TB - B)_{ij}}^2 &= \norm{\sum_{l = 1}^m s^{-3}\langle \ve{x}_i, \ve{x}_l \rangle a_{lj} - s^{-1}a_{ij}}^2\\
&\leq \left(s^{-3}\norm{\sum_{\substack{l = 1\\ l\neq i}}^m \langle \ve{x}_i, \ve{x}_l \rangle a_{lj}} + \norm{(s^{-3}\langle \ve{x}_i, \ve{x}_i\rangle-s^{-1})a_{ij}}\right)^2\\
&\overset{\text{Cauchy-Schwarz}}{\leq} \left(\frac{t\sqrt{m-1}}{s^3}\sqrt{\sum_{\substack{l = 1\\ l\neq i}}^m a_{lj}^2} + \norm{(s^{-3}\langle \ve{x}_i, \ve{x}_i\rangle-s^{-1})a_{ij}}\right)^2
\end{align}
For ease of notation let $c_i = s^{-3}\langle \ve{x}_i, \ve{x}_i\rangle-s^{-1}$.
Now
\begin{align}
\norm{BB^TB - B}_F^2 &< \sum_{j=1}^n\sum_{i=1}^m\left(\frac{t\sqrt{m-1}}{s^3}\sqrt{\sum_{\substack{l = 1\\ l\neq i}}^m a_{lj}^2} + \norm{c_ia_{ij}}\right)^2\\
&= \sum_{j=1}^n\sum_{i=1}^m\left( \frac{t^2(m-1)}{s^6}\sum_{\substack{l = 1\\ l\neq i}}^m a_{lj}^2 + 2\norm{c_ia_{ij}}\frac{t\sqrt{m-1}}{s^3}\sqrt{\sum_{\substack{l = 1\\ l\neq i}}^m a_{lj}^2} + (c_ia_{ij})^2\right)^2\\
&= \underbrace{t^2(m-1)s^{-6}\sum_{i=1}^m\sum_{j=1}^n\sum_{\substack{l = 1\\ l\neq i}}^m a_{lj}^2}_\text{Part 1} + \underbrace{2s^{-3}t\sqrt{m-1}\sum_{j=1}^n\sum_{i=1}^m\norm{c_ia_{ij}}\sqrt{\sum_{\substack{l = 1\\ l\neq i}}^m a_{lj}^2}}_\text{Part 2}\\ &+ \underbrace{\sum_{j=1}^n\sum_{i=1}^m(c_ia_{ij})^2}_\text{Part 3}
\end{align}
We can easily bound both Part 1 and Part 3 using the definition of $s$, in particular
\begin{align}
\text{Part 1} &= t^2(m-1)s^{-6}\sum_{\substack{l = 1\\ l\neq i}}^m\sum_{i=1}^m\sum_{j=1}^n a_{lj}^2 \leq t^2(m-1)s^{-6}\sum_{\substack{l = 1\\ l\neq i}}^m\sum_{i=1}^m s^2 = t^2m(m-1)^2s^{-4}\\
\text{Part 3} &= \sum_{i=1}^mc_i^2\sum_{j=1}^na_{ij}^2 \leq s^2\sum_{i=1}^mc_i^2
\end{align}

We will bound Part 2 using a rougher estimate, in particular (we do not consider the constants here)
\begin{align}
\text{Part 2} &\propto \sum_{j=1}^n\sum_{i=1}^m\norm{c_ia_{ij}}\sqrt{\sum_{\substack{l = 1\\ l\neq i}}^m a_{lj}^2} \leq \left(\sum_{i=1}^m\norm{c_i}\right)\sum_{j=1}^n\sum_{i=1}^m\norm{a_{ij}}\sqrt{\sum_{\substack{l = 1\\ l\neq i}}^m a_{lj}^2}\\
&\leq \left(\sum_{i=1}^m\norm{c_i}\right)\sum_{j=1}^n\sum_{i=1}^m\sqrt{\sum_{l=1}^ma_{lj}^2}\sqrt{\sum_{\substack{l = 1\\ l\neq i}}^m a_{lj}^2} \leq \left(\sum_{i=1}^m\norm{c_i}\right)\sum_{j=1}^n\sum_{i=1}^m\sqrt{\sum_{l=1}^ma_{lj}^2}\sqrt{\sum_{l=1}^m a_{lj}^2}\\
&= \left(\sum_{i=1}^m\norm{c_i}\right)\sum_{j=1}^n\sum_{i=1}^m\sum_{l=1}^ma_{lj}^2 \leq m^2s^2\sum_{i=1}^m\norm{c_i}
\end{align}
Now
\[
\text{Part 2} + \text{Part 3} < 2s^{-1}tm^2\sqrt{m-1}\sum_{i=1}^m\norm{c_i} + s^2\sum_{i=1}^mc_i^2 < \epsilon/2
\]
for 
\[
t < \frac{\epsilon-2s^2\sum_{i=1}^mc_i^2}{4s^{-1}m^2\sqrt{m-1}\sum_{i=1}^m\norm{c_i}} \triangleq K_1
\]
Here we assume necessarily that $\epsilon > 2s^2\sum_{i=1}^mc_i^2$. This assumption relies on $c_i$, a matrix dependent measure of the discrepancy in the magnitude of row vectors. It is equal to zero when each row vector is normalised.
We also want $\text{Part 1} < \epsilon/2$, so we need
\[
t < \frac{\sqrt{\epsilon}}{m\sqrt{m-1}s^2} \triangleq K_2
\]
Let $t < \min\{K_1, K_2\}$, putting everything together
\[
\norm{BB^TB - B}_F^2 < \frac{\epsilon}{2}+\frac{\epsilon}{2} = \epsilon
\]
Note that the bound $K_1$ is infinite when $c_i$, the row magnitude discrepancy, is zero for all $i$. This was expected, as Part 2 and Part 3 vanish for $c_i=0$, and Part 1 is bounded by $K_2$. $\norm{B^TBB^T - B^T}_F^2$ can be bounded analogously, since the Frobenius norm is invariant to transposition.

\noindent We have that with probability greater than $1 - C_1m\exp{(-C_2n\min\left\{K_1^2, K_2^2\right\})}$
\begin{enumerate}
    \item $\norm{BB^TB - B}_F^2 < \epsilon$
    \item $\norm{B^TBB^T - B^T}_F^2 < \epsilon$
\end{enumerate}

\noindent Finally for $n = \Omega(\max\left\{\left(\dfrac{1}{K_1}\right)^2, \left(\dfrac{1}{K_2}\right)^2\right\}\log(C_1m/\delta))$ $B^T$ satisfies all the conditions of Definition \ref{def:e_pseudo} with probability greater than $1-\delta$, so it is an $\epsilon-\text{pseudoinverse}$ of $B$.
\end{proof}
If we restrict ourselves to matrices whose row vectors have constant magnitude, the proof becomes much easier, as Part 2 and Part 3 vanish. This assumption is reasonable when working in very high dimensional spaces, however, we wanted to provide a point of view that is as general as possible. Further, we tailored the proof towards a uniform distribution of the rows in a scaled version of $\mathbb{B}^n$, such that Lemma \ref{lemma:matrix_levy} can be applied directly. However, the weight matrices in neural networks are often initialized with a uniform random distribution for each element separately, which results in a uniform distribution of the rows in a scaled hyper-cube, instead of the unit hyper-ball. We envision that a variation of Theorem \ref{theorem:alignment_gnt_bp} can be made for this case, as the hyper-cube is contained in the unit ball after multiplying the matrix $A$ with $1/s$ and the angles between the randomly drawn rows will have also have a distribution concentrated around 90 degrees (on which Lemma \ref{lemma:levy} is based).

\paragraph{Conclusion theoretical results.}
Our theoretical analysis of TP shows that TP for invertible networks can best be interpreted as a hybrid method between GN and gradient descent. For non-invertible networks, we showed that DTP leads to inefficient parameter updates, which can be resolved by using our new difference reconstruction loss that restores the hybrid method between GN and gradient descent. We highlighted that the GN-part of the hybrid method implicitly operates in an over-parameterized setting, which leads to minimum-norm parameter and target updates and cannot be compared to GN in an under-parameterized setting. For linear networks, we proved that GNT (the idealized version of DTP combined with DRL) is an approximation of GN optimization with a minibatch size of 1 and that the method converges to the global minimum. For nonlinear networks, we established the minimum-norm target update interpretation and showed that for strongly contracting networks, the GNT parameter update aligns in high probability with the loss gradients. 

\section{Further details on experimental results} \label{section:app_experiments}
In this section, we provide further details on the experimental results. 
\subsection{Details on the used methods} \label{section:app_experimental_details}
\paragraph{Using softmax and cross-entropy loss in the TP framework.}
For all our experiments, we used a softmax output layer combined with the cross-entropy loss. In order to incorporate this in the TP framework, it is best to combine the softmax and cross-entropy loss into one output loss:
\begin{align}
    \Lagr^{\text{combined}} = -\sum_{b=1}^B \ve{l}^{(b)T}\log\big(\text{softmax}(\ve{h}_L^{(b)})\big)
\end{align}
with $\ve{l}^{(b)}$ the one-hot vector representing the class label of training sample $b$ and $\log$ the element-wise logarithmic function. Then the network has a linear output layer $\ve{h}_L$ and the output target is computed as
\begin{align}
    \hat{\ve{h}}_L^{(b)} = \ve{h}_L^{(b)} - \hat{\eta} \frac{\partial \Lagr^{\text{combined}}}{\partial \ve{h}_L^{(b)}}
\end{align}
The feedback weights are then trained with the difference reconstruction loss, assuming a linear output layer. For this combined loss, the minimum-norm target update interpretation from section \ref{section:sec_3_4} holds, as this interpretation is independent from the used loss function. The connection with GN optimization however is based on an $L_2$ output loss, thus does not hold anymore entirely for $\Lagr^{\text{combined}}$. The Gauss-Newton optimization method can be generalized towards other loss functions by incorporating the \textit{Generalized Gauss-Newton} framework \citep{schraudolph2002fast, martens2016second}. In appendix \ref{appendix:GGN} we show theoretically how the target propagation framework can be adapted to incorporate this generalized Gauss-Newton framework in the computation of its hidden layer targets. 

\paragraph{Algorithms of the methods.}
Algorithms \ref{al:DTP}, \ref{al:DTPDRL}, \ref{al:DDTP}, \ref{al:DDTP-rec} and \ref{al:DDTP-control} provide the training details for DTP, DTPDRL, DDTP, DDTP-rec and DDTP-control respectively. The DDTP-rec method is explained in section \ref{section:sec_3_3}.  For all methods, $f_i$ is parameterized as $f_i(\ve{h}_{i-1}) = \tanh(W_i\ve{h}_{i-1} + \ve{b}_i)$. We chose the $\tanh$ nonlinearity, as this gave the best experimental results for the target propagation methods, which was also observed in \citet{lee2015difference} and \citet{bartunov2018assessing}. Depending on the method, different parameterizations for $g_i$ are used.
\begin{itemize}
    \item DTP: $g_i(\ve{h}_{i+1}) = \tanh(Q_i\ve{h}_{i+1} + \ve{c}_i)$
    \item DTPDRL: $g_i(\ve{h}_{i+1}) = \tanh(Q_i\ve{h}_{i+1} + \ve{c}_i)$
    \item DDTP-linear: $g_i(\ve{h}_{L}) = Q_i\ve{h}_{L} + \ve{c}_i$
    \item DDTP-control: $g_i(\ve{h}_{L}) = Q_i\ve{h}_{L} + \ve{c}_i$
    \item DDTP-RHL: $g_i(\ve{h}_{L}) = \tanh\big(Q_i\tanh(R\ve{h}_{L} + \ve{d}) + \ve{c}_i\big)$
    \item DDTP-RHL (rec): $g_i(\ve{h}_{L}, \ve{h}_i) = \tanh\big(Q_i\tanh(R\ve{h}_{L} + \ve{d}) + S_i\ve{h}_i + \ve{c}_i\big)$
\end{itemize}
In the DDTP-RHL variants, $R$ and $\ve{d}$ are a fixed random matrix and vector, which are shared for all $g_i$.

\begin{algorithm}[h!]
Propagate minibatch activities forwards:\\
\For{i in \text{range}(1,L)}{
\For{b in \text{range}(1,B)}{
$\ve{h}_i^{(b)} = f_i(\ve{h}_{i-1}^{(b)})$ \\
}}
Compute output loss $\Lagr$ on the current minibatch \\
Compute output targets: \\
\For{b in \text{range}(1,B)}{
$\hat{\ve{h}}_L^{(b)}= \ve{h}_L^{(b)} - \hat{\eta}\frac{\partial \Lagr}{\ve{h}_L^{(b)}}$\\
}
Propagate target backwards: \\
\For{i in \text{range}(L-1,1)}{
\For{b in \text{range}(1,B)}{
$\hat{\ve{h}}_i^{(b)} = g_i(\hat{\ve{h}}_{i+1}^{(b)}) + \ve{h}_i^{(b)} - g_i(\ve{h}_{i+1}^{(b)})$ \\
}}
Train the feedback parameters:\\ 
\For{i in \text{range}(1, L-1)}{
\For{b in \text{range}(1,B)}{
Generate corrupted activity:\\
$\tilde{\ve{h}}_i^{(b)} = \ve{h}_i^{(b)} + \sigma \ve{\epsilon}, \quad \ve{\epsilon} \sim \mathcal{N}(0,1)$\\
}
Compute the local reconstruction loss: \\
$\Lagr_i^{\text{rec}} = \frac{1}{B}\sum_b\| g_i\big(f_{i+1}(\tilde{\ve{h}}_i^{(b)})\big) - \tilde{\ve{h}}_i^{(b)}\|_2^2$ \\
Update the parameters of $g_i$ with a gradient descent step on $\Lagr_i^{\text{rec}}$ \\
}
Train the forward parameters: \\
\For{i in \text{range}(1, L)}{
Compute the local loss: \\
$\Lagr_i = \frac{1}{B}\sum_b\| \hat{\ve{h}}_i^{(b)} - \ve{h}_i^{(b)} \|_2^2$\\
Update the parameters of $f_i$ with a gradient descent step on $\Lagr_i$\\

}
\caption{DTP training iteration on one minibatch}
\label{al:DTP}
\end{algorithm}

\begin{algorithm}[h!]
Propagate minibatch activities forwards:\\
\For{i in \text{range}(1,L)}{
\For{b in \text{range}(1,B)}{
$\ve{h}_i^{(b)} = f_i(\ve{h}_{i-1}^{(b)})$ \\
}}
Compute output loss $\Lagr$ on the current minibatch \\
Compute output targets: \\
\For{b in \text{range}(1,B)}{
$\hat{\ve{h}}_L^{(b)}= \ve{h}_L^{(b)} - \hat{\eta}\frac{\partial \Lagr}{\ve{h}_L^{(b)}}$\\
}
Propagate target backwards: \\
\For{i in \text{range}(L-1,1)}{
\For{b in \text{range}(1,B)}{
$\hat{\ve{h}}_i^{(b)} = g_i(\hat{\ve{h}}_{i+1}^{(b)}) + \ve{h}_i^{(b)} - g_i(\ve{h}_{i+1}^{(b)})$ \\
}}
Train the feedback parameters:\\ 
\For{i in \text{range}(1, L-1)}{
\For{b in \text{range}(1,B)}{
Generate corrupted activity:\\
$\tilde{\ve{h}}_i^{(b)} = \ve{h}_i^{(b)} + \sigma \ve{\epsilon}, \quad \ve{\epsilon} \sim \mathcal{N}(0,1)$\\
Propagate the corrupted activity to the output layer: \\
\For{k in \text{range}(i+1, L)}{
$\tilde{\ve{h}}_k^{(b)} = f_i(\tilde{\ve{h}}_{k-1}^{(b)})$
}
Propagate the corrupted output activity backwards to reconstruct $\tilde{\ve{h}}_i$: \\
$\ve{h}_L^{\text{rec}(b)} = \tilde{\ve{h}}_L^{(b)}$\\
\For{k in \text{range}(L-1,i)}{
$\ve{h}_k^{\text{rec}(b)} = g_k(\ve{h}_{k+1}^{\text{rec}(b)}) + \ve{h}_k^{(b)} - g_k(\ve{h}_{k+1}^{(b)})$ \\
}
}
Compute the difference reconstruction loss: \\
$\Lagr_i^{\text{diff,rec}} = \frac{1}{B}\sum_b\| \ve{h}_i^{\text{rec}(b)} - \tilde{\ve{h}}_i^{(b)}\|_2^2$ \\
Update the parameters of $g_i$ with a gradient descent step on $\Lagr_i^{\text{diff,rec}}$ $+$ weight decay\\
}
Train the forward parameters: \\
\For{i in \text{range}(1, L)}{
Compute the local loss: \\
$\Lagr_i = \frac{1}{B}\sum_b\| \hat{\ve{h}}_i^{(b)} - \ve{h}_i^{(b)} \|_2^2$\\
Update the parameters of $f_i$ with a gradient descent step on $\Lagr_i$\\

}
\caption{DTPDRL training iteration on one minibatch}
\label{al:DTPDRL}
\end{algorithm}

\begin{algorithm}[h!]
Propagate minibatch activities forwards:\\
\For{i in \text{range}(1,L)}{
\For{b in \text{range}(1,B)}{
$\ve{h}_i^{(b)} = f_i(\ve{h}_{i-1}^{(b)})$ \\
}}
Compute output loss $\Lagr$ on the current minibatch \\
Compute output targets: \\
\For{b in \text{range}(1,B)}{
$\hat{\ve{h}}_L^{(b)}= \ve{h}_L^{(b)} - \hat{\eta}\frac{\partial \Lagr}{\ve{h}_L^{(b)}}$\\
}
Propagate target backwards: \\
\For{i in \text{range}(L-1,1)}{
\For{b in \text{range}(1,B)}{
$\hat{\ve{h}}_i^{(b)} = g_i(\hat{\ve{h}}_{L}^{(b)}) + \ve{h}_i^{(b)} - g_i(\ve{h}_{L}^{(b)})$ \\
}}
Train the feedback parameters:\\ 
\For{i in \text{range}(1, L-1)}{
\For{b in \text{range}(1,B)}{
Generate corrupted activity:\\
$\tilde{\ve{h}}_i^{(b)} = \ve{h}_i^{(b)} + \sigma \ve{\epsilon}, \quad \ve{\epsilon} \sim \mathcal{N}(0,1)$\\
Propagate the corrupted activity to the output layer: \\
\For{k in \text{range}(i+1, L)}{
$\tilde{\ve{h}}_k^{(b)} = f_i(\tilde{\ve{h}}_{k-1}^{(b)})$
}
Propagate the corrupted output activity backwards to reconstruct $\tilde{\ve{h}}_i$: \\
$\ve{h}_i^{\text{rec}(b)} = g_i(\tilde{\ve{h}}_L^{(b)}) + \ve{h}_i^{(b)} - g_i(\ve{h}_{L}^{(b)})$ \\
}
Compute the difference reconstruction loss: \\
$\Lagr_i^{\text{diff,rec}} = \frac{1}{B}\sum_b\| \ve{h}_i^{\text{rec}(b)} - \tilde{\ve{h}}_i^{(b)}\|_2^2$ \\
Update the parameters of $g_i$ with a gradient descent step on $\Lagr_i^{\text{diff,rec}}$ $+$ weight decay\\
}
Train the forward parameters: \\
\For{i in \text{range}(1, L)}{
Compute the local loss: \\
$\Lagr_i = \frac{1}{B}\sum_b\| \hat{\ve{h}}_i^{(b)} - \ve{h}_i^{(b)} \|_2^2$\\
Update the parameters of $f_i$ with a gradient descent step on $\Lagr_i$\\

}
\caption{DDTP training iteration on one minibatch}
\label{al:DDTP}
\end{algorithm}

\begin{algorithm}[h!]
Propagate minibatch activities forwards:\\
\For{i in \text{range}(1,L)}{
\For{b in \text{range}(1,B)}{
$\ve{h}_i^{(b)} = f_i(\ve{h}_{i-1}^{(b)})$ \\
}}
Compute output loss $\Lagr$ on the current minibatch \\
Compute output targets: \\
\For{b in \text{range}(1,B)}{
$\hat{\ve{h}}_L^{(b)}= \ve{h}_L^{(b)} - \hat{\eta}\frac{\partial \Lagr}{\ve{h}_L^{(b)}}$\\
}
Propagate target backwards: \\
\For{i in \text{range}(L-1,1)}{
\For{b in \text{range}(1,B)}{
$\hat{\ve{h}}_i^{(b)} = g_i(\hat{\ve{h}}_{L}^{(b)}, \ve{h}_i) + \ve{h}_i^{(b)} - g_i(\ve{h}_{L}^{(b)}, \ve{h}_i)$ \\
}}
Train the feedback parameters:\\ 
\For{i in \text{range}(1, L-1)}{
\For{b in \text{range}(1,B)}{
Generate corrupted activity:\\
$\tilde{\ve{h}}_i^{(b)} = \ve{h}_i^{(b)} + \sigma \ve{\epsilon}, \quad \ve{\epsilon} \sim \mathcal{N}(0,1)$\\
Propagate the corrupted activity to the output layer: \\
\For{k in \text{range}(i+1, L)}{
$\tilde{\ve{h}}_k^{(b)} = f_i(\tilde{\ve{h}}_{k-1}^{(b)})$
}
Propagate the corrupted output activity backwards to reconstruct $\tilde{\ve{h}}_i$: \\
$\ve{h}_i^{\text{rec}(b)} = g_i(\tilde{\ve{h}}_L^{(b)}, \ve{h}_i) + \ve{h}_i^{(b)} - g_i(\ve{h}_{L}^{(b)}, \ve{h}_i)$ \\
}
Compute the difference reconstruction loss: \\
$\Lagr_i^{\text{diff,rec}} = \frac{1}{B}\sum_b\| \ve{h}_i^{\text{rec}(b)} - \tilde{\ve{h}}_i^{(b)}\|_2^2$ \\
Update the parameters of $g_i$ with a gradient descent step on $\Lagr_i^{\text{diff,rec}}$ $+$ weight decay\\
}
Train the forward parameters: \\
\For{i in \text{range}(1, L)}{
Compute the local loss: \\
$\Lagr_i = \frac{1}{B}\sum_b\| \hat{\ve{h}}_i^{(b)} - \ve{h}_i^{(b)} \|_2^2$\\
Update the parameters of $f_i$ with a gradient descent step on $\Lagr_i$\\

}
\caption{DDTP (rec) training iteration on one minibatch}
\label{al:DDTP-rec}
\end{algorithm}

\begin{algorithm}[h!]
Propagate minibatch activities forwards:\\
\For{i in \text{range}(1,L)}{
\For{b in \text{range}(1,B)}{
$\ve{h}_i^{(b)} = f_i(\ve{h}_{i-1}^{(b)})$ \\
}}
Compute output loss $\Lagr$ on the current minibatch \\
Compute output targets: \\
\For{b in \text{range}(1,B)}{
$\hat{\ve{h}}_L^{(b)}= \ve{h}_L^{(b)} - \hat{\eta}\frac{\partial \Lagr}{\ve{h}_L^{(b)}}$\\
}
Propagate target backwards: \\
\For{i in \text{range}(L-1,1)}{
\For{b in \text{range}(1,B)}{
$\hat{\ve{h}}_i^{(b)} = g_i(\hat{\ve{h}}_{L}^{(b)}) + \ve{h}_i^{(b)} - g_i(\ve{h}_{L}^{(b)})$ \\
}}
Train the feedback parameters:\\ 
\For{i in \text{range}(1, L-1)}{
\For{b in \text{range}(1,B)}{
Generate corrupted activity:\\
$\tilde{\ve{h}}_i^{(b)} = \ve{h}_i^{(b)} + \sigma \ve{\epsilon}, \quad \ve{\epsilon} \sim \mathcal{N}(0,1)$\\
Propagate the corrupted activity to the output layer: \\
\For{k in \text{range}(i+1, L)}{
$\tilde{\ve{h}}_k^{(b)} = f_i(\tilde{\ve{h}}_{k-1}^{(b)})$
}
Propagate the corrupted output activity backwards to reconstruct $\tilde{\ve{h}}_i$ (without the difference correction): \\
$\ve{h}_i^{\text{rec}(b)} = g_i(\tilde{\ve{h}}_L^{(b)})$ \\
}
Compute the reconstruction loss: \\
$\Lagr_i^{\text{rec}} = \frac{1}{B}\sum_b\| \ve{h}_i^{\text{rec}(b)} - \tilde{\ve{h}}_i^{(b)}\|_2^2$ \\
Update the parameters of $g_i$ with a gradient descent step on $\Lagr_i^{\text{rec}}$ $+$ weight decay\\
}
Train the forward parameters: \\
\For{i in \text{range}(1, L)}{
Compute the local loss: \\
$\Lagr_i = \frac{1}{B}\sum_b\| \hat{\ve{h}}_i^{(b)} - \ve{h}_i^{(b)} \|_2^2$\\
Update the parameters of $f_i$ with a gradient descent step on $\Lagr_i$\\

}
\caption{DDTP-control training iteration on one minibatch}
\label{al:DDTP-control}
\end{algorithm}

\clearpage

\paragraph{CNN implementation.} The convolutional blocks consist of a sequence of a convolutional layer, a $\tanh$ activation function and a maxpool layer. For the TP variants, the feedback pathways propagate a target $\hat{\ve{h}}_i$ for the activations $\ve{h}_i$ of the maxpool layer. The filter weights of the convolutional layer are then updated with a gradient step on $\|\hat{\ve{h}}_i - \ve{h}_i\|^2_2$, in line with the above algorithms. For DFA, the feedback connections randomly project the output error to an error for the maxpool layer, after which automatic differentiation is used to compute the update for the filter weights. 

\paragraph{Training details.} For all target propagation variants, we trained both the forward and feedback parameters on each minibatch. Note that \citet{lee2015difference} alternated between training the feedback weights for one epoch while keeping the forward weights fixed and training the forward weights for one epoch while keeping the feedback weights fixed. The difference between this alternating approach and the normal approach of training all parameters was investigated in \citet{bartunov2018assessing}. As in our methods, the feedback paths should approximate the pseudo inverse of the forward paths (which is continuously changing), we chose the normal approach. Furthermore, to improve the training of the feedback paths, we pre-train the feedback parameters for some epochs before the start of the training and we insert one or more epochs for training only the feedback parameters, between each epoch of training both forward and feedback parameters:
\begin{itemize}
    \item MNIST: 6 epochs of pretraining and 1 epoch of pure feedback training between each epoch of training both forward and feedback parameters
    \item Fashion-MNIST: 6 epochs of pretraining and 1 epoch of pure feedback training between each epoch of training both forward and feedback parameters
    \item MNIST-frozen: 10 epochs of pretraining and 0 epochs of pure feedback training between each epoch of training both forward and feedback parameters
    \item CIFAR10: 10 epochs of pretraining and 2 epoch of pure feedback training between each epoch of training both forward and feedback parameters
    \item CIFAR10 with CNNs: 10 epochs of pretraining and 1 epoch of pure feedback training between each epoch of training both forward and feedback parameters
\end{itemize}
These periods of pure feedback training could be interpreted as a sleeping-phase of the network. For the DDTP-RHL method, we observed that the feedback parameters needed extra time for training, as the feedback weights are of much bigger dimension than the forward weights. Therefore, we investigated the DDTP-RHL methods with extra feedback training (indicated by DDTP-RHL (extra FB) and DDTP-RHL(rec. and extra FB) in the experiments):
\begin{itemize}
    \item MNIST: 10 epochs of pretraining and 4 epochs of pure feedback training between each epoch of training both forward and feedback parameters
    \item Fashion-MNIST: 10 epochs of pretraining and 4 epochs of pure feedback training between each epoch of training both forward and feedback parameters
    \item MNIST-frozen: 10 epochs of pretraining and 4 epochs of pure feedback training between each epoch of training both forward and feedback parameters
    \item CIFAR10: 10 epochs of pretraining and 4 epochs of pure feedback training between each epoch of training both forward and feedback parameters
\end{itemize}
We use a separate ADAM optimizer \citep{kingma2014adam} for training the forward and feedback parameters respectively. 

\paragraph{Architecture details.} We use fully connected (FC) layers for all datasets and architectures corresponding to \citet{bartunov2018assessing}. 
\begin{itemize}
    \item MNIST: 5 FC hidden layers of 256 neurons + 1 softmax layer of 10 neurons (+ one random hidden feedback layer of 1024 neurons for the DDTP-RHL variants)
    \item Fashion-MNIST: 5 FC hidden layers of 256 neurons + 1 softmax layer of 10 neurons (+ one random hidden feedback layer of 1024 neurons for the DDTP-RHL variants)
    \item MNIST-frozen: 5 FC hidden layers of 256 neurons + 1 softmax layer of 10 neurons (+ one random hidden feedback layer of 1024 neurons for the DDTP-RHL variants)
    \item CIFAR10: 3 FC hidden layers of 1024 neurons + 1 softmax layer of 10 neurons (+ one random hidden feedback layer of 2048 neurons for the DDTP-RHL variants)
    \item CIFAR10 with CNN: 
    \begin{itemize}
        \item Conv 5x5x32, stride 1 (with $\tanh$ nonlinearity)
        \item Maxpool 3x3, stride 2
        \item Conv 5x5x64, stride 1 (with $\tanh$ nonlinearity)
        \item Maxpool 3x3, stride 2
        \item FC hidden layer with 512 neurons (with $\tanh$ nonlinearity)
        \item Softmax layer of 10 neurons
    \end{itemize}
\end{itemize}

\paragraph{Numerical errors.} 
We observed that using the standard data type \texttt{float32} for the tensors in PyTorch \citep{pytorch} leads to numerical errors when computing the weight updates of the network. The output target is computed as $\hat{\ve{h}}_L = \ve{h}_L - \hat{\eta}\ve{e}_L$. When the output gradient $\ve{e}_L$ becomes very small during training, \texttt{float32} cannot represent the small addition $\hat{\eta}\ve{e}_L$ to the relatively large output activation $ \ve{h}_L$. Hence, when you compute the target updates  $\Delta \ve{h}_i = \hat{\ve{h}}_i - \ve{h}_i$, $\hat{\eta}\ve{e}_L$ is lost and $\Delta \ve{h}_i \approx 0$. This problem can be resolved by using \texttt{float64}.

\paragraph{Hyperparameters.}
All hyperparameter searches were done based on the best validation error over 100 epochs, with a validation set of 5000 samples, taken from the training set (60000 training samples in MNIST and Fashion-MNIST, 50000 training samples in CIFAR10). We used the Tree of Parzen Estimators algorithm \citep{bergstra2011algorithms} for performing the hyperparameter search, based on the Hyperopt \citep{bergstra2013hyperopt} and Ray Tune \citep{liaw2018tune} Python packages. Table \ref{tab:hp_symbols} summarizes which hyperparameters were used for the various methods. The hyperparameter search intervals for the various methods are given in Table \ref{tab:hp_DDTP}, \ref{tab:hp_DTP}, \ref{tab:hp_DTP_pretrained} and \ref{tab:hp_BP_DFA}. Note that we used different search intervals for $\hat{\eta}$ and $\sigma$ in for our new methods (DDTP and DTPDRL) compared to the original DTP methods. Our theory is based on small values for $\hat{\eta}$ and $\sigma$, whereas the implementations of DTP in the literature used relatively big values for $\hat{\eta}$ and $\sigma$ \citep{lee2015difference, bartunov2018assessing}. The original DTP method does not use weight decay on the feedback parameters, hence we did not include it for this method. For enabling a fair comparison with our methods, we included the possibility of weight decay on the feedback parameters in DTP-pretrained. The specific hyperparameter configurations for all methods can be found in our code base.\footnote{PyTorch implementation of all methods is available on \url{github.com/meulemansalex/theoretical_framework_for_target_propagation}} In the hyperparameter searches, we treated $\text{lr}\cdot\hat{\eta}$ and $\hat{\eta}$ as hyperparameters instead of $\text{lr}$ and $\hat{\eta}$ separately, as the stepsize of the forward parameter updates $\Delta W_i$ is determined by $\text{lr}\cdot\hat{\eta}$. The value of $\hat{\eta}$ then decides the trade-off between $\text{lr}$ and $\hat{\eta}$: the smaller $\hat{\eta}$, the better the Taylor approximations in the theory hold and thus the closer the method is related to GN, however, the implementation is more prone to numerical errors for small $\hat{\eta}$. For the CNNs, we have a separate Adam optimizer for each layer to train the forward weights and one shared Adam optimizer for training all feedback weights. We fix all $\beta$ values to $\beta_1=\beta_{1,\text{fb}} = 0.9$ and $\beta_2=\beta_{2,\text{fb}} = 0.999$

\begin{table}[h]
\centering
\caption{Hyperparameter symbols}
\label{tab:hp_symbols}
\begin{tabular}{c|c}
\toprule
Symbol   & Hyperparameter  \\
\midrule
$\text{lr}$ & learning rate of the Adam optimizer for the forward parameters \\
$\beta_1$ & $\beta_1$ parameter of the Adam optimizer for the forward parameters\\
$\beta_2$ & $\beta_2$ parameter of the Adam optimizer for the forward parameters\\
$\epsilon$ & $\epsilon$ parameter of the Adam optimizer for the forward parameters\\
$\text{lr}_{\text{fb}}$ & learning rate of the Adam optimizer for the feedback parameters\\
$\beta_{\text{1,fb}}$ & $\beta_1$ parameter of the Adam optimizer for the feedback parameters\\ 
$\beta_{\text{2,fb}}$ & $\beta_2$ parameter of the Adam optimizer for the feedback parameters\\
$\epsilon_{\text{fb}}$ & $\epsilon$ parameter of the Adam optimizer for the feedback parameters\\
$\hat{\eta}$ & output target stepsize\\
$\sigma$ & standard deviation of noise perturbations in the reconstruction loss\\
$\text{wd}_{\text{fb}}$ & weight decay for the feedback parameters\\
\bottomrule
\end{tabular}
\end{table}

\begin{table}[h]
\centering
\caption{Hyperparameter search intervals for DTPDRL and the DDTP variants}
\label{tab:hp_DDTP}
\begin{tabular}{c|c}
\toprule
Hyperparameter   & Search interval   \\
\midrule
$\text{lr} \cdot \hat{\eta}$ & $[10^{-6}: 10^{-4}]$ \\
$\beta_1$ & $\{0.9;0.99;0.999\}$\\
$\beta_2$ & $\{0.9;0.99;0.999\}$\\
$\epsilon$ & $[10^{-8}: 10^{-4}]$\\
$\text{lr}_{\text{fb}}$ & $[3\cdot 10^{-5}: 5\cdot 10^{-3}]$\\
$\beta_{\text{1,fb}}$ & $\{0.9;0.99;0.999\}$  \\
$\beta_{\text{2,fb}}$ & $\{0.9;0.99;0.999\}$\\
$\epsilon_{\text{fb}}$ & $[10^{-8}: 10^{-4}]$\\
$\hat{\eta}$ & $[10^{-2}: 10^{-1}]$\\
$\sigma$ & $[10^{-3}: 10^{-1}]$\\
$\text{wd}_{\text{fb}}$ & $[10^{-7}: 10^{-4}]$\\
\bottomrule
\end{tabular}
\end{table}

\begin{table}[h]
\centering
\caption{Hyperparameter search intervals for DTP}
\label{tab:hp_DTP}
\begin{tabular}{c|c}
\toprule
Hyperparameter   & Search interval   \\
\midrule
$\text{lr} \cdot \hat{\eta}$ & $[10^{-6}: 10^{-4}]$ \\
$\beta_1$ & $\{0.9;0.99;0.999\}$\\
$\beta_2$ & $\{0.9;0.99;0.999\}$\\
$\epsilon$ & $[10^{-8}: 10^{-4}]$\\
$\text{lr}_{\text{fb}}$ & $[3\cdot 10^{-5}: 5\cdot 10^{-3}]$\\
$\beta_{\text{1,fb}}$ & $\{0.9;0.99;0.999\}$  \\
$\beta_{\text{2,fb}}$ & $\{0.9;0.99;0.999\}$\\
$\epsilon_{\text{fb}}$ & $[10^{-8}: 10^{-4}]$\\
$\hat{\eta}$ & $[10^{-2}: 3\cdot 10^{-1}]$\\
$\sigma$ & $[10^{-3}: 3\cdot 10^{-1}]$\\
$\text{wd}_{\text{fb}}$ & 0\\
\bottomrule
\end{tabular}
\end{table}

\begin{table}[h]
\centering
\caption{Hyperparameter search intervals for DTP (pre-trained)}
\label{tab:hp_DTP_pretrained}
\begin{tabular}{c|c}
\toprule
Hyperparameter   & Search interval   \\
\midrule
$\text{lr} \cdot \hat{\eta}$ & $[10^{-6}: 10^{-4}]$ \\
$\beta_1$ & $\{0.9;0.99;0.999\}$\\
$\beta_2$ & $\{0.9;0.99;0.999\}$\\
$\epsilon$ & $[10^{-8}: 10^{-4}]$\\
$\text{lr}_{\text{fb}}$ & $[3\cdot 10^{-5}: 5\cdot 10^{-3}]$\\
$\beta_{\text{1,fb}}$ & $\{0.9;0.99;0.999\}$  \\
$\beta_{\text{2,fb}}$ & $\{0.9;0.99;0.999\}$\\
$\epsilon_{\text{fb}}$ & $[10^{-8}: 10^{-4}]$\\
$\hat{\eta}$ & $[10^{-2}: 3\cdot 10^{-1}]$\\
$\sigma$ & $[10^{-3}: 3\cdot 10^{-1}]$\\
$\text{wd}_{\text{fb}}$ & $[10^{-7}: 10^{-4}]$\\
\bottomrule
\end{tabular}
\end{table}

\begin{table}[h]
\centering
\caption{Hyperparameter search intervals for BP and DFA}
\label{tab:hp_BP_DFA}
\begin{tabular}{c|c}
\toprule
Hyperparameter   & Search interval   \\
\midrule
$\text{lr}$ & $[10^{-5}: 10^{-2}]$ \\
$\beta_1$ & $\{0.9;0.99;0.999\}$\\
$\beta_2$ & $\{0.9;0.99;0.999\}$\\
$\epsilon$ & $[10^{-8}: 10^{-3}]$\\
\bottomrule
\end{tabular}
\end{table}
\clearpage

\subsection{Extended experimental results}
Here, we provide extra experimental results and figures, complementing the results of the main manuscript. Table \ref{tab:extended_test_results} provides extra test performance results and Fig. \ref{fig:fashion_mnist_angles}, \ref{fig:mnist_angles}, \ref{fig:cifar_angles} and \ref{fig:cifar_angles_cnn} give the alignment angles with the loss gradients and GNT updates for all methods and datasets (with hyperparemeters tuned for test performance). Tables \ref{tab:extended_training_results}, \ref{tab:cnn_results_train} and \ref{tab:cnn_results_train_optimized} provide the training losses for all experiments.

\begin{table}[h]
\centering

\caption[Test errors]{Test errors corresponding to the epoch with the best validation error over a training of 100 epochs. The mean and standard deviation over 10 randomly initialized weight configurations are given. The best test errors (except BP) are displayed in bold.}
\label{tab:extended_test_results}
\begin{tabular}{*5c}
\toprule
{}   & MNIST & Frozen-MNIST  & Fashion-MNIST    & CIFAR10   \\
\midrule
BP & $1.98 \pm 0.14\%$ & $4.39 \pm 0.13\%$ &  $10.74 \pm 0.16\%$ & $45.60 \pm 0.50\%$  \\
\midrule
DDTP-linear  &  $\mathbf{2.04 \pm 0.08\%}$ & $6.42 \pm 0.17\%$ & $\mathbf{11.11 \pm 0.35\%}$ & $\mathbf{50.36 \pm 0.26\%}$  \\
DDTP-RHL  &  $2.10 \pm 0.14\%$ & $ \mathbf{5.11 \pm 0.19\%}$ & $11.53 \pm 0.31\%$ & $51.94 \pm 0.49 \%$  \\
 \begin{tabular}{@{}c@{}}DDTP-RHL \\ (extra FB)\end{tabular} &  $2.12 \pm 0.13\%$ & $5.34 \pm 0.17\%$ & $11.13 \pm 0.19\%$ & $51.24 \pm 0.44 \%$  \\
\begin{tabular}{@{}c@{}}DDTP-RHL \\ (rec. and extra FB)\end{tabular}  &  $2.21 \pm 0.10\%$ & $5.36 \pm 0.18\%$ & $11.74 \pm 0.34\%$ & $52.31 \pm 0.44\%$ \\
DTPDRL   &  $2.21 \pm 0.09\%$ & $6.10 \pm 0.17\%$ & $11.22 \pm 0.20\%$ & $50.80 \pm 0.43 \%$  \\
DDTP-control  &  $2.51 \pm 0.08\%$ & $9.70 \pm 0.31\%$ & $11.71 \pm 0.28\%$ & $51.75 \pm 0.43 \%$  \\
DTP  &  $2.39 \pm 0.19\%$ & $10.64 \pm 0.53\%$ & $11.49 \pm 0.23\%$ & $51.74 \pm 0.30 \%$ \\
DTP (pre-trained)   &  $2.26 \pm 0.18\%$ & $9.31 \pm 0.40\%$ & $11.52 \pm 0.31\%$ & $52.20 \pm 0.50 \%$  \\
DFA   &  $2.17 \pm 0.14\%$ & / & $11.26 \pm 0.25\%$ & $51.28 \pm 0.41\%$  \\

\bottomrule
\end{tabular}
\vspace*{-0.1cm}
\end{table}

\paragraph{Extra feedback weight training.}
Although DDTP-RHL provides the best feedback signals in the frozen MNIST task, we observed that it needs extra training iterations for its many feedback parameters to enable decent performance for the more complex tasks. \textbf{DDTP-RHL (extra FB)} uses a couple of extra epochs of pure feedback parameter training between each epoch of both forward and feedback parameter training (see Section \ref{section:app_experimental_details} for details). On Fashion MNIST, this extra feedback weight training brings the performance of DDTP-RHL (extra FB) on the same level of DDTP-linear. On CIFAR10, DDTP-RHL (extra FB) has a significantly better performance compared to DDTP-RHL, however, it does not perform as good as DDTP-linear. It appears that on CIFAR10, the possibility for better feedback signals does not outweigh the training challenges caused by the increased complexity of the feedback connections in DDTP-RHL. Hence, future research should explore inductive biases and training methods that can improve the feedback parameter training, while harvesting the benefits of the better feedback signals provided by DDTP-RHL.

\paragraph{Recurrent feedback connections.} In Section \ref{section:sec_3_3} we discussed that adding recurrent connections to the feedback path could improve the alignment of the feedback signals with the ideal GNT updates. To test this hypothesis, we used the DDTP-RHL (rec.) method (see Section \ref{section:sec_3_3} and Algorithm \ref{al:DDTP-rec}) on the various datasets and computed the alignment angles with the loss gradients and damped GNT updates. Similar to DDTP-RHL (extra FB), we used extra training epochs for training the feedback connections, resulting in \textbf{DDTP-RHL (rec. and extra FB)}. Fig. \ref{fig:fashion_mnist_angles} shows that the alignment with both the loss gradients and the damped GNT methods indeed improved by adding recurrent feedback connections. However, the performance results in Table \ref{tab:extended_test_results} and Fig. \ref{fig:mnist_angles} and \ref{fig:cifar_angles} indicate that the training challenges originating from the added complexity outweigh the capacity for better feedback signals and result in worse performance compared to DDTP-linear. Similar to DDTP-RHL, future research can explore whether better optimizers and inductive biases can alleviate these training challenges and improve performance. 

\paragraph{Damped Gauss-Newton targets.}
In our theory, we focussed on undamped GN targets. For practical reasons of stability, however, it is advised to damp the GN targets, which we did by using weight decay on the feedback parameters. Condition \ref{condition:gn_targets_damped} formalizes these damped GN targets.
\begin{condition}[Damped Gauss-Newton Target method]\label{condition:gn_targets_damped}
The network is trained by damped GN targets: each hidden layer target is computed by
\begin{align}
     \Delta \ve{h}_i^{(b)} \triangleq \hat{\ve{h}}_{i}^{(b)} - \ve{h}_i^{(b)} = - \hat{\eta} J_{\bar{f}_{i,L}}^{(b)T}\big(J_{\bar{f}_{i,L}}^{(b)}J_{\bar{f}_{i,L}}^{(b)T} + \lambda I \big)^{-1}\ve{e}_L^{(b)},
\end{align}
with $\lambda$ a Tikhonov damping constant, after which the network parameters of each layer $i$ are updated by a gradient descent step on its corresponding local mini-batch loss $\Lagr_i = \sum_b \|\Delta \ve{h}_i^{(b)}\|^2_2$, while considering $\hat{\ve{h}}_i$ fixed.
\end{condition}
For computing the alignment angles in Fig. \ref{fig:fashion_mnist_angles}, \ref{fig:mnist_angles} and \ref{fig:cifar_angles}, the weight updates $W_i$ should be compared with the damped GN targets of Condition \ref{condition:gn_targets_damped}. To find the damping constant $\lambda$ that best corresponds to the weight updates $\Delta W_i$, we computed the alignment angles for $\lambda \in \{0, 10^{-5}, 10^{-4}, 10^{-3}, 10^{-2}, 10^{-1}, 1, 10\}$ and selected the damping value that resulted in the best alignment. The alignment angles were averaged over one epoch, after two epochs of training (such that the $\tanh$ activation functions are already partially pushed in their nonlinear regime). Table \ref{tab:damping_values} provides the selected $\lambda$ for all methods on Fashion-MNIST. Our code base contains the specific damping values used for the other datasets. Interestingly, the DDTP variants and DTPDRL method align best with GNT updates that are highly damped ($\lambda=1$ for all layers except the last hidden layer), while the weight decay we used on the feedback parameters is in the order of magnitude of $10^{-4}$ or smaller. Hence, our methods experience \textit{implicit damping}, on top of the explicit damping introduced by the weight decay. We hypothesize that this implicit damping originates from the limited amount of parameters in the feedback paths, that prevent the DRL of being minimized to its absolute minimum, as discussed in Section \ref{section:sec_3_3}. Hence, it is not possible for each batch sample $b$ that $J_{\bar{g}_{L,i}}^{(b)} = J_{\bar{f}_{i,L}}^{(b)T}\big(J_{\bar{f}_{i,L}}^{(b)}J_{\bar{f}_{i,L}}^{(b)T} + \lambda I \big)^{-1}$, and the extreme singular values of $J_{\bar{g}_{L,i}}^{(b)}$ will be damped first, as they interfere the most with $J_{\bar{g}_{L,i}}^{(k\neq b)}$ of other batch samples, resulting in a similar behaviour as Tikhonov damping. In line with this hypothesis, DDTP-RHL (rec. and extra FB) experiences less implicit damping ($\lambda = 0.1$), as it has more feedback parameters. We emphasize that the DDTP variants and DTPDRL method did not select the highest damping value $\lambda=10$, indicating that it aligns better with damped GNT updates instead of loss gradients, which is confirmed by Fig. \ref{fig:fashion_mnist_angles}, \ref{fig:mnist_angles} and \ref{fig:cifar_angles}. 

\begin{table}[h]
\centering

\caption[Test errors]{Damping values $\lambda$ for the damped GNT updates according to Condition \ref{condition:gn_targets_damped}, which are used to compute the alignment angles on Fashion-MNIST in Fig. \ref{fig:fashion_mnist_angles}b.}
\label{tab:damping_values}
\begin{tabular}{*6c}
\toprule
{}   & Layer 1 & Layer 2  & Layer 3    & Layer 4 & Layer 5   \\
\midrule
DDTP-linear  &  1 & 1 & 1 & 1 & 0  \\
DDTP-RHL  &  1 & 1 & 1 & 1 & 0.001  \\
 \begin{tabular}{@{}c@{}}DDTP-RHL \\ (extra FB)\end{tabular} &  1 & 1 & 1 & 1 & 0.001  \\
\begin{tabular}{@{}c@{}}DDTP-RHL \\ (rec. and extra FB)\end{tabular}  &  0.1 & 0.1 & 0.1 & 0.1 & 0.0001 \\
DTPDRL   &  1 & 1 & 1 & 1 & 0.01  \\
DDTP-control  &  10 & 1 & 1 & 10 & 10  \\
DTP  &  10 & 10 & 10 & 10 & 10 \\
DTP (pre-trained)   &  1 & 1 & 1 & 1 & 1  \\
DFA   &  10 & 1 & 1 & 10 & 1  \\

\bottomrule
\end{tabular}
\vspace*{-0.1cm}
\end{table}

\paragraph{A need for customized optimizers for TP methods.}
We used the Adam optimizer \citep{kingma2014adam} for all experiments, because our methods aligned well with the loss gradients, and the Adam optimizer is tuned for gradient descent. However, our theory and experiments showed that our DTP variants can best be compared to damped GNT updates. This raises the question whether other optimizers can be developed, that can make use of the specific characteristics of the TP methods, such as its minimum-norm property. We hypothesize that with a tailored optimizer, that mitigates the training challenges originating from the complex interplay of feedforward and feedback parameter training and that harvests the beneficial minimum-norm characteristics of the TP methods, the current performance gap with BP can be further closed or even exceeded for some applications.

\begin{sidewaysfigure}[ht]
    \centering
    \begin{subfigure}{\linewidth}
        \includegraphics[width=\linewidth]{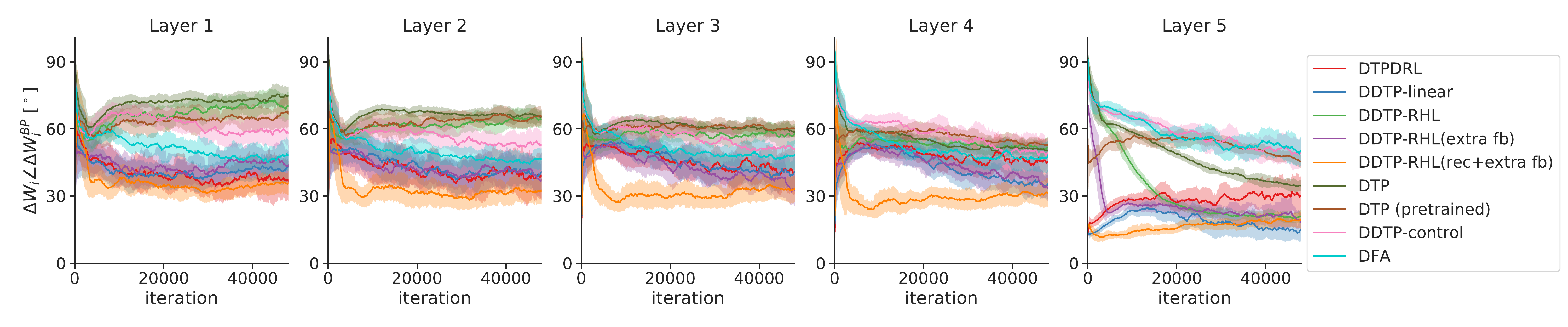}
        \caption{Angles between $\Delta W_i$ and the loss gradients.}
    \end{subfigure}
     \begin{subfigure}{\linewidth}
        \includegraphics[width=\linewidth]{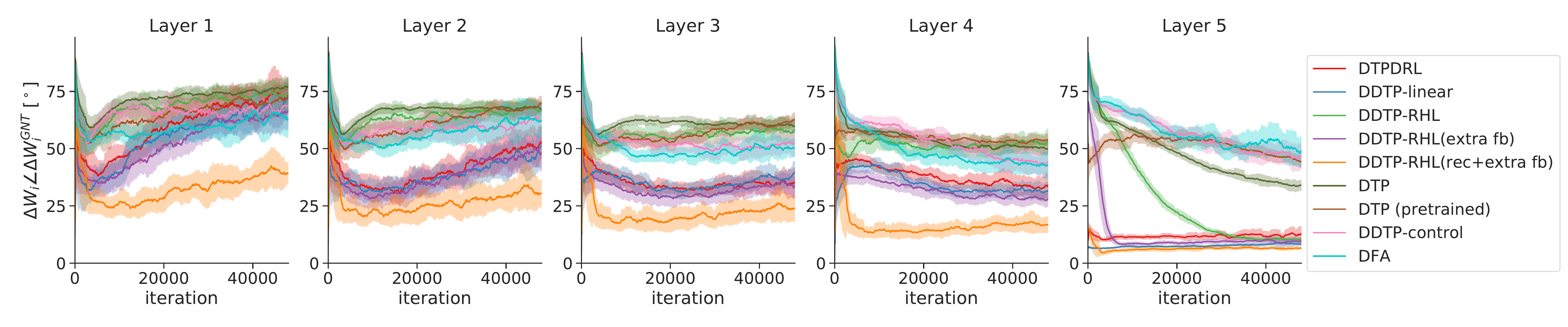}
        \caption{Angles between $\Delta W_i$ and damped GNT updates.}
    \end{subfigure}
    \caption{Angles between the weight updates $\Delta W_i$ of all hidden layers with (a) the loss gradient directions and (b) the damped GNT weight updates according to Condition \ref{condition:gn_targets_damped} on Fashion-MNIST. A window-average of the angles is plotted, together with the window-standard-deviation.}
    \label{fig:fashion_mnist_angles}
\end{sidewaysfigure}

\begin{sidewaysfigure}[ht]
    \centering
    \begin{subfigure}{\linewidth}
        \includegraphics[width=\linewidth]{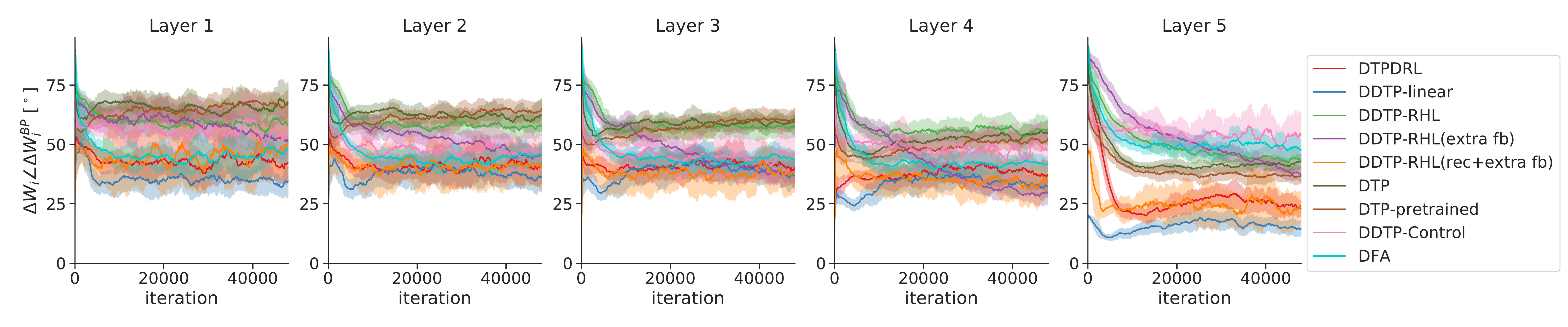}
        \caption{Angles between $\Delta W_i$ and the loss gradients.}
    \end{subfigure}
     \begin{subfigure}{\linewidth}
        \includegraphics[width=\linewidth]{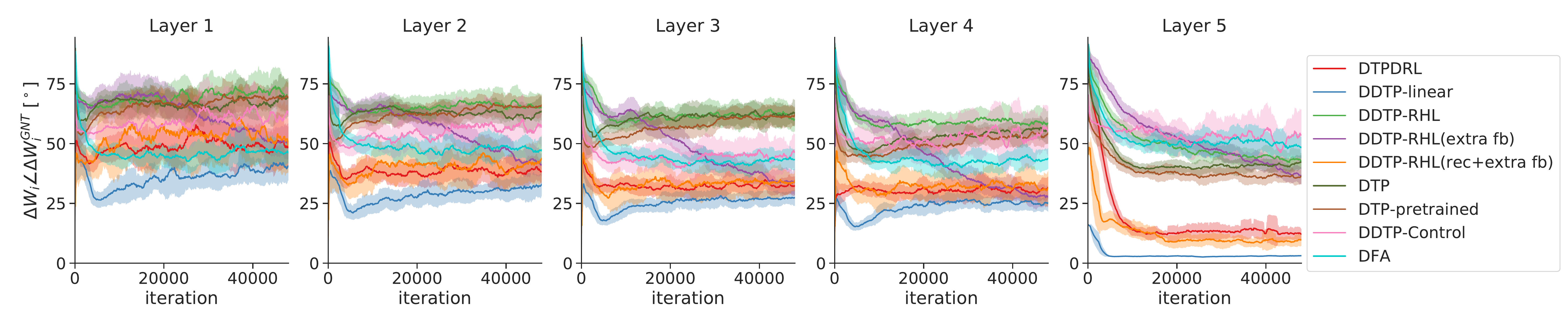}
        \caption{Angles between $\Delta W_i$ and damped GNT updates.}
    \end{subfigure}
    \caption{Angles between the weight updates $\Delta W_i$ of all hidden layers with (a) the loss gradient directions and (b) the damped GNT weight updates according to Condition \ref{condition:gn_targets_damped} on MNIST. A window-average of the angles is plotted, together with the window-standard-deviation.}
    \label{fig:mnist_angles}
\end{sidewaysfigure}

\begin{sidewaysfigure}[ht]
    \centering
    \begin{subfigure}{0.8\linewidth}
        \includegraphics[width=\linewidth]{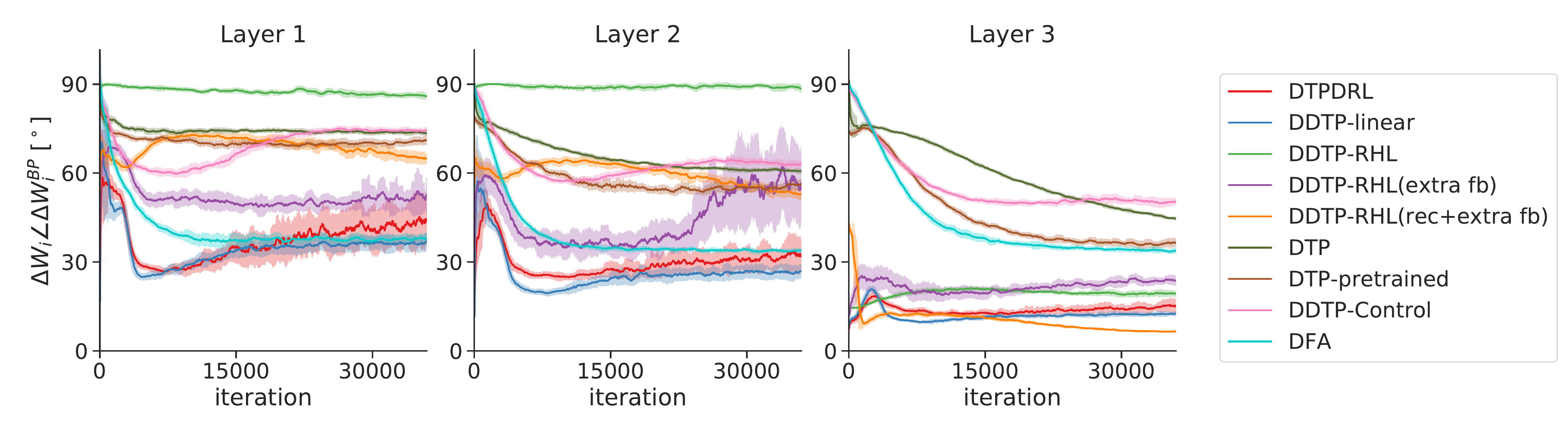}
        \caption{Angles between $\Delta W_i$ and the loss gradients.}
    \end{subfigure}
     \begin{subfigure}{0.8\linewidth}
        \includegraphics[width=\linewidth]{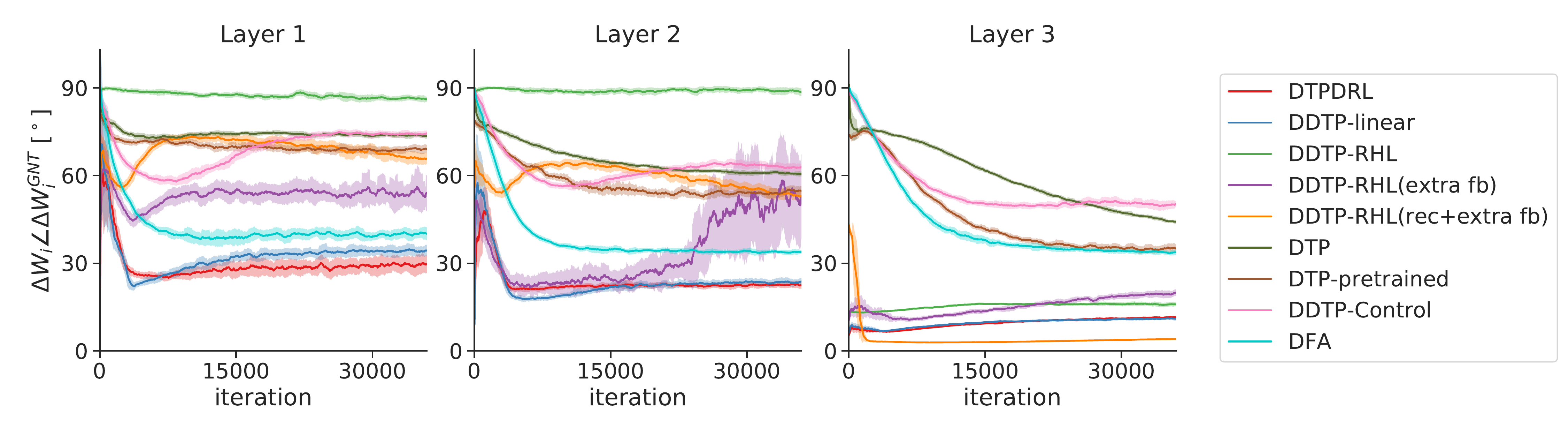}
        \caption{Angles between $\Delta W_i$ and damped GNT updates.}
    \end{subfigure}
    \caption{Angles between the weight updates $\Delta W_i$ of all hidden layers with (a) the loss gradient directions and (b) the damped GNT weight updates according to Condition \ref{condition:gn_targets_damped} on CIFAR10. A window-average of the angles is plotted, together with the window-standard-deviation.}
    \label{fig:cifar_angles}
\end{sidewaysfigure}

\begin{sidewaysfigure}[ht]
    \centering
    \begin{subfigure}{0.6\linewidth}
        \includegraphics[width=\linewidth]{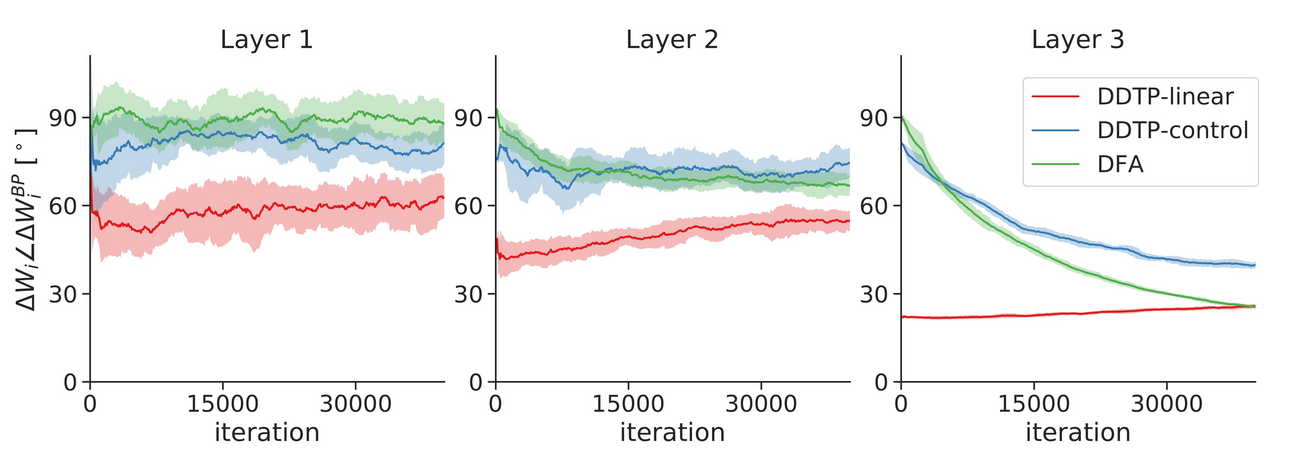}
        \caption{Angles between $\Delta W_i$ and the loss gradients.}
    \end{subfigure}
     \begin{subfigure}{0.6\linewidth}
        \includegraphics[width=\linewidth]{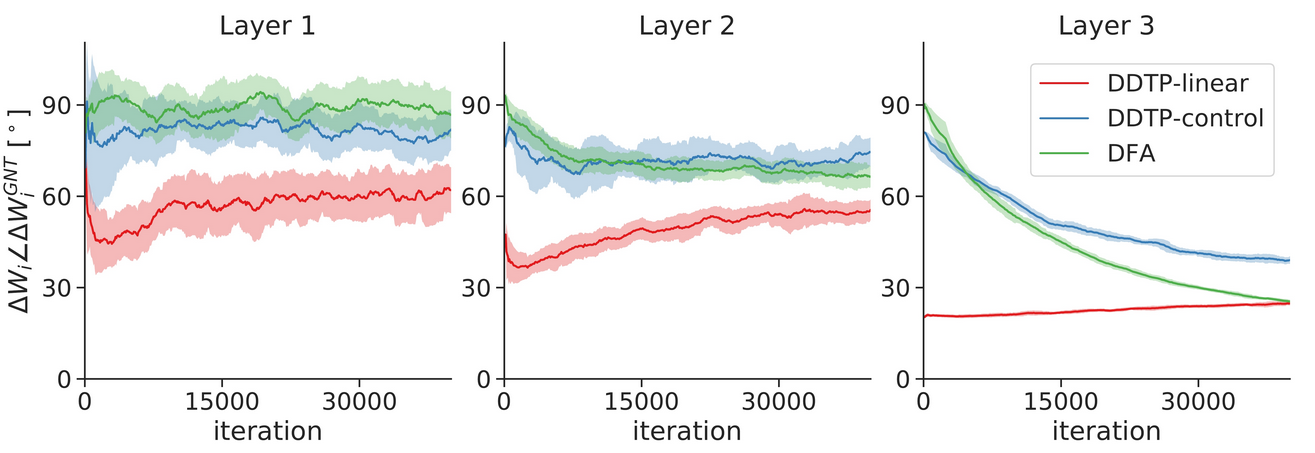}
        \caption{Angles between $\Delta W_i$ and damped GNT updates.}
    \end{subfigure}
    \caption{Angles between the weight updates $\Delta W_i$ of all hidden layers of the small CNN with (a) the loss gradient directions and (b) the damped GNT weight updates according to Condition \ref{condition:gn_targets_damped} on CIFAR10. The first two layers are convolutional layer, while the third hidden layer is fully connected. A window-average of the angles is plotted, together with the window-standard-deviation.}
    \label{fig:cifar_angles_cnn}
\end{sidewaysfigure}

\clearpage

\paragraph{Training losses, fully connected networks.} 
In Table \ref{tab:extended_training_results} we provide the training losses that were achieved after 100 epochs for the experiments of Table \ref{tab:extended_test_results}. Note that these training losses are different from those of Table \ref{tab:train_results}, as the hyperparameters are different (here, the hyperparameters are optimized for validation error, while in Table \ref{tab:train_results} they are optimized for training loss). Better test errors can result from finding a minimum of better quality (e.g. lower and/or flatter), implicit regularization (as we did not use explicit regularization), or a combination of both. Table \ref{tab:extended_training_results} clearly shows that the improved test performance of the DRL methods compared to DTP and the controls result from finding lower minima.
\begin{table}[h]
\centering

\caption[Training errors]{Training loss after 100 epochs for the hyperparameter configurations of Table \ref{tab:extended_test_results} (optimized for best validation error). The mean and standard deviation over 10 randomly initialized weight configurations are given. The best training losses (except BP) are displayed in bold.}
\label{tab:extended_training_results}
\begin{tabular}{*5c}
\toprule
{}   & MNIST & Frozen-MNIST  & Fashion-MNIST    & CIFAR10   \\
\midrule
BP & $2.62^{ \pm 1.45}\cdot 10^{-3}$ & $0.199 ^{\pm 0.013}$ &  $6.22^{ \pm 0.19}\cdot 10^{-2}$ & $3.39^{ \pm 0.09}\cdot 10^{-5}$  \\
\midrule
DDTP-linear  &  $7.51^{ \pm 0.69}\cdot 10^{-7}$ & $0.355 ^{\pm 0.014}$ &  $3.73^{ \pm 0.34}\cdot 10^{-2}$ & $1.73^{ \pm 0.02}\cdot 10^{-4}$  \\ 
DDTP-RHL  &  $1.06^{ \pm 0.06}\cdot 10^{-6}$ & $0.306 ^{\pm 0.012}$ &  $8.08^{ \pm 0.61}\cdot 10^{-2}$ & $7.57^{ \pm 0.62}\cdot 10^{-1}$  \\
 \begin{tabular}{@{}c@{}}DDTP-RHL \\ (extra FB)\end{tabular} &  $9.93^{ \pm 0.61}\cdot 10^{-7}$ & $0.308 ^{\pm 0.023}$ &  $\mathbf{2.81^{ \pm 0.25}\cdot 10^{-2}}$ & $3.48^{ \pm 0.27}\cdot 10^{-1}$  \\
\begin{tabular}{@{}c@{}}DDTP-RHL \\ (rec. and extra FB)\end{tabular}  &  $9.16^{ \pm 4.26}\cdot 10^{-4}$ & $\mathbf{0.286^{\pm 0.011}}$ &  $4.44^{ \pm 0.44}\cdot 10^{-2}$ & $2.84^{ \pm 0.11}\cdot 10^{-1}$\\
DTPDRL   &  $8.95^{ \pm 4.13}\cdot 10^{-7}$ & $0.350 ^{\pm 0.018}$ &  $3.23^{ \pm 0.76}\cdot 10^{-2}$ & $\mathbf{1.38^{ \pm 0.07}\cdot 10^{-4}}$  \\
DDTP-control  &  $3.60^{ \pm 2.06}\cdot 10^{-3}$ & $0.988 ^{\pm 0.055}$ &  $5.99^{ \pm 0.56}\cdot 10^{-2}$ & $6.58^{ \pm 0.23}\cdot 10^{-1}$  \\
DTP  &  $4.46^{ \pm 4.422}\cdot 10^{-3}$ & $1.282 ^{\pm 0.072}$ &  $4.71^{ \pm 0.35}\cdot 10^{-2}$ & $5.87^{ \pm 0.12}\cdot 10^{-1}$ \\
DTP (pre-trained)   &  $2.84^{ \pm 0.84}\cdot 10^{-6}$ & $1.275 ^{\pm 0.604}$ &  $9.72^{ \pm 1.66}\cdot 10^{-2}$ & $1.53^{ \pm 0.03}\cdot 10^{-1}$  \\
DFA   &  $\mathbf{4.87^{ \pm 3.57}\cdot 10^{-7}}$ & / &  $3.92^{ \pm 0.26}\cdot 10^{-2}$ & $2.44^{ \pm 0.02}\cdot 10^{-4}$  \\

\bottomrule
\end{tabular}
\vspace*{-0.1cm}
\end{table}

\paragraph{Training losses, CNNs.}
Table \ref{tab:cnn_results_train} shows the training losses that were achieved after 100 epochs for the experiments of Table \ref{tab:cnn_results}. We see that the training loss of DDTP-linear is close to the training loss of BP, while DFA has a slightly lower loss. As the test performance of DFA is worse compared to BP and DDTP-linear, lower minima do not seem to lead to better test performance for this dataset and these methods. Hence, the test performance likely benefits from implicit regularization and/or small learning rates. 

\begin{table}[h]
\centering
\caption[CNN training losses]{Training loss after 100 epochs on CIFAR10 with a small CNN for the hyperparameter configurations of Table \ref{tab:cnn_results} (optimized for best validation error). Mean $\pm$ SD for 10 random seeds. The best training loss (except BP) is displayed in bold.}
\label{tab:cnn_results_train}
\begin{tabular}{*4c}
\toprule
BP  & DDTP-linear & DDTP-control  & DFA    \\
\midrule
$1.15^{ \pm 0.03}\cdot 10^{-1}$ & $1.19^{ \pm 0.14}\cdot 10^{-1}$ &  $2.95^{ \pm 1.29}\cdot 10^{-1}$ & $\mathbf{7.95^{ \pm 0.87}\cdot 10^{-2}}$  \\

\bottomrule
\end{tabular}
\vspace*{-0.1cm}
\end{table}
To investigate the optimization capabilities of the various methods, we selected hyperparameters tuned for minimizing the training loss and provide the results in Table \ref{tab:cnn_results_train_optimized}. As can be seen, the training loss of DDTP-linear is an order of magnitude bigger compared to the training losses of DFA and BP. At first sight, this would suggest that the optimization capabilities of DDTP-linear are worse compared to DFA. However, we hypothesize that this worse performance can be explained by the learning rates: DDTP-linear selected learning rates at least an order of magnitude smaller compared to those of DFA. Most likely, DDTP-linear needs to select these small learning rates to make sure that the interplay between feedforward and feedback weight training remains stable. The smaller learning rates combined with the fact that the training loss keeps decreasing (it does not converge) during the whole training procedure for both DFA and DDTP-linear, can explain why DDTP-linear ends up with a bigger training loss. Figure \ref{fig:cifar_angles_cnn} shows that the update directions of DDTP-linear are much better aligned with both the gradients and GNT updates compared to DFA, indicating that DDTP-linear has more efficient update directions for CNNs than DFA. The observation that DFA still succeeds in decreasing the training loss sufficiently while having update directions close to 90 degrees relative to the gradient direction in the convolutional layers (as seen in Figure \ref{fig:cifar_angles_cnn}), is remarkable and deserves further investigation in future work.

\begin{table}[h]
\centering
\caption[CNN training losses optimized]{Training loss after 300 epochs on CIFAR10 with a small CNN for a new hyperparameter configuration optimized for training loss. Mean $\pm$ SD for 10 random seeds. The best training loss (except BP) is displayed in bold.}
\label{tab:cnn_results_train_optimized}
\begin{tabular}{*4c}
\toprule
BP  & DDTP-linear & DDTP-control  & DFA    \\
\midrule
$4.07^{ \pm 0.41}\cdot 10^{-10}$ & $4.31^{ \pm 1.03}\cdot 10^{-6}$  &  $3.71^{ \pm 0.71}\cdot 10^{-5}$ & $4.94^{ \pm 0.55}\cdot 10^{-7}$\\

\bottomrule
\end{tabular}
\vspace*{-0.1cm}
\end{table}

\subsection{Experimental details for the toy experiments}
Here, we provide the experimental details of the toy experiments in Fig. \ref{fig:barplot_nullspace} and Fig. \ref{fig:output_space_toy}.

\paragraph{Synthetic regression dataset.}
For both toy experiments, we used a synthetic student-teacher regression dataset. The dataset is generated by randomly initializing a nonlinear teaching network with 4 hidden layers of each 1000 neurons and an input and output layer matching dimensions of the student network. Random inputs are then fed through the teacher network to generate input-output pairs for the training and test set. The teacher network has ReLU activation functions and is initialized with random Gaussian weights, to ensure that the teacher dataset is nonlinear.

\paragraph{Toy experiment 1.}
For the null-space component toy experiment of Fig. \ref{fig:barplot_nullspace}, we took a nonlinear student network with two hidden layers of 6 neurons, an input layer of 6 neurons and an output layer of 2 neurons. We trained this student network with DTP (pre-trained) and DDTP-linear on the synthetic dataset and computed the components of $\Delta W_2$ that lie in the nullspace of $\partial \bar{f}_{0,L}(\ve{h}_0) / \partial \Delta W_2$. Small steps in the direction of these null-space components don't result in a change of output and can thus be considered useless. Furthermore, these null-space components will likely interfere with the updates from other minibatches. We used a minibatch size of 1 for this toy experiment, to investigate the updates resulting from each batch sample separately, without averaging them over a bigger minibatch. We froze the forward weights of the network (but still computed what their updates would be), to investigate the forward parameter updates in an ideal regime, where the feedback weights are trained until convergence on their reconstruction losses, without needing to track changing forward weights.
\paragraph{Toy experiment 2.} For the output space toy experiment of Fig. \ref{fig:output_space_toy}, we took a nonlinear student network with two hidden layers of 4 neurons, an input layer of 4 neurons and an output layer of 2 neurons. We trained this student network with GNT and BP on the synthetic dataset and computed in which direction the resulting updates push the output activation of the network. To compute this output space direction, we updated the networks forward parameters with a small learning rate, computed the output activation before and after this update and normalized the difference vector between these two output activations to represent the direction in output space.

\section{Review of the Gauss-Newton method} \label{appendix:gauss_newton}
The Gauss-Newton (GN) algorithm is an iterative optimization method that is used for non-linear regression problems, defined as follows:
\begin{align}\label{eq:least_squares_regression}
    \min_{\ve{\beta}} \quad \mathcal{L} &= \frac{1}{2}\sum_{b=1}^B e^{(b)2}\\
    e^{(b)} &\triangleq y^{(b)}-l^{(b)},
\end{align}
with $L$ the regression loss, $B$ the mini-batch size, $e^{(b)}$ the regression residual of the $b^{\text{th}}$ sample, $y^{(b)}$ the model output and $l^{(b)}$ the corresponding label. The one-dimensional output $y$ is a nonlinear function of the inputs $\ve{x}$, parameterized by $\ve{\beta}$. At the end of this section, the Gauss-Newton method will be extended for models with multiple outputs. The Gauss-Newton algorithm can be derived in two different ways: (i) via a linear Taylor expansion around the current parameter values $\ve{\beta}$ and (ii) via an approximation of Newton's method. Here, we discuss the first derivation, as this will give us the most insights in the inner working of target propagation.

\subsection{Derivation of the method}
The goal of a Gauss-Newton iteration step is to find a parameter update $\Delta \ve{\beta}$ that leads to a lower regression loss:
\begin{align}
    \ve{\beta}^{(m+1)} \leftarrow \ve{\beta}^{(m)} + \Delta \ve{\beta}.
\end{align}
Ideally, we want to minimize the regression loss $L$ with respect to the parameters $\ve{\beta}$ by finding a local minimum:
\begin{align}
    0 & \overset{!}{=} \frac{\partial \mathcal{L}}{\partial \ve{\beta}} = J^T\ve{e} \label{eq:4GNLossDerivative}\\
    J &\triangleq \frac{\partial \ve{y}}{\partial \ve{\beta}},
\end{align}
with $\ve{e}$ a vector containing all the $B$ residuals $e^{(b)}$ and $\ve{y}$ a vector containing all the outputs. $\ve{y}$ and $\ve{e}$ can be approximated by a first order Taylor expansion around the current parameter values $\ve{\beta}^{(m)}$:
\begin{align}
    \ve{y}^{(m+1)} &\approx \ve{y}^{(m)} + J\Delta \ve{\beta} \\
    \ve{e}^{(m+1)} &= \ve{y}^{(m+1)} - \ve{l} \approx \ve{e}^{(m)} + J\Delta \ve{\beta} 
\end{align}
Now this approximation of $\ve{e}$ can be filled in equation \eqref{eq:4GNLossDerivative}, which results in 
\begin{align}
    &\frac{\partial \mathcal{L}}{\partial \ve{\beta}} \approx J^T\big(\ve{e}^{(m)}+J\Delta \ve{\beta}\big) = 0\\
    \Leftrightarrow \quad &J^T J \Delta \ve{\beta} = -J^T \ve{e}^{(m)}. \label{eq:4GNleastsquares1}
\end{align}
$J^T J$ can be interpreted as an approximation of the loss Hessian matrix used in Newton's method and is often referred to as the \textit{Gauss-Newton curvature matrix} $G$. If $J^T J$ is invertible, this leads to:
\begin{align}
    \Delta \ve{\beta} &= -\big(J^T J\big)^{-1} J^T \ve{e}^{(m)}\\
    \Delta \ve{\beta} &= -J^{\dagger}\ve{e}^{(m)},
\end{align}
With $J^{\dagger}$ the Moore-Penrose pseudo inverse of $J$. Note that if $J^T J$ is not invertible, $-J^{\dagger}\ve{e}^{(m)}$ leads to the solution $\Delta \ve{\beta}$ with the smallest norm. If $J$ is square and invertible, the pseudo inverse is equal to the real inverse, leading to the following expression:
\begin{align}
    \Delta \ve{\beta} &= -J^{-1}\ve{e}^{(m)}. \label{eq:4GNleastsquares2}
\end{align}
Note the similarity between the above equations \eqref{eq:4GNleastsquares1}-\eqref{eq:4GNleastsquares2} and linear least squares: the design matrix $X$ is replaced by the Jacobian $J$ and the residuals and parameter increments are used instead of the output values and the parameters respectively. To get some intuition of this similarity, figure \ref{fig:GN_toyexample} illustrates a Gauss-Newton iteration for a toy problem with only one parameter $\beta$. In figure \ref{fig:GN_regression}, the current fitted nonlinear function together with the data samples is shown. Figure \ref{fig:GN_errors} shows the linear least squares problem that is solved in equation \eqref{eq:4GNleastsquares1}. The residuals $e_{(i)}$ are plotted on the vertical axis and the horizontal axis represents how sensitive these residuals are to a change of parameter $\beta$ (which is represented by $J$). The parameter update $\Delta \beta$ is then represented by the slope of the fitted line through the origin. In more dimensions, the same intuitive interpretation holds, only we have more 'horizontal' axes (one for each parameter sensitivity) and we fit a hyperplane through the origin instead of a line.

\begin{figure}[h!]
    \begin{subfigure}{0.45\textwidth}
        \centering
        \includegraphics[width=\linewidth, trim={0cm 0cm 0cm 0cm},clip]{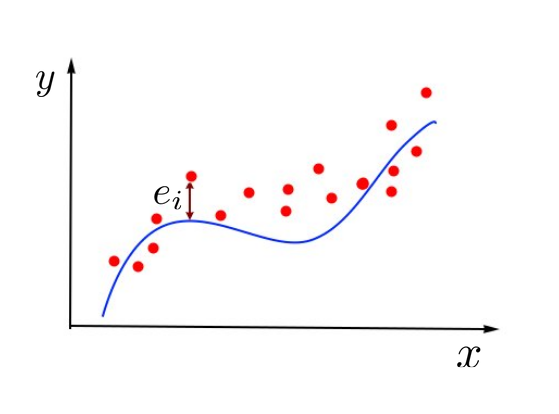}
        \caption{ }
        \label{fig:GN_regression}
    \end{subfigure} \hfill
    \begin{subfigure}{0.45\textwidth}
        \centering
        \includegraphics[width=\linewidth, trim={0cm 0cm 0cm 0cm},clip]{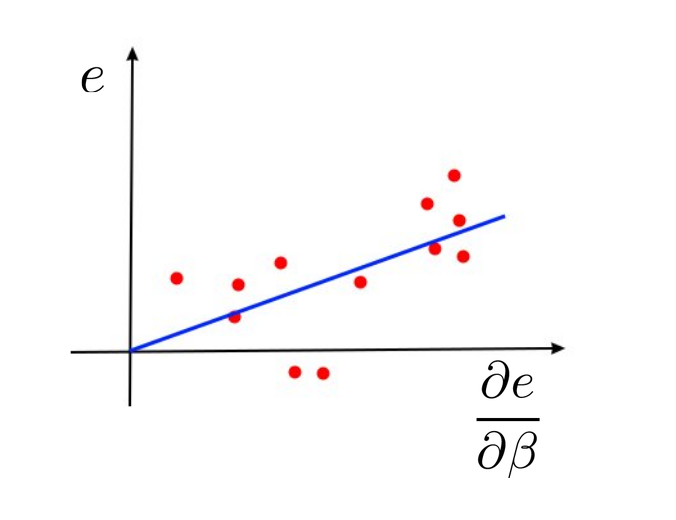}
        \caption{ }
        \label{fig:GN_errors}
    \end{subfigure} \hfill
    \caption[Gauss-Newton optimization step]{Gauss-Newton optimization step for a toy example. (a) The data points together with the fitted curve at the current iteration  (b) a visualisation of the linear least squares regression problem that is solved in the Gauss-Newton iteration.}
    \label{fig:GN_toyexample}
\end{figure}

As mentioned earlier, the Gauss-Newton method can also be interpreted as an approximation to the Newton method, thereby making Gauss-Newton an approximate second-order optimization method.

\subsection{The Gauss-Newton method for multiple-output models}
In the previous paragraphs, the Gauss-Newton method was derived for regression models with a one-dimensional output. This can easily be extended to regression models with multi-dimensional outputs, such as most feed-forward neural networks. The regression loss is now given by: 
\begin{align}
    \mathcal{L} &= \frac{1}{2}\sum_{b=1}^B \|\ve{e}^{(b)}\|_2^2\\
    \ve{e}^{(b)} &\triangleq \ve{y}^{(b)}-\ve{l}^{(b)},
\end{align}
The Jacobian $J$ of the model outputs and samples with respect to the model parameters can be constructed by concatenating the Jacobians of the model outputs for each sample along the row dimension. The resulting Jacobian has size $Bn_{output}\times n_{parameters}$.

\subsection{Tikhonov damping and the Levenberg-Marquardt method}
The linear system \eqref{eq:4GNleastsquares1} can sometimes be poorly conditioned, leading to very large step sizes of the Gauss-Newton method, which pushes the model outside the region in which the linear model is a good approximation. The Levenberg-Marquardt (LM) method \citep{levenberg1944method, marquardt1963algorithm} mitigates this issue by adding Tikhonov damping to the curvature matrix $G=J^TJ$:
\begin{align}\label{eq:levenbergmarquardt}
    \big(J^T J + \lambda I \big) \Delta \ve{\beta} = -J^T \ve{e}^{(m)},
\end{align}
with $\lambda$ the damping parameter. $\lambda$ is typically updated during each training iteration by a heuristic based on trust regions. The added damping prevents the Levenberg-Marquardt method from taking too large steps and thereby greatly stabilizes the optimization process. Intuitively, the Levenberg-Marquardt method can be seen as an interpolation between Gauss-Newton optimization and gradient descent, as for $\lambda \rightarrow 0$ the LM method is equal to the GN method, and for $\lambda \rightarrow \infty$ the LM method is equal to gradient descent with a very small step size. 

\section{Generalized-Gauss-Newton extension of TP} \label{appendix:GGN}
In this section, we discuss how the link between TP and GN optimization can be extended to other losses than the $L_2$ loss. Note that the minimum-norm interpretation of TP discussed in section \ref{sec:convergence_properties} and \ref{section:sec_3_4} holds for all loss functions, only the link to GN is specific to the $L_2$ loss. We start with a brief overview of the Generalized Gauss-Newton method \citep{schraudolph2002fast} and then provide a theoretical extension to the TP framework to incorporate generalized GN targets. For a more detailed discussion of generalized Gauss-Newton optimization in neural networks, we recommend the PhD thesis of \citet{martens2016second}.
\subsection{The generalized Gauss-Newton method} \label{section:GGN}
For understanding the generalized Gauss-Newton method, it is best to derive the GN method via Newton's method and then extend it to other loss functions. Newton's method updates the parameters in each iteration as follows:
\begin{align}
    \ve{\beta}^{(m+1)} \leftarrow \ve{\beta}^{(m)} - H^{-1}\ve{g},
\end{align}
with $H$ and $\ve{g}$ the Hessian and gradient respectively of the loss function $\Lagr$ with respect to parameters $\ve{\beta}$. For an $L_2$ loss function, the gradient is given by 
\begin{align}
    \ve{g} &\triangleq \frac{\partial \Lagr}{\partial \ve{\beta}} = J^T\ve{e}_L,
\end{align}
with $J$ the Jacobian of your model outputs (of all minibatch samples) with respect to the parameters $\ve{\beta}$ and $\ve{e}_L$ the output errors (equal to the derivative of $\Lagr$ with respect to the model outputs because we have and $L_2$ loss). Following equation \ref{eq:least_squares_regression}, the elements of the Hessian $H$ are given by 
\begin{align}
    H_{jk} = \sum_{b=1}^B \Big(\frac{\partial e_L^{(b)}}{\partial \beta_j} \frac{\partial e_L^{(b)}}{\partial \beta_k} + e_L^{(b)}\frac{\partial^2e_L^{(b)}}{\partial \beta_j \partial \beta_k} \Big).
\end{align}
The Gauss-Newton algorithm approximates the Hessian by ignoring the second term in the above equation, as in many cases, the first term dominates the Hessian. This leads to the following approximation of the Hessian:
\begin{align}
    H \approx G \triangleq J^T J
\end{align}
This Gauss-Newton approximation of the Hessian is always a PSD matrix. The update from Newton's method with the approximated Hessian gives rise to the Gauss-Newton update:
\begin{align}
    \ve{\beta}^{(m+1)} &\leftarrow \ve{\beta}^{(m)} - G^{-1} J^T \ve{e}_L^{(m)} = \ve{\beta}^{(m)} - J^{\dagger} \ve{e}_L^{(m)} 
\end{align}
\citet{schraudolph2002fast} showed that a similar approximation of $H$ for any loss function $\Lagr$ leads to 
\begin{align}
    H \approx G = J^TH_LJ
\end{align}
with $H_L$ the Hessian of $\Lagr$ w.r.t. the model output. If $\Lagr$ is convex, $H_L$ will be PSD and so will $G$. The Generalized Gauss-Newton (GGN) iteration is then given by:
\begin{align}
    \ve{\beta}^{(m+1)} &\leftarrow \ve{\beta}^{(m)} - G^{-1} J^T \ve{e}_L^{(m)} = \ve{\beta}^{(m)} - \big(J^TH_LJ\big)^{-1}J^{T} \ve{e}_L^{(m)}.
\end{align}
When Thikonov damping is added to $G$, this results in 
\begin{align}
    \ve{\beta}^{(m+1)} &\leftarrow \ve{\beta}^{(m)} - \big(J^TH_LJ + \lambda I\big)^{-1}J^{T} \ve{e}_L^{(m)}.
\end{align}
\subsection{Propagating generalized Gauss-Newton targets}
Following the derivation of GGN in section \ref{section:GGN}, GGN targets can be defined by 
\begin{align}
    \hat{\ve{h}}_i^{GGN} \triangleq \ve{h}_i -  \big(J_{\bar{f}_{i,L}}^TH_LJ_{\bar{f}_{i,L}}+ \lambda I\big)^{-1}J_{\bar{f}_{i,L}}^{T} \ve{e}_L,
\end{align}
with $H_L$ the Hessian of $\Lagr$ w.r.t. $\ve{h}_L$, $J_{\bar{f}_{i, L}}= \partial f_L(..(f_{i+1}(\ve{h}_i))) / \partial \ve{h}_{i}$ and $\ve{e}_L = (\partial \Lagr/\partial \ve{h}_L)^T $. Hence, in order to propagate GGN targets, the following condition should hold for all minibatch samples $b$:
\begin{align}
    J_{\bar{g}_{L,i}}^{(b)} = \big(J_{\bar{f}_{i,L}}^{(b)T}H_L^{(b)}J_{\bar{f}_{i,L}}^{(b)}+ \lambda I\big)^{-1}J_{\bar{f}_{i,L}}^{(b)T}.
\end{align}
From Theorem \ref{theorem:DRL_GN}, we know that the DRL trains the feedback path $\bar{g}_{L,i}$ such that 
\begin{align}
    J_{\bar{g}_{L,i}}^{(b)} \approx \big(J_{\bar{f}_{i,L}}^{(b)T}J_{\bar{f}_{i,L}}^{(b)}+ \lambda I\big)^{-1}J_{\bar{f}_{i,L}}^{(b)T}.
\end{align}
With a small adjustment to the forward mapping $\bar{f}_{i,L}$, we can make sure that $H_L$ is also incorporated in $J_{\bar{g}_{L,i}}^{(b)}$. Consider the singular value decomposition (SVD) of $H_L$:
\begin{align}
    H_L = U_L\Sigma_L V_L^T
\end{align}
As $H_L$ is symmetric and PSD, $V_L = U_L$ and $H_L$ can be written as $H_L = K_LK_L^T$, with $K_L \triangleq U \Sigma^{0.5}$. Now let us define $\bar{f}_{i,L}^K$ as
\begin{align}
    \bar{f}_{i,L}^K(\ve{h}_i) \triangleq K_L^T \bar{f}_{i,L}(\ve{h}_i).
\end{align}
When we use $\bar{f}_{i,L}^K$ in the DRL, this results in: 
\begin{align}
    \Lagr_i^{\text{rec,diff,K}} = \frac{1}{\sigma^2} \sum_{b=1}^B &\E_{\ve{\epsilon}_1 \sim \mathcal{N}(0,1)} \Big[ \|\bar{g}_{L,i}^{\text{diff}}\big(\bar{f}_{i,L}^K(\ve{h}^{(b)}_i + \sigma \ve{\epsilon}_1), \ve{h}^{(b)}_L, \ve{h}^{(b)}_i \big) - (\ve{h}^{(b)}_i +  \sigma \ve{\epsilon}_1)\|^2_2\Big] \nonumber\\
    + &\E_{\ve{\epsilon}_2 \sim \mathcal{N}(0,1)}\Big[\lambda\|\bar{g}_{L,i}^{\text{diff}}(\ve{h}^{(b)}_L + \sigma\ve{\epsilon}_2, \ve{h}_L^{(b)}, \ve{h}_i^{(b)}) - \ve{h}_i^{(b)} \|_2^2 \Big],
\end{align}
Via a similar derivation as Theorem \ref{theorem:DRL_GN}, it can be shown that the absolute minimum of $\lim_{\sigma \rightarrow 0} \Lagr_i^{\text{rec,diff,K}}$ is attained when for each sample it holds that 
\begin{align}
    J_{\bar{g}_{L,i}}^{(b)} &= \big(J_{\bar{f}_{i,L}}^{(b)T}K_L^{(b)}K_L^{(b)T}J_{\bar{f}_{i,L}}^{(b)}+ \lambda I\big)^{-1}J_{\bar{f}_{i,L}}^{(b)T}K_L^{(b)} \\
    & = \big(J_{\bar{f}_{i,L}}^{(b)T}H_L^{(b)}J_{\bar{f}_{i,L}}^{(b)}+ \lambda I\big)^{-1}J_{\bar{f}_{i,L}}^{(b)T}K_L^{(b)} \label{eq:GGN_jac_g}
\end{align}
If now the output target is defined as $\hat{\ve{h}}_L \triangleq  \ve{h}_L - \hat{\eta}K_L^{-1}\ve{e}_L$, a first-order Taylor expansion of $\hat{\ve{h}}_i$ results in:
\begin{align}
    \hat{\ve{h}}_i \approx \ve{h}_i - \hat{\eta}J_{\bar{g}_{L,i}}K_L^{-1}\ve{e}_L.
\end{align}
We assume that $H_L$ is PD and hence $K_L$ is invertible. If the absolute minimum of $\Lagr_i^{\text{rec,diff,K}}$ is attained and hence equation \eqref{eq:GGN_jac_g} holds, the network propagates approximate GGN targets:
\begin{align}
    \hat{\ve{h}}_i^{(b)} \approx \ve{h}_i^{(b)} - \hat{\eta}\big(J_{\bar{f}_{i,L}}^{(b)T}H_L^{(b)}J_{\bar{f}_{i,L}}^{(b)}+ \lambda I\big)^{-1}J_{\bar{f}_{i,L}}^{(b)T}\ve{e}_L^{(b)}.
\end{align}
Note that $K^{-1}=\Sigma^{-0.5}U^T$ in which $\Sigma$ is a diagonal matrix, hence $K^{-1}$ is straight-forward attained from the SVD of $H_L$. When $\Lagr$ is a sum of element-wise losses on each output neuron such as the cross-entropy loss and (weighted) $L_2$ loss, $H_L$ is a diagonal matrix and hence $K_L = H_L^{0.5}$, which removes the need for an SVD. When the cross-entropy loss is combined with the softmax output, this is not the case anymore and an SVD is needed.

In this section, we briefly summarized how the TP framework can be theoretically extended to propagate GGN targets. Future research can explore whether this GGN extension improves the performance of the TP variants and whether this GGN extension can be implemented or approximated in a more biologically plausible manner.

\end{document}